\def\eqref#1{equation~\ref{#1}}
\def\Eqref#1{Equation~\ref{#1}}
\def\1{\bm{1}}
\DeclareMathAlphabet{\mathsfit}{\encodingdefault}{\sfdefault}{m}{sl}
\SetMathAlphabet{\mathsfit}{bold}{\encodingdefault}{\sfdefault}{bx}{n}
\definecolor{myred}{HTML}{FF0000}
\definecolor{myblack}{HTML}{000000}
\newcommand{\greentext}[1]{\textcolor{ForestGreen}{#1}}
\newcommand\@erelb@r[1]{%
  \mathrel{\tikz[baseline=-.5ex]\draw[#1] (0,0)--(0.3,0);}
}
\newcommand{\erelbar}[1]{\@erelbar#1}
\def\@erelbar#1#2{%
  \ifcase\numexpr#1*4+#2\relax
    \@erelb@r{<-}\or    
    \@erelb@r{<-|}\or   
  \else
    \@wrong
  \fi
}
\newcommand*\circledblue[1]{\tikz[baseline=(char.base)]{
            \node[shape=circle,draw=blue!60,fill=blue!10,thick,inner sep=1pt] (char) {\scriptsize\textsf#1};}}
\newcommand*\circledgreen[1]{\tikz[baseline=(char.base)]{
            \node[shape=circle,draw=ForestGreen!60,fill=ForestGreen!10,thick,inner sep=1pt] (char) {\scriptsize\textsf#1};}}
\newcommand*\circledgray[1]{\tikz[baseline=(char.base)]{
            \node[shape=circle,draw=gray!60,fill=white!10,thick,inner sep=1pt] (char) {\scriptsize\textsf#1};}}
\newcommand{\method}{\mbox{IGC-Net}\xspace}
\newcommand{\methodlong}{iterative G-computation network\xspace}
\newcommand{\Methodlong}{Iterative G-computation network\xspace}
\newcommand*\diff{\mathop{}\!\mathrm{d}}
\newcommand{\mathup}[1]{\text{\textup{#1}}}
\DeclarePairedDelimiterX{\infdivx}[2]{[}{]}{%
  #1\;\delimsize\|\;#2%
}
\newcommand{\rebuttal}[1]{\textcolor{black}{#1}}
\newcommand{\cmark}{\textcolor{ForestGreen}{\ding{51}}}
\newcommand{\xmark}{\textcolor{BrickRed}{\ding{55}}}
\newtheorem{proposition}{Proposition}
\title{\method for conditional average\\ potential outcome estimation over time}
\author{
\textbf{Konstantin Hess}\textsuperscript{1,2,*},
\textbf{Dennis Frauen}\textsuperscript{1,2},
\textbf{Valentyn Melnychuk}\textsuperscript{1,2},
\textbf{Stefan Feuerriegel}\textsuperscript{1,2}\\[0.8em]
\textsuperscript{1}LMU Munich \quad
\textsuperscript{2}Munich Center for Machine Learning \quad
\\
\textsuperscript{*}{Corresponding author: \texttt{k.hess@lmu.de}}
}
\begin{document}

\maketitle

\begin{abstract}
Estimating potential outcomes for treatments \textit{over time} based on observational data is important for personalized decision-making in medicine. However, many existing methods for this task fail to properly adjust for time-varying confounding and thus yield biased estimates. There are only a few neural methods with proper adjustments, but these have inherent limitations (e.g., division by propensity scores that are often close to zero), which result in poor performance. As a remedy, we introduce the \emph{\methodlong}~(\method). Our \method is a novel, neural end-to-end model which adjusts for time-varying confounding in order to estimate conditional average potential outcomes (CAPOs) over time. Specifically, our \method is the first neural model to perform fully regression-based iterative G-computation for CAPOs in the time-varying setting. We evaluate the effectiveness of our \method across various experiments. In sum, this work represents a significant step towards personalized decision-making from electronic health records.
\end{abstract}

\section{Introduction}\label{sec:intro}
Causal machine learning has large potential to personalize treatment decisions in medicine \citep{Feuerriegel.2024}. An important task for this is to estimate conditional average potential outcomes (CAPOs) from observational data \textit{over time}. Recently, such data has become prominent in medicine due to the growing prevalence of electronic health records (EHRs) \citep{Allam.2021,Bica.2021b} and wearable devices \citep{Battalio.2021,Murray.2016}. However, estimating CAPOs over time is notoriously difficult due to \emph{time-varying confounding}: for several-step-ahead predictions, future time-varying confounders are unobserved as they depend on both future treatments and outcomes that have not yet occurred, which forces inference to rely only on past information and model-based forecasts.

One stream of methods (i.e., \textbf{CRN} \citep{Bica.2020c}, \textbf{CT} \citep{Melnychuk.2022}, and \textbf{TE-CDE} \citep{Seedat.2022}) fails to perform proper adjustments for time-varying confounding and, thus, do \underline{not} target the correct estimand. Hence, methods from this stream have infinite-sample bias: i.e., irreducible estimation errors irrespective of the amount of available data, which renders these methods unsuitable for medical applications.

To the best of our knowledge, there are only two neural methods that perform proper adjustments for time-varying confounding. However, these have important \underline{limitations} (see Table~\ref{tab:table_method_overview}). On the one hand, \textbf{RMSNs} \citep{Lim.2018} perform inverse propensity weighting, which, in the time-varying setting, relies on products of inverse propensity scores and, hence, division by values close to zero. On the other hand, \textbf{G-Net} \citep{Li.2021} \rebuttal{and \textbf{G-transformer} \citep{Xiong.2024}} perform G-computation, yet by \emph{estimating the distribution of all time-varying confounders at all time-steps the future}, which is inefficient due to two reasons: it needs to estimate \underline{all} moments of a high-dimensional random variable, and it requires indirect inference via Monte Carlo sampling.

To fill the above research gap, we propose a novel, neural method for estimating CAPOs over time, which we call \textbf{\emph{\methodlong}~(\method)}.\footnote{{Code is available at \url{https://github.com/konstantinhess/IGC_net}.}} 
Our method allows for proper adjustments for time-varying confounding by leveraging the idea of G-computation, but where we develop a novel, regression-based iterative approach to integrate G-computation into neural networks through an end-to-end training algorithm. As a result, our \method overcomes the limitations of existing methods. Unlike RMSNs, we avoid inverse propensity scores, which makes our method robust, especially for longer time horizons. 
Unlike G-Net, we avoid estimating any probability distribution (i.e., any higher-order moments), but rather estimate CAPOs directly through our iterative regression algorithm in an end-to-end manner. We demonstrate the effectiveness of our \method across various experiments. Our \method based on transformers achieves state-of-the-art performance.

\section{Related Work}\label{sec:related_work}

\begin{table}
\vspace{-0.3cm}
    \centering
    \resizebox{\textwidth}{!}{   
        \begin{tabular}{ll l}
        \toprule
        \textbf{Category} & \textbf{Method(s)} & \textbf{Issue} \\
        \midrule
        \multirow{3}{*}{\circledgray{1} \textbf{w/o} proper adjustments} & \textbf{CRN} \citep{Bica.2020c},  & \textbf{\xmark} No proper adjustment and thus infinite-data bias (i.e., irreducible estimation errors \\
        & \textbf{CT} \citep{Melnychuk.2022},  & \;\; irrespective of the dataset size) \\
        & \textbf{TE-CDE} \citep{Seedat.2022} \\
        \midrule
        \multirow{4}{*}{\circledgray{2} \textbf{w/} proper adjustments}
            & \textbf{RMSNs} \citep{Lim.2018} & \textbf{\xmark} Product of inverse propensity scores; division close to zero  \\
            \cmidrule(lr){2-3}
            & \textbf{G-Net} \citep{Li.2021}  & \textbf{\xmark} Estimation of the entire distribution (i.e., all higher-order moments) of confounders\\
            & \rebuttal{\textbf{G-transformer} \citep{Xiong.2024}}  &  \;\;   via MC sampling \\
            \cmidrule(lr){2-3}
            & \cellcolor{yellow!20} \textbf{\method} (ours) & \cellcolor{yellow!20} \textbf{\cmark} Neural end-to-end iterative regression algorithm  \\
        \bottomrule
        \end{tabular}
    }
    \vspace{-0.3cm}
    \caption{\textbf{Overview of key neural methods for estimating CAPOs over time.} Existing methods that perform \emph{proper adjustments} for time-varying confounding have important limitations: RMSNs rely on products of inverse propensity scores and \emph{unstable division by values close to zero}, and G-Net estimates the \emph{entire distribution of all time-varying confounders in the future via MC sampling}.}    \label{tab:table_method_overview}
    \vspace{-0.6cm}
\end{table}

\textbf{APOs over time:}~Estimating average potential outcomes (APOs) over time has a long-ranging history in classical statistics and epidemiology \citep{Lok.2008, Robins.1986, Rytgaard.2022, vanderLaan.2012}. Popular approaches are the G-methods \citep{Robins.2009}, including marginal structural models (MSMs) \citep{ Robins.2009,Robins.2000}, structural nested models \citep{Robins.1994, Robins.2009}, G-computation \citep{Bang.2005,  Robins.1999, Robins.2009}, and TMLE \citep{vanderLaan.2012}. Some of these have been instantiated by neural models \citep{Frauen.2023b,Shirakawa.2024}. However, these works do \textbf{not} focus estimating CAPOs. Instead, they \textbf{ignore} individual patient characteristics.

\textbf{CAPOs over time (Table~\ref{tab:table_method_overview}):} Existing neural methods have \textbf{\emph{important limitations}}:

\underline{Limitation \circledgray{1}~proper adjustments:} A number of neural methods for estimating CAPOs have been proposed that \emph{do not properly adjust} for time-varying confounders. As a result, these methods are \emph{biased} as they do not target the correct estimand. Here, key examples are the counterfactual recurrent network (\textbf{CRN}) \citep{Bica.2020c}, the treatment effect neural controlled differential equation (\textbf{TE-CDE}) \citep{Seedat.2022}, and the causal transformer (\textbf{CT}) \citep{Melnychuk.2022}. These methods try to account for time-varying confounders through balanced representations. However, balancing was originally designed for reducing finite-sample estimation variance and \emph{not} for mitigating confounding bias \citep{Johansson.2016, Shalit.2017}. Hence, this is a heuristic and may even introduce representation-induced confounding bias \citep{Melnychuk.2024}. Unlike these methods, our \method performs \emph{proper adjustments for time-varying confounders}.

\underline{Limitation \circledgray{2}~adjustment strategy:}~Existing neural methods with proper causal adjustments employ adjustment strategies that are in other ways problematic in empirical applications. 
On the one hand, the recurrent marginal structural networks (\textbf{RMSNs}) \citep{Lim.2018} construct pseudo outcomes through inverse propensity weighting (IPW). 
However, IPW constructs pseudo-outcomes with large variance compared to G-computation ($\rightarrow$ we show this later in Proposition~\ref{prop:ipw}). Specifically, for several-step-ahead predictions, IPW relies on products of inverse propensity scores and, hence, \textit{division by values close to zero}.
In contrast, the \textbf{G-Net} \citep{Li.2021} \rebuttal{and \textbf{G-transformer} \citep{Xiong.2024}} use G-computation to adjust for time-varying confounding (see Supplement~\ref{appendix:g-comp}), but it proceeds by estimating the \emph{entire distribution of all confounders at several time-steps in the future} (i.e., \underline{all} moments of a high-dimensional random variable), which leads to poor empirical performance (see Section~\ref{sec:differences} for a detailed discussion).  
Different from G-Net, we propose a regression-based approach to G-computation. Hence, our \method has two advantages in that \textbf{(i)} we do \textbf{\textit{not}} attempt to learn the full distribution of all future time-varying confounders (i.e., all higher-order moments) but only estimate the first moments of a much lower-dimensional random variable, and \textbf{(ii)} we do \textit{\textbf{not}} need Monte Carlo sampling but can perform end-to-end regressions.

\section{Problem Formulation}\label{sec:setup}

\textbf{Setup:}
We follow previous literature \citep{Bica.2020c,Li.2021,   Melnychuk.2022} and consider data that consist of realizations of the following random variables: (i)~outcomes $Y_t\in\mathbb{R}^{d_y}$, (ii)~covariates $X_t\in\mathbb{R}^{d_x}$, and (iii)~treatments $A_t\in\{0,1\}^{d_a}$ at time steps ${t\in\{0,\ldots,T\}\subset \mathbb{N}_0}$, where $T$ is the time window that follows some unknown counting process. We are interested in estimating CAPOs for $\tau$ steps in the future. For any random variable $U_t\in\{Y_t,X_t,A_t\}$, we write ${U_{t:t+\tau}=(U_t,\ldots,U_{t+\tau})}$ to refer to a specific subsequence of a random variable. We further write $\bar{U}_t=U_{0:t}$ to denote the full trajectory of $U$ including time $t$. Finally, we write ${\bar{H}_{t+\delta}^t=(\Bar{Y}_{t+\delta}, \Bar{X}_{t+\delta},\Bar{A}_{t-1})}$ for $\delta\geq 0$, and we let ${\bar{H}_t=\bar{H}_t^t}$ denote the collective history of (i)--(iii).

\textbf{Estimation task:}
We are interested in estimating the \emph{conditional} average potential outcome (CAPO) for a future, interventional sequence of treatments, given the observed history. For this, we build upon the potential outcomes framework \citep{Neyman.1923, Rubin.1978} for the time-varying setting \citep{Robins.2009,Robins.2000}. Hence, we aim to estimate the potential outcome ${Y_{t+\tau}[a_{t:t+\tau-1}]}$ at future time $t+\tau$, $\tau \in \mathbb{N}$, for an interventional sequence of treatments $\bar{a}=a_{t:t+\tau-1}$, \emph{conditionally} on the observed history $\bar{H}_t=\bar{h}_t$. That is, our objective is to estimate
{\small
\begin{align}\label{eq:capo}
    \mathbb{E}\left[Y_{t+\tau}[a_{t:t+\tau-1}] \mid \bar{H}_t=\bar{h}_t\right].
\end{align}
}

\textbf{Identifiability:}
In order to estimate the causal quantity in \Eqref{eq:capo} from observational data, we make the following identifiability assumptions \citep{ Robins.2009, Robins.2000} that are \emph{standard in the literature} \citep{Bica.2020c,Li.2021, Lim.2018, Melnychuk.2022, Seedat.2022}: (1)~\emph{Consistency:} For an observed sequence of treatments $\bar{A}_t=\bar{a}_t$, the observed outcome $Y_{t+1}$ equals the corresponding potential outcome $Y_{t+1}[\bar{a}_{t}]$. (2)~\emph{Positivity:} For any history $\bar{H}_t=\bar{h}_t$ that has non-zero probability $\mathbb{P}(\bar{H}_t=\bar{h}_t)>0$, there is a positive probability $\mathbb{P}(A_{t}=a_{t}\mid \bar{H}_t=\bar{h}_t)>0$ of receiving any treatment $A_{t}=a_{t}$, where $a_t\in \{0,1\}^{d_a}$. (3)~\emph{Sequential ignorability:} Given a history $\bar{H}_t=\bar{h}_t$, the treatment $A_{t}$ is independent of the potential outcome $Y_{t+\delta}[a_{t:t+\delta-1}]$, that is, ${A_{t}\perp Y_{t+\delta}[a_{t:t+\delta-1}]\mid \bar{H}_t=\bar{h}_t}$ for all $a_{t:t+\delta-1}\in \{0,1\}^{\delta\times d_a}$.

\textbf{Why dealing with confounding is non-trivial in time-varying settings:~}
Estimating CAPOs without confounding bias poses a non-trivial challenge in the time-varying setting. The issue lies in the complexity of handling future time-varying confounders. In particular, for $\tau\geq 2$ and ${1\leq \delta \leq \delta' \leq \tau-1 }$, future covariates $X_{t+\delta}$ and outcomes $Y_{t+\delta}$ may affect the probability of receiving certain treatments $A_{t+\delta'}$.
Importantly, the time-varying confounders are \emph{unobserved} during inference time, which is generally known as \emph{runtime confounding} \citep{Coston.2020}. Therefore, in order to estimate the direct effect of an interventional treatment sequence, one needs to adjust for the time-varying confounders. That is, it is in general \textbf{insufficient} to only adjust for the history \citep{Frauen.2025} via
{\small
\begin{align}
     \mathbb{E}\left[Y_{t+\tau}[a_{t:t+\tau-1}] \mid \bar{H}_t=\bar{h}_t\right] \neq \mathbb{E}\left[Y_{t+\tau} \mid \bar{H}_t=\bar{h}_t, A_{t:t+\tau-1}=a_{t:t+\tau-1}\right].
\end{align}
}

One way to adjust for time-varying confounders is inverse propensity weighting (IPW), which is leveraged by RMSNs \citep{Lim.2018}. However, as we show in \textbf{Proposition~\ref{prop:ipw}}, IPW is subject to large variance.

\textbf{G-computation:}~Instead, we leverage G-computation \citep{Bang.2005, Robins.1999, Robins.2009}, which provides a rigorous way to account for the time-varying confounders. Formally, G-computation identifies the causal quantity in \Eqref{eq:capo} via
{\small
\begin{align}\label{eq:g_comp}
        &\mathbb{E}[Y_{t+\tau}[a_{t:t+\tau-1}] \mid \bar{H}_t=\bar{h}_t]\nonumber \\
        =& \mathbb{E}\bigg\{ \mathbb{E}\bigg[ \ldots \mathbb{E}\big\{ \mathbb{E}[ 
        Y_{t+\tau} 
        \mid \Bar{H}_{t+\tau-1}^{t},A_{t:t+\tau-1}=a_{t:t+\tau-1}]
        \mid \Bar{H}_{t+\tau-2}^{t},A_{t:t+\tau-2}=a_{t:t+\tau-2}\big\}\\
        & \quad \quad \quad\ldots
        \bigl\lvert \Bar{H}_{t+1}^{t},A_{t:t+1}=a_{t:t+1}\bigg] 
        \bigl\lvert  \Bar{H}_t=\bar{h}_t ,A_{t}=a_{t}\bigg\}.\nonumber
\end{align}
}
We provide derivation of the G-computation formula for CAPOs in \textbf{Supplement~\ref{appendix:g-comp}}. Due to the nested structure of G-computation, estimating \Eqref{eq:g_comp} from data is challenging. 

\emph{Why the approach by G-Net is problematic:~}
So far, only G-Net \citep{Li.2021} \rebuttal{and G-transformer \citep{Xiong.2024}} have used G-computation for estimating CAPOs in a neural model. For this, they estimate \Eqref{eq:g_comp} through

{\small
\begin{align}\label{eq:g_net}
    \int  y_{t+\tau}p(y_{t+\tau} \mid \bar{h}_{t+\tau-1}^t, a_{t:t+\tau-1})
     \prod_{\delta=1}^{\tau-1} \diff p(x_{t+\delta}, y_{t+\delta} \mid \bar{h}_t, {x}_{t+1:t+\delta-1},{y}_{t+1:t+\delta-1},{a}_{t:t+\delta-1}).
\end{align}
}

However, \Eqref{eq:g_net} requires estimating the entire \emph{distribution of all time-varying confounders at several time steps in the future}. This has two drawbacks: (i)~the distribution must be approximated (e.g., through Monte Carlo sampling), which is inefficient; and (ii)~\emph{all moments} of a $(\tau-1)\times (d_x+d_y)$-dimensional random variable need to be estimated. Importantly, our \method addresses both (i) and (ii). We provide a detailed comparison to our \method in \textbf{Section~\ref{sec:differences}}.

\emph{Our approach to G-computation}: We propose a novel way to address the above challenges, and integrate G-computation into neural networks to offer better empirical performance. In contrast to G-Net \rebuttal{and G-transformer}, our \method does \textbf{not} rely on high-dimensional distribution estimation through Monte Carlo sampling. Further, our \method does \textbf{not} require estimating any probability distribution. Instead, it performs \emph{regression-based iterative G-computation} in an end-to-end training algorithm. Thereby, we perform \emph{proper adjustments for time-varying confounding} through \Eqref{eq:g_comp}, while relying only on regressions of with \emph{low-variance pseudo-outcomes}.

\section{\Methodlong}
In the following, we present our \methodlong. Inspired by \citep{Bang.2005,Robins.1999, Robins.2009} for APOs, we reframe G-computation for CAPOs over time through recursive conditional expectations. Thereby, we precisely formulate the training objective of our \method through {iterative regressions} ($\rightarrow$Proposition~\ref{prop:estimand}). 
We proceed below by first extending regression-based iterative G-computation to account for the heterogeneous response to a treatment intervention. We then detail the architecture of our \method and provide details on the end-to-end training and inference, which guarantees that we target the correct estimand and adjust for time-varying confounding ($\rightarrow$Proposition~\ref{prop:estimator}).

\subsection{Regression-based iterative G-computation for CAPOs}
Our \method leverages G-computation as in \Eqref{eq:g_comp} and, therefore, properly adjusts for time-varying confounders in \Eqref{eq:capo}. However, we do \textbf{not} attempt to integrate over the estimated distribution of all time-varying confounders. Instead, one of our main novelties is that our \method performs {iterative regressions} in a \emph{neural end-to-end architecture}. This allows us to estimate \Eqref{eq:capo} \emph{without approximating high-dimensional probability distributions.}

We reframe \Eqref{eq:g_comp} equivalently as a recursion of conditional expectations. Thereby, we can formulate the iterative regression objective of our \method. In particular, our approach resembles an \emph{iterative pseudo-outcome regression}. For this, let
{\small
\begin{align}\label{eq:g_recurse}
    g_{t+\delta}^{\bar{a}}(\bar{h}_{t+\delta}^t) = \mathbb{E}[ G_{t+\delta+1}^{\bar{a}}
    \mid \bar{H}_{t+\delta}^t= \bar{h}_{t+\delta}^t, A_{t:t+\delta}=a_{t:t+\delta}],
\end{align}
}
where the \emph{pseudo-outcomes} are defined as
{\small
\begin{align}\label{eq:g_first}
G_{t+\tau}^{\bar{a}} = Y_{t+\tau} ,
\end{align}
\begin{align}\label{eq:g_random}
G_{t+\delta}^{\bar{a}}=g_{t+\delta}^{\bar{a}}(\bar{H}_{t+\delta}^t)
\end{align}
}
for $\delta=0,\ldots, \tau-1$. By reformulating the G-computation formula through recursions, the nested expectations in \Eqref{eq:g_comp} are now given by
{\small
\begin{align}
    &G_{t+\tau-1}^{\bar{a}}=\mathbb{E}[Y_{t+\tau} \mid \bar{H}_{t+\tau-1}^t, A_{t:t+\tau-1}=a_{t:t+\tau-1}],\\
    &G_{t+\tau-2}^{\bar{a}}=\mathbb{E}\Big[\mathbb{E}[Y_{t+\tau} \mid \bar{H}_{t+\tau-1}^t, A_{t:t+\tau-1}=a_{t:t+\tau-1}] 
    \mid \bar{H}_{t+\tau-2}^t, A_{t:t+\tau-2}=a_{t:t+\tau-2}\Big],
\end{align}
}
and so on. Hence, the G-computation formula in \Eqref{eq:g_comp} can be rewritten as
{\small
\begin{align}\label{eq:g_last}
     g_{t}^{\bar{a}}(\bar{h}_{t})=\mathbb{E}[Y_{t+\tau}[a_{t:t+\tau-1}] \mid \bar{H}_t=\bar{h}_t].
\end{align}
}

To further illustrate our regression-based iterative G-computation, we provide \textbf{two examples} in \textbf{Supplement~\ref{appendix:examples}}, where we show step-by-step how our approach adjusts for time-varying confounding.

We show in the following proposition that iterative pseudo-outcome regression recovers the CAPO and thus performs proper adjustments for time-varying confounding. 
We summarize the iterative pseudo-outcome regression in the following proposition.

\begin{proposition}\label{prop:estimand}
The regression-based iterative G-computation yields the CAPO in \Eqref{eq:capo}.
\end{proposition}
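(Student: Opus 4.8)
My plan is to separate the claim into two facts and chain them. First, \Eqref{eq:g_comp} is the G-computation identity, which I take as given: as derived in Supplement~\ref{appendix:g-comp}, the nested conditional expectation on its right-hand side equals the CAPO in \Eqref{eq:capo}. It therefore remains only to show that unrolling the recursion \Eqref{eq:g_recurse}--\Eqref{eq:g_random} reproduces exactly that nested expectation, so that the terminal quantity $g_t^{\bar{a}}(\bar{h}_t)$ of \Eqref{eq:g_last} coincides with the right-hand side of \Eqref{eq:g_comp}. In other words, the content of the proposition is a structural matching between the recursively defined pseudo-outcomes and the layered expectation, not a fresh identification argument.

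The core of the proof is a backward induction on $\delta$, running from $\delta=\tau-1$ down to $\delta=0$. The inductive claim I would carry is that $g_{t+\delta}^{\bar{a}}(\bar{h}_{t+\delta}^t)$ equals the $(\tau-\delta)$-fold nested conditional expectation whose innermost layer is $\mathbb{E}[Y_{t+\tau}\mid \bar{H}_{t+\tau-1}^t, A_{t:t+\tau-1}=a_{t:t+\tau-1}]$ and whose outermost conditioning is on $\bar{H}_{t+\delta}^t=\bar{h}_{t+\delta}^t$ and $A_{t:t+\delta}=a_{t:t+\delta}$. For the base case $\delta=\tau-1$, combining \Eqref{eq:g_first} with \Eqref{eq:g_recurse} gives directly $g_{t+\tau-1}^{\bar{a}}(\bar{h}_{t+\tau-1}^t)=\mathbb{E}[Y_{t+\tau}\mid \bar{H}_{t+\tau-1}^t=\bar{h}_{t+\tau-1}^t, A_{t:t+\tau-1}=a_{t:t+\tau-1}]$, the innermost expectation of \Eqref{eq:g_comp}.

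For the inductive step, substituting \Eqref{eq:g_random} into \Eqref{eq:g_recurse} yields $g_{t+\delta}^{\bar{a}}(\bar{h}_{t+\delta}^t)=\mathbb{E}[\,g_{t+\delta+1}^{\bar{a}}(\bar{H}_{t+\delta+1}^t)\mid \bar{H}_{t+\delta}^t=\bar{h}_{t+\delta}^t, A_{t:t+\delta}=a_{t:t+\delta}]$. Since $g_{t+\delta+1}^{\bar{a}}(\bar{H}_{t+\delta+1}^t)$ is a measurable function of the history $\bar{H}_{t+\delta+1}^t$, I plug in the induction hypothesis, which wraps one additional expectation around the $(\tau-\delta-1)$-fold nested term and produces the $(\tau-\delta)$-fold expectation with the correct conditioning. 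Taking $\delta=0$ recovers the full right-hand side of \Eqref{eq:g_comp}, and hence $g_t^{\bar{a}}(\bar{h}_t)$ equals the CAPO, establishing \Eqref{eq:g_last}.

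The main obstacle is bookkeeping rather than a deep analytic difficulty. At each level I must verify that the conditioning history $\bar{H}_{t+\delta}^t$ and the fixed treatment slice $a_{t:t+\delta}$ align precisely with the corresponding layer of \Eqref{eq:g_comp}, and that substituting the random pseudo-outcome $g_{t+\delta+1}^{\bar{a}}(\bar{H}_{t+\delta+1}^t)$ into the outer expectation is legitimate. The latter is justified because each $g_{t+\delta+1}^{\bar{a}}$ is, by \Eqref{eq:g_recurse}, a version of the inner conditional expectation, so the tower property applies along the increasing histories (the $\sigma$-algebra generated by $\bar{H}_{t+\delta}^t$ is contained in that generated by $\bar{H}_{t+\delta+1}^t$), and no gap opens between the observational regressions and the interventional quantity beyond what is already absorbed into the G-computation identity.
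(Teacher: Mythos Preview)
Your proposal is correct and matches the paper's proof essentially one-to-one: the paper also takes \Eqref{eq:g_comp} as given (via Supplement~\ref{appendix:g-comp}) and then collapses the nested expectations by repeatedly substituting the definitions in \Eqref{eq:g_first}--\Eqref{eq:g_random}, which is exactly your backward induction written out with ellipses rather than framed inductively. There is no substantive difference in strategy or content.
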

\begin{proof}
See Supplement~\ref{appendix:iterative_g_comp}.
\end{proof}
\vspace{-0.2cm}

In order to correctly estimate \Eqref{eq:g_last} for a given history $\bar{H}_t=\bar{h}_t$ and an interventional treatment sequence $a=a_{t:t+\tau-1}$, all subsequent pseudo-outcomes in \Eqref{eq:g_random} are required.
However, the ground-truth realizations of the pseudo-outcomes $G_{t+\delta}^{\bar{a}}$ are \emph{not available in the data}. Instead, only realizations of $G_{t+\tau}^{\bar{a}}=Y_{t+\tau}$ in \Eqref{eq:g_first} are observed during the training. Hence, when training our \method, it alternately generates predictions $\tilde{G}_{t+\delta}^{\bar{a}}$ of the pseudo-outcomes for $\delta=1,\ldots \tau-1$, which it then uses for learning the estimator of \Eqref{eq:g_recurse}. 

Therefore, the training of our \method completes two steps in an iterative scheme: First, it runs a \circledgreen{A}~\emph{generation step}, where it generates predictions of the pseudo-outcomes \Eqref{eq:g_random}. Then, it runs a \circledblue{B}~\emph{learning step}, where it regresses the predictions $\tilde{G}_{t+\delta}^{\bar{a}}$ for \Eqref{eq:g_random} and the observed $G_{t+\tau}^{\bar{a}}=Y_{t+\tau}$ in \Eqref{eq:g_first} on the history to update the estimator for \Eqref{eq:g_recurse}. Finally, the updated estimators are used again in the next \circledgreen{A}~\emph{generation step}. This procedure resembles an iterative pseudo-outcome regression. Thereby, our \method is designed to simultaneously \circledgreen{A}~\emph{generate} predictions and \circledblue{B}~\emph{learn} during the training. Importantly, we propose an implementation where both steps are performed in an \textit{end-to-end} architecture, ensuring that information is shared across time and data is used efficiently. 

\subsection{Model architecture}\label{sec:architecture}
\begin{wrapfigure}{r}{0.6\textwidth}
\vspace{-1.8cm}
    \centering
    \includegraphics[width=0.6\textwidth, trim=0.6cm 13.8cm 0.6cm 0.8cm, clip]{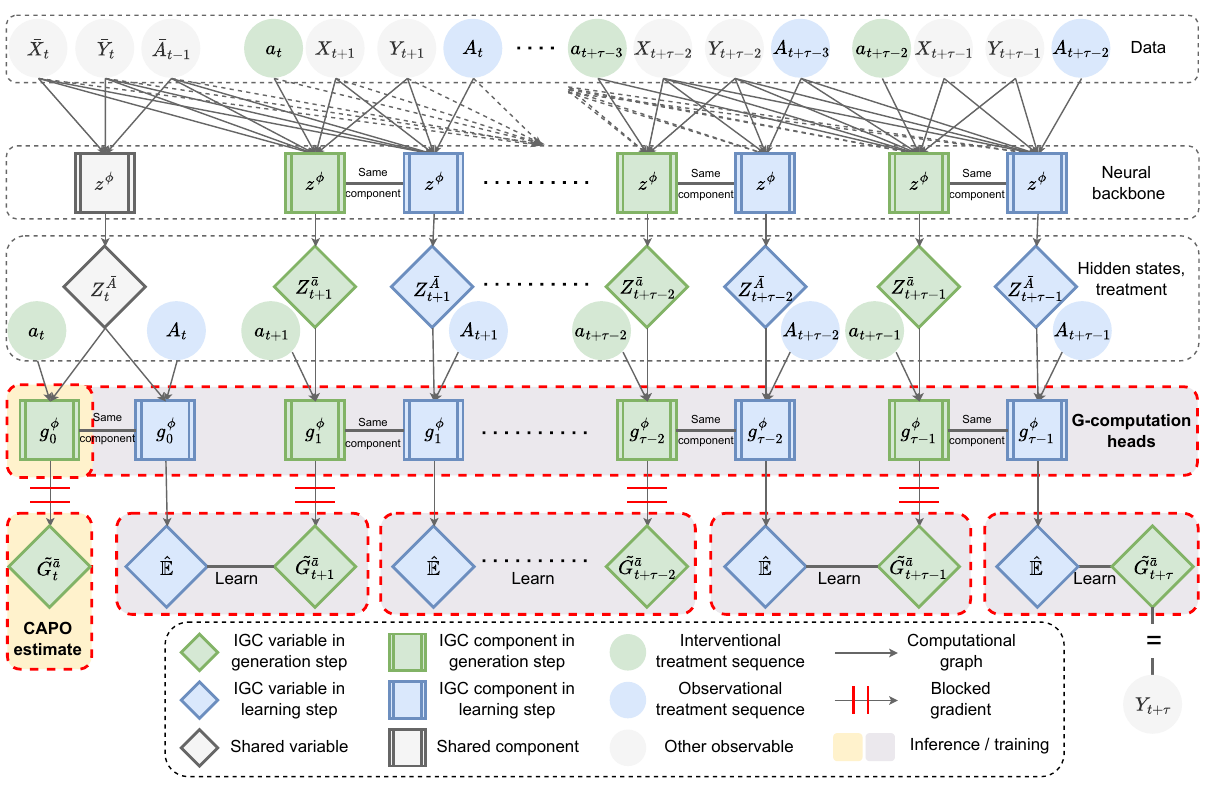} 
    \vspace{-0.5cm}
    \caption{\textbf{\Methodlong.} Neural end-to-end architecture of our \methodlong.}
    \label{fig:architecture}
\vspace{-0.5cm}
\end{wrapfigure}
We first introduce the architecture of our \method. Then, we explain the iterative prediction and learning scheme inside our \method, which presents one of the main novelties. Finally, we introduce the inference procedure.

Our \method consists of two key components (see \textbf{Figure~\ref{fig:architecture}}):  (i)~a \emph{neural backbone} $z^\phi(\cdot)$, which can be, for example, be an LSTM or a transformer, and (ii)~several \emph{G-computation heads} $\{g^\phi_{\delta}(\cdot)\}_{\delta=0}^{\tau-1}$, where $\phi$ denote the trainable weights. First, the neural backbone encodes the entire observed history. Then, the G-computation heads take the encoded history and perform the iterative regressions according to \Eqref{eq:g_recurse}. 
For all $t=1,\ldots,T-\tau$ and $\delta=0,\ldots,\tau-1$, the components are designed as follows:

$\bullet$~\textbf{Neural backbone:}~For our main experiments in Section~\ref{sec:experiments}, we use a multi-input transformer ${z^\phi(\cdot)}$ as our neural backbone, which consists of three connected encoder-only sub-transformers $z^{\phi k}(\cdot)$, $k\in\{1,2,3\}$ and is inspired by \citep{Melnychuk.2022}. We provide details on the architecture in \textbf{Supplement~\ref{appendix:transformer}} and provide additional \emph{ablations with an alternative LSTM backbone} in \textbf{Supplement~\ref{appendix:ablation}}. At time $t$, the transformer $z^\phi(\cdot)$ receives data $\bar{H}_{t}=(\bar{Y}_{t}, \bar{X}_{t}, \bar{A}_{t-1})$ as input and passes them to one corresponding sub-transformer. In particular, each sub-transformer $z^{\phi k}(\cdot)$ is responsible to focus on one particular $\bar{U}_t^k \in \{\bar{Y}_{t}, \bar{X}_{t}, \bar{A}_{t-1}\}$ in order to effectively process the different types of inputs. Further, we ensure that information is shared between the sub-transformers. The output of the {multi-input transformer} are hidden states $Z_{t}^{\bar{A}}$, which are then passed to the (ii)~{G-computation heads}.

$\bullet$~\textbf{G-computation heads:}~
The \emph{G-computation heads} $\{g^\phi_{\delta}(\cdot)\}_{\delta=0}^{\tau-1}$ are the read-out component of our \method. As input at time $t+\delta$, the G-computation heads receive the hidden state $Z_{t+\delta}^{\bar{A}}$ from the above neural backbone. Recall that we seek to perform the iterative regressions in \Eqref{eq:g_recurse} and \Eqref{eq:g_last}, respectively. For this, we require estimators of $\mathbb{E}[G_{t+\delta+1}^{\bar{a}} \mid \bar{H}_{t+\delta},\bar{A}_{t+\delta}]$. Hence, the G-computation heads compute
{\small
\begin{align}
    \hat{\mathbb{E}}[G_{t+\delta+1}^{\bar{a}} \mid \bar{H}_{t+\delta},{A}_{t+\delta}]
    = g^\phi_\delta(Z_{t+\delta}^{\bar{A}}, A_{t+\delta}), \quad\text{{\normalsize with}}\quad Z_{t+\delta}^{\bar{A}}=z^\phi(\bar{H}_{t+\delta})\label{eq:other_gcomp_head}
\end{align}
}
for $\delta=0,\ldots,\tau-1$. As a result, the {G-computation heads} and the {neural backbone} together form the estimators that are required for the regression-based iterative G-computation. In particular, we thereby ensure that, for $\delta=0$, the last G-computation head $g^\phi_0(\cdot)$ is trained as the estimator for the CAPO as given in \Eqref{eq:g_last}. That is, as illustrated in Fig.~\ref{fig:gcomputation}, for a fully trained neural backbone and G-computation heads, our \method estimates the CAPO via
{\small
\begin{align}
    \hat{\mathbb{E}}[Y_{t+\tau}[a_{t:t+\tau-1}] \mid \bar{H}_t=\bar{h}_t] = g^\phi_0( z^\phi(\bar{h}_t), a_t).\label{eq:last_gcomp_head}
\end{align}
}

\subsection{Iterative training and inference}

\begin{figure}[t]
\vspace{-0.3cm}
    \centering
    \includegraphics[width=\textwidth, trim=0.6cm 23.cm 0.6cm 2.5cm, clip]{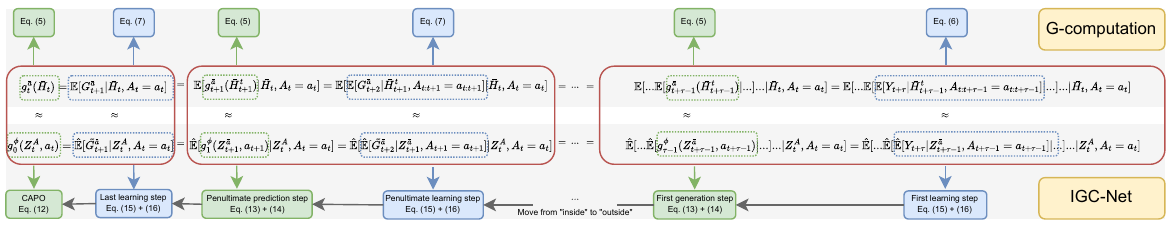} 
    \vspace{-0.9cm}
    \caption{\textbf{How our \method performs G-computation to adjust for time-varying confounding.}}
    \label{fig:gcomputation}
\vspace{-0.4cm}
\end{figure}

We now introduce the iterative training of our \method, which consists of a \circledgreen{A}~\emph{generation step} and a \circledblue{B}~\emph{learning step}. Then, we show how inference for a given history $\bar{H}_t=\bar{h}_t$ can be achieved. We summarize the iterative learning algorithm in \textbf{Algorithm~\ref{algo:train}}.

$\bullet$~\textbf{Iterative training:}~Our \method is designed to estimate the CAPO $g_t^{\bar{a}}(\bar{h}_t)$ in \Eqref{eq:g_last} for a given history $\bar{H}_t=\bar{h}_t$ and an interventional treatment sequence $a=a_{t:t+\tau-1}$ via \Eqref{eq:last_gcomp_head}. Therefore, the G-computation heads in \Eqref{eq:other_gcomp_head} require the pseudo-outcomes $\{{G}_{t+\delta}^{\bar{a}}\}_{\delta=1}^\tau$ from \Eqref{eq:g_random} during training. However, they are only available in the training data for $\delta=\tau$. That is, we only observe the factual outcomes $G_{t+\tau}^{\bar{a}}=Y_\tau$.

As a remedy, our \method first predicts the remaining pseudo-outcomes $\{{G}_{t+\delta}^{\bar{a}}\}_{\delta=1}^{\tau-1}$ in the \circledgreen{A}~\emph{generation step}. Then, it can use these generated pseudo-outcomes and the observed $G_{t+\tau}^{\bar{a}}$ for learning the network weights $\phi$ in the \circledblue{B}~\emph{learning step}. In the following, we write $\{\tilde{G}_{t+\delta}^{\bar{a}}\}_{\delta=1}^{\tau-1}$ for the generated pseudo-outcomes. 
Note that, since ${G}_{t+\tau}^{\bar{a}}=Y_{t+\tau}$ is observed during training, we do not have to generate this target. Yet, for notational convenience, we write $\tilde{G}_{t+\tau}^{\bar{a}}={G}_{t+\tau}^{\bar{a}}$.

\protect\circledgreen{A}~\underline{\emph{Generation step:}}~In this step, our \method generates $\tilde{G}_{t+\delta}^{\bar{a}} \approx {g}_{t+\delta}^{\bar{a}}(\bar{H}_{t+\delta}^t)$ as substitutes for \Eqref{eq:g_random}, which are the pseudo-outcomes in the iterative regression-based G-computation. Formally, our \method predicts these via
{\small
\begin{align}
    \tilde{G}_{t+\delta}^{\bar{a}}= g^\phi_{\delta}(Z_{t+\delta}^{\bar{a}}, a_{t+\delta}),
\end{align}
}
where
{\small
\begin{align}
    Z_{t+\delta}^{\bar{a}}=z^\phi(\bar{H}_{t+\delta}^t,a_{t:t+\delta-1}),
\end{align}
}
for $\delta=0,\ldots,\tau-1$. For this, all operations are \emph{detached} from the computational graph. Hence, our \method now has pseudo-outcomes $\{\tilde{G}_{t+\delta}^{\bar{a}}\}_{\delta=0}^\tau$, which it can use in the following \circledblue{B}~\emph{learning step}. Of note, these generated pseudo-outcomes will be noisy for early training epochs. However, as training progresses, the G-computation heads perform increasingly more accurate predictions, as we explain below.

\protect \circledblue{{B}}~\underline{\emph{Learning step:}}~This step is responsible for updating the weights $\phi$ of the neural backbone $z^\phi(\cdot)$ and the G-computation heads $\{g^\phi_\delta(\cdot)\}_{\delta=0}^{\tau-1}$. For this, our \method learns the estimator for \Eqref{eq:g_recurse} via
{\small
\begin{align}
    \hat{\mathbb{E}}[G_{t+\delta+1}^{\bar{a}} \mid  \bar{H}_{t+\delta}^t, A_{t:t+\delta}] 
    =  g^\phi_{\delta}(Z_{t+\delta}^{\bar{A}}, A_{t+\delta}),
\end{align}
}
where
{\small
\begin{align}
    Z_{t+\delta}^{\bar{A}} = z^\phi(\bar{H}_{t+\delta})
\end{align}
}
for $\delta=0,\ldots,\tau-1$. In particular, the estimator is optimized by backpropagating the squared error loss $\mathcal{L}$ for all $\delta=0,\ldots,\tau-1$ and $t=1,\ldots,T-\tau$ via
{\small
\begin{align}
\mathcal{L}=\frac{1}{T-\tau}\sum_{t=1}^{T-\tau}\left(\frac{1}{\tau} \sum_{\delta=0}^{\tau-1} \left(g^\phi_{\delta}(Z_{t+\delta}^{\bar{A}}, A_{t+\delta}) - \tilde{G}_{t+\delta+1}^{\bar{a}}\right)^2\right).
\end{align}
}
Then, after $\phi$ is updated, we can use the updated estimator in the next \circledgreen{A}~\emph{generation step}.

\begin{wrapfigure}{R}{0.42\textwidth}
\vspace{-0.6cm}
\begin{minipage}{0.42\textwidth}
{\tiny
\begin{algorithm}[H]
\DontPrintSemicolon
\caption{Training and inference}\label{algo:train}
\SetAlgoLined
\kwTraining
\BlankLine
\SetKwInOut{Input}{Input}
\SetKwInOut{Output}{Output}
\Input{~Data $(\bar{H}_{T-1}, A_{T-1}, Y_{T})$, treatment sequence $\bar{a}\in \{0,1\}^{d_a\times \tau}$, learning rate $\eta$} 
\Output{~Trained \method $\{z^\phi,g^\phi_\delta\}_{\delta=0}^{\tau-1}$}
\For{$t=1,\ldots,T-\tau$}{
\tcp{Initialize}
$a_{t:t+\tau-1} \erelbar{01} \bar{a}$\\
$\tilde{G}_{t+\tau}^{\bar{a}} \erelbar{01} Y_{t+\tau}$\\
\tcp{\circledgreen{\tiny{A}} Generation step}
\For{$\delta=1,\ldots,\tau-1$}{
    $Z_{t+\delta}^{\bar{a}}\erelbar{01} z^\phi(\bar{H}_{t+\delta}^t, a_{t:t+\delta-1})$\\
    $\tilde{G}_{t+\delta}^{\bar{a}} \erelbar{01} g^\phi_{\delta}(Z_{t+\delta}^{\bar{a}}, a_{t+\delta})$
}
\BlankLine
\tcp{\circledblue{\tiny{B}} Learning step}
\For{$\delta=0,\ldots,\tau-1$}{
    $Z_{t+\delta}^{\bar{A}}\erelbar{00}z^\phi(\bar{H}_{t+\delta})$\\
    $\mathcal{L}_t^{\delta}\erelbar{00}\left(g^\phi_{\delta}(Z_{t+\delta}^{\bar{A}}, A_{t+\delta}) - \tilde{G}_{t+\delta+1}^{\bar{a}}\right)^2$
}
}
\tcp{Compute gradient and update \method parameters $\phi$}
$\phi \erelbar{00} \phi - \eta \nabla_\phi \left( \frac{1}{T-\tau} \sum_{t=1}^{T-\tau}  \left( \frac{1}{\tau}\sum_{\delta=0}^{\tau-1} \mathcal{L}_t^{\delta}\right)\right)$
\BlankLine
\kwPrediction\\
\BlankLine
\Input{~Data $\bar{H}_{t}$, treatments $\bar{a}\in \{0,1\}^{d_a\times \tau}$} 
\Output{~$\hat{g}_t^{\bar{a}}(\bar{H}_t)=\hat{\mathbb{E}}[{G}_{t+1}^{\bar{a}} \mid \bar{H}_{t},A_t=a_{t}] $}
\BlankLine
\tcp{Initialize}
$a_{t:t+\tau-1} \erelbar{01} \bar{a}$\\
\tcp{\circledgreen{\tiny{A}} Generation step}
$\hat{g}_t^{\bar{a}}(\bar{H}_t) \erelbar{01} g^\phi_0(z^\phi(\bar{H}_{t}), a_{t})$
\BlankLine
\BlankLine
\end{algorithm}
\textbf{Legend}: Operations with ``$\protect\erelbar{00}$" are attached to the computational graph, while operations with ``$\protect\erelbar{01}$" are detached from it.
}
\vspace{-0.5cm}
\end{minipage}
\end{wrapfigure}

Here, it is important that for $\delta=\tau$, the pseudo-outcome $\tilde{G}_{t+\tau}^{\bar{a}}=Y_{t+\tau}$ is \emph{available in the data}. By learning $Y_{t+\tau}$ with 
{\small
\begin{align}
\hat{Y}_{t+\tau}=g^\phi_{\tau-1}(Z_{t+\tau-1}^{\bar{A}}, A_{t+\tau-1}),
\end{align}
}
it is ensured the last G-computation head $g^\phi_{\tau-1}(\cdot)$ is learned on a ground-truth quantity. Thereby, the weights of $g^\phi_{\tau-1}(\cdot)$ are gradually optimized during training. Hence, the predicted pseudo-outcome 
{\small
\begin{align}
\tilde{G}_{t+\tau-1}^{\bar{a}}=g^\phi_{\tau-1}(Z_{t+\tau-1}^{\bar{a}}, a_{t+\tau-1})    
\end{align}
}
in the next \circledgreen{A}~\emph{generation step} become mores accurate. Therefore, the G-computation head $g^\phi_{\tau-2}(\cdot)$ is learned on a more accurate prediction in the following \circledblue{B}~\emph{learning step}, which thus leads to a better generated pseudo-outcome $\tilde{G}_{t+\tau-2}^{\bar{a}}$, and so on. As a result, the optimization of the G-computation heads gradually improves from $g^\phi_{\tau-1}(\cdot)$ up to $g^\phi_0(\cdot)$.

$\bullet$~\textbf{Inference:}~Finally, we introduce how inference is achieved with our \method. Given a history $\bar{H}_t=\bar{h}_t$ and an interventional treatment sequence $a=a_{t:t+\tau-1}$, our \method is trained to estimate of \Eqref{eq:capo} through \Eqref{eq:g_last}. For this, our \method computes the CAPO via
{\small
\begin{align}
  \hat{g}_t^{\bar{a}}(\bar{h}_t) = \hat{\mathbb{E}}[G_{t+1}^{\bar{a}} \mid \bar{H}_{t}=\bar{h}_t, A_t=a_t]= g^\phi_0(z^\phi(\bar{h}_t), a_{t}).  
\end{align}
}
We summarize this in the following proposition.

\begin{proposition}\label{prop:estimator}
    Our \method estimates the G-computation formula as in \Eqref{eq:g_last} and, therefore, performs proper adjustments for time-varying confounding.
\end{proposition}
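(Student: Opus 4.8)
The plan is to show that, at the population level and under sufficient model capacity, the interleaved generation/learning scheme forces each G-computation head $g^\phi_\delta$ to recover the recursive conditional expectation $g_{t+\delta}^{\bar{a}}$ from \Eqref{eq:g_recurse}, so that the terminal head $g^\phi_0$ recovers $g_t^{\bar{a}}(\bar{h}_t)$ in \Eqref{eq:g_last}; the conclusion about proper confounding adjustment then follows immediately from Proposition~\ref{prop:estimand}. The single fact I would rely on throughout is the standard characterization of the squared-error minimizer: for any square-integrable target $W$ and conditioning variables $V$, the function $f$ minimizing $\mathbb{E}[(W-f(V))^2]$ is $f(V)=\mathbb{E}[W\mid V]$. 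Since the learning step minimizes exactly such a loss for each head, at the optimum each head equals the corresponding conditional expectation of its (frozen) target.

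The core of the argument is a backward induction on $\delta$ running from $\delta=\tau-1$ down to $\delta=0$. For the base case $\delta=\tau-1$, the regression target is $\tilde{G}_{t+\tau}^{\bar{a}}=Y_{t+\tau}$, which is observed in the data (\Eqref{eq:g_first}); applying the squared-error characterization to the learning step gives that $g^\phi_{\tau-1}$ recovers $\mathbb{E}[Y_{t+\tau}\mid \bar{H}_{t+\tau-1}^t, A_{t:t+\tau-1}=a_{t:t+\tau-1}]=g_{t+\tau-1}^{\bar{a}}$. For the inductive step I would assume that $g^\phi_{\delta+1}$ already coincides with $g_{t+\delta+1}^{\bar{a}}$. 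Then in the generation step the network produces $\tilde{G}_{t+\delta+1}^{\bar{a}}=g^\phi_{\delta+1}(Z_{t+\delta+1}^{\bar{a}}, a_{t+\delta+1})=g_{t+\delta+1}^{\bar{a}}(\bar{H}_{t+\delta+1}^t)$, which by \Eqref{eq:g_random} is exactly the true pseudo-outcome $G_{t+\delta+1}^{\bar{a}}$. Feeding this frozen target into the learning step and again invoking the squared-error characterization yields $g^\phi_\delta=\mathbb{E}[G_{t+\delta+1}^{\bar{a}}\mid \bar{H}_{t+\delta}^t, A_{t:t+\delta}=a_{t:t+\delta}]=g_{t+\delta}^{\bar{a}}$, closing the induction. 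At $\delta=0$ this gives $g^\phi_0(z^\phi(\bar{h}_t),a_t)=g_t^{\bar{a}}(\bar{h}_t)$, i.e. the model computes the G-computation formula of \Eqref{eq:g_last}.

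Two structural features of the training scheme make this induction valid, and I would flag them explicitly. First, the detachment of all generation-step operations from the computational graph is essential: it freezes $\tilde{G}_{t+\delta+1}^{\bar{a}}$ as a constant target during each learning step, so that the loss is a genuine regression objective whose minimizer is a clean conditional mean rather than a moving self-referential target. Second, the two steps condition the backbone differently---the generation step evaluates the heads under the interventional treatment sequence $a_{t:t+\delta}$, whereas the learning step regresses on the factual history $\bar{H}_{t+\delta}$ with factual treatments $\bar{A}_{t+\delta}$---and it is precisely this factual regression evaluated at the interventional argument that reproduces the nested observational conditional expectations appearing in \Eqref{eq:g_comp}.

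The main obstacle I anticipate is not the algebra of the induction but stating cleanly the idealizing assumptions under which it is a theorem rather than a heuristic: namely, that the function class realized by the backbone and heads is expressive enough to contain each $g_{t+\delta}^{\bar{a}}$ and that the learning step attains the global squared-error minimum. I would therefore phrase the result at the population/infinite-capacity level, make explicit that the claim concerns the fixed point of the generation--learning recursion, and defer the finite-sample behaviour (where the generated pseudo-outcomes are only approximate, as noted after the generation step) to the empirical evaluation. Finally, having identified $g^\phi_0$ with $g_t^{\bar{a}}$, the assertion that the estimator performs proper adjustment for time-varying confounding follows because Proposition~\ref{prop:estimand} already established that $g_t^{\bar{a}}(\bar{h}_t)$ equals the CAPO in \Eqref{eq:capo}.
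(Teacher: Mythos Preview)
Your proposal is correct, but it takes a genuinely different route from the paper. The paper's own proof is a purely formal unwinding: it writes out the nested \emph{estimated} expectations $\hat{\mathbb{E}}[\ldots]$ from the G-computation formula, then successively substitutes each innermost $\hat{\mathbb{E}}$ by the corresponding head output $g^\phi_\delta(\cdot)$ and identifies that with the generated pseudo-outcome $\tilde{G}^{\bar{a}}_{t+\delta}$, iterating down to $g^\phi_0(z^\phi(\bar{h}_t),a_t)$. In other words, the paper simply verifies that the architecture, by construction, implements the nested estimation recursion---it mirrors the chain of equalities in Proposition~\ref{prop:estimand} with hats everywhere and does not invoke any optimality argument.

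Your argument is more substantive: you use the $L^2$ characterization of conditional expectation and a backward induction on $\delta$ to show that, at the population optimum, each head \emph{actually equals} the true $g_{t+\delta}^{\bar{a}}$. This buys you an explicit statement of the idealizing assumptions (sufficient capacity, global squared-error minimum, detachment freezing the target) and clarifies precisely in what sense the method ``estimates'' the formula rather than merely having the right recursive shape. The paper's version is shorter and closer to a definitional check; yours is the more rigorous reading of the proposition and also surfaces the role of the factual-versus-interventional conditioning, which the paper's proof leaves implicit.
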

\begin{proof}
    {We provide an intuition in Figure~\ref{fig:gcomputation}.} The full proof is in Supplement~\ref{appendix:target}. 
\end{proof}

\subsection{Advantages over existing neural methods}\label{sec:differences}

\begin{wraptable}{r}{0.55\textwidth}
\vspace{-0.5cm}
\setlength{\intextsep}{0pt}
\setlength{\columnsep}{1em}
\centering
\begin{adjustbox}{width=\linewidth}
{\tiny
\begin{tabular}{l  c   c  c}
\toprule
\textbf{Method} & \makecell{\textbf{Estimated}\\ \textbf{moment}} & \makecell{\textbf{Moment}\\ \textbf{dimension}} & \makecell{\textbf{Estimation} \\ \textbf{strategy}} \\
\midrule
\multirow{6}{*}{
  \makecell{
    G-Net\\ \citep{Li.2021} \& \\[0.2em]
    \rebuttal{G-transformer}\\ \rebuttal{\citep{Xiong.2024}}
  }
}
& \multirow{5}{*}{\(
  \left\{
  \begin{array}{c}
    1\text{st}\\
    2\text{nd}\\
    3\text{rd}\\
    \ldots\\
    \infty
  \end{array}
  \right.
  \)} 
& $(\tau-1)\times (d_x+d_y)+d_y$ \\
& & $(\tau-1)\times (d_x+d_y)$ & Monte Carlo \\
& & $(\tau-1)\times (d_x+d_y)$ & sampling\\
& & $\ldots$ & \textbf{\xmark}\\
& & $(\tau-1)\times (d_x+d_y)$ \\
\midrule
\rowcolor{green!30} {\method (\textit{ours})}  & 1st & $\tau\times d_y$ & End-to-end regressions \textbf{\cmark}\\
\bottomrule
\end{tabular}%
}
\end{adjustbox}
\vspace{-0.3cm}
\caption{\textbf{Comparison: G-Net \rebuttal{}{and G-transformer} vs. our \method.} G-Net \rebuttal{and G-transformer} require estimating the \emph{full distribution of all time-varying confounders in the future (i.e., estimating \underline{all} moments)}.}
\label{tab:gnet_comparison}
\vspace{-.3cm}
\end{wraptable}

$\bullet$\,\textbf{CT, CRN, and TE-CDE:}
Our \method is vastly different from CT \citep{Melnychuk.2022}, CRN \citep{Bica.2020c} and TE-CDE \citep{Seedat.2022}. These methods do \textbf{not} perform proper adjustments for time-varying confounding. In particular, they estimate $\mathbb{E}[Y_{t+\tau} \mid H_{t}=h_{t}, A_{t:t+\tau}=a_{t:t+\tau}]$, which is \textbf{not} the CAPO \citep{Frauen.2025}. Hence, they target an \emph{incorrect estimand}, leading to irreducible \emph{bias}, so deploying them to medical scenarios would be irresponsible.

$\bullet$\,\textbf{RMSNs:}~RMSNs \citep{Lim.2018} rely on pseudo-outcome regressions in order to adjust for time-varying confounders. However, their pseudo-outcomes are constructed via inverse propensity weighting, which leads to pseudo-outcomes with larger variance than ours:

\vspace{-0.5cm}
\begin{proposition}\label{prop:ipw}
    Pseudo-outcomes constructed via inverse propensity weighting have larger variance than pseudo-outcomes in our \methodlong.
\end{proposition}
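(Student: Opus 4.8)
The plan is to write both pseudo-outcomes explicitly, note that they share the same conditional mean (the CAPO), and then reduce the variance comparison to a comparison of conditional second moments through a common ``bridge'' quantity: the G-computation recursion applied to the \emph{squared} outcome. For RMSNs the $\tau$-step inverse-propensity pseudo-outcome is
\[
\tilde{Y}^{\mathrm{IPW}}_{t+\tau}=Y_{t+\tau}\prod_{s=t}^{t+\tau-1}\frac{\mathbbm{1}\{A_s=a_s\}}{\pi_s},\qquad \pi_s:=\mathbb{P}(A_s=a_s\mid\bar{H}_s),
\]
whereas our method regresses the pseudo-outcome $G^{\bar{a}}_{t+1}$ of \eqref{eq:g_random} on $(\bar{H}_t,A_t)$. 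By iterated expectations with positivity and sequential ignorability, $\mathbb{E}[\tilde{Y}^{\mathrm{IPW}}_{t+\tau}\mid\bar{H}_t=\bar{h}_t]$ equals the CAPO $\mu:=\mathbb{E}[Y_{t+\tau}[a_{t:t+\tau-1}]\mid\bar{h}_t]$, and by Proposition~\ref{prop:estimand} so does $\mathbb{E}[G^{\bar{a}}_{t+1}\mid\bar{h}_t,a_t]$. Hence both constructions are unbiased for the same target, and it suffices to compare their conditional second moments against $\mu^2$.

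The bridge is the recursion in~\eqref{eq:g_recurse} run on $Y^2$ rather than $Y$: set $s^{\bar{a}}_{t+\tau}=Y_{t+\tau}^2$ and $s^{\bar{a}}_{t+\delta}=\mathbb{E}[s^{\bar{a}}_{t+\delta+1}\mid\bar{H}_{t+\delta},A_{t:t+\delta}=a_{t:t+\delta}]$, and write $s^{\bar{a}}_t(\bar{h}_t)=\mathbb{E}[s^{\bar{a}}_{t+1}\mid\bar{h}_t,a_t]$. On our side, conditional Jensen applied layer-by-layer to \eqref{eq:g_random} gives $(G^{\bar{a}}_{t+\delta})^2\le\mathbb{E}[(G^{\bar{a}}_{t+\delta+1})^2\mid\bar{H}_{t+\delta},a_{t:t+\delta}]$, and a downward induction from $\delta=\tau$, where equality holds, yields $(G^{\bar{a}}_{t+1})^2\le s^{\bar{a}}_{t+1}$ almost surely, whence $\mathbb{E}[(G^{\bar{a}}_{t+1})^2\mid\bar{h}_t,a_t]\le s^{\bar{a}}_t(\bar{h}_t)$. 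On the IPW side, squaring the weight and integrating out the trajectory one step at a time, each indicator $\mathbbm{1}\{A_s=a_s\}$ cancels exactly one power of $\pi_s$, so the surviving factors are $\pi_s^{-1}\ge1$ and dropping them gives $\mathbb{E}[(\tilde{Y}^{\mathrm{IPW}}_{t+\tau})^2\mid\bar{h}_t]\ge s^{\bar{a}}_t(\bar{h}_t)$.

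Subtracting the common $\mu^2$ then sandwiches the two variances around the same bridge value,
\[
\Var(G^{\bar{a}}_{t+1}\mid\bar{h}_t,a_t)\ \le\ s^{\bar{a}}_t(\bar{h}_t)-\mu^2\ \le\ \Var(\tilde{Y}^{\mathrm{IPW}}_{t+\tau}\mid\bar{h}_t),
\]
which is the claim; the gap splits into a law-of-total-variance term coming from the Jensen step and an inverse-propensity inflation term that is strictly positive unless every $\pi_s=1$. As a sanity check, the one-step case $\tau=1$ collapses to the exact identity $\Var(\tilde{Y}^{\mathrm{IPW}}\mid\bar{h}_t)-\Var(Y_{t+1}\mid\bar{h}_t,a_t)=(\pi_t^{-1}-1)\,\mathbb{E}[Y_{t+1}^2\mid\bar{h}_t,a_t]\ge0$.

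I expect the main obstacle to be the multi-step IPW bookkeeping together with aligning the two conditioning sets, rather than any single inequality. On the bookkeeping side one must integrate the squared product weight against the observational law in the correct temporal order, tracking which history each $\pi_s$ sits on so that exactly one factor cancels per step. On the alignment side the two regressions condition on slightly different features --- RMSNs on $\bar{H}_t$ with the treatment folded into the weight, ours on $(\bar{H}_t,A_t)$ --- so one must check that both nonetheless target $\mu$ and that the interventional conditioning $A_{t:t+\delta}=a_{t:t+\delta}$ inside $s^{\bar{a}}_{t+\delta}$ is the same object on both sides, so that the sandwich closes around one common quantity. Once the single-step identity and the definition of $s^{\bar{a}}_{t+\delta}$ are fixed, both the Jensen induction and the inverse-propensity lower bound are routine.
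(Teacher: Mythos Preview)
Your argument is correct and rests on the same two inequalities the paper uses --- conditional Jensen for the G-computation side and $1/\pi\ge 1$ for the IPW side --- but your execution is considerably more complete. The paper's proof treats only the static one-step case (a single covariate $X$, a single treatment $A$), computes $\Var[g^1(X)]$ and $\Var[YA/\pi(X)]$ directly, compares the second moments via $\mathbb{E}[Y\mid X,A=1]^2\le\mathbb{E}[Y^2\mid X,A=1]$ and $1/\pi(X)\ge1$, and then simply asserts that ``the analog directly translates into the time-varying setting.'' You instead carry out the multi-step argument explicitly by introducing the bridge quantity $s^{\bar a}_{t+\delta}$ (the iterated G-recursion applied to $Y_{t+\tau}^2$), which lets you run a clean downward induction on the Jensen side and a step-by-step cancellation on the IPW side, sandwiching both conditional variances around $s^{\bar a}_t(\bar h_t)-\mu^2$. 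This buys you a rigorous $\tau$-step statement and a transparent decomposition of the gap into the law-of-total-variance piece and the propensity-inflation piece; the paper's proof gives neither. The only thing to keep straight --- and you flag it yourself --- is that the IPW variance is conditional on $\bar h_t$ while the G-computation variance is conditional on $(\bar h_t,a_t)$, and that the indicators in the IPW product force $A_{t:t+\delta}=a_{t:t+\delta}$ so that the conditioning inside $s^{\bar a}_{t+\delta}$ matches on both sides.
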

\vspace{-0.4cm}
\begin{proof}
    See Supplement~\ref{appendix:variance}.
\end{proof}
\vspace{-0.4cm}

$\bullet$\,\textbf{G-Net \rebuttal{and G-transformer}}
In order to estimate a $\tau$-step-ahead CAPO, G-Net \citep{Li.2021} \rebuttal{and G-transformer \citep{Xiong.2024}} require (i)~a $d_y$-dimensional regression as well as estimating the \emph{entire distribution} of a $(\tau-1)\times (d_y+d_x)$-dimensional confounding variable. That is, it needs to estimate \emph{all moments} of a \emph{high-dimensional} random variable, which is inefficient. In contrast, our \method only requires $\tau$ regressions of a $d_y$-dimensional outcome and, hence, only needs to estimate the \emph{first moment} of a much \emph{lower-dimensional} random variable. We provide a comparison in \textbf{Table~\ref{tab:gnet_comparison}}.

\section{Experiments}\label{sec:experiments}

We show the performance of our \method against key neural methods for estimating CAPOs over time (see Table~\ref{tab:table_method_overview}). Further details (e.g., implementation details, hyperparameter tuning, runtime) are given in \textbf{Supplement~\ref{appendix:hparams}}.

\begin{table}[h!]
\vspace{-0.3cm}
    \centering
     \begin{adjustbox}{max width=\textwidth}
    \begin{tabular}{lccccccccccc}
        \toprule
        
        Confounding strength & $\gamma=10$ & $\gamma=11$& $\gamma=12$& $\gamma=13$& $\gamma=14$& $\gamma=15$& $\gamma=16$& $\gamma=17$& $\gamma=18$& $\gamma=19$& $\gamma=20$\\
        \midrule

        CRN \citep{Bica.2020c} & $4.05\pm 0.55$ & $5.45\pm 1.68$ & $6.17\pm 1.27$ & $4.98\pm1.49$ & $5.24\pm0.33$ & $4.84\pm0.95$ & $5.41\pm1.20$ & $5.09\pm0.77$ & $5.08\pm0.87$ & $4.47\pm0.84$ & $4.80\pm0.70$  \\

        TE-CDE \citep{Seedat.2022} & $4.08\pm0.54$ & $4.21\pm0.42$ & $4.33\pm0.11$ & $4.48\pm0.47$ & $4.39\pm0.38$ & $4.67\pm0.65$ & $4.84\pm0.46$ & $4.31\pm0.38$ & $4.44\pm0.53$ & $4.61\pm0.42$ & $4.72\pm0.45$ \\

        CT \citep{Melnychuk.2022} & $3.44\pm0.73$  & $3.70\pm0.77$ & $3.60\pm0.62$ & $3.87\pm0.68$ & $3.88\pm0.75$ & $3.87\pm0.65$ & $5.26\pm1.67$ & $4.04\pm0.74$ & $4.13\pm0.90$ & $4.30\pm0.72$ & $4.49\pm0.94$ \\
        
        RMSNs \citep{Lim.2018} & $3.34\pm0.20$ & $3.41\pm0.17$ & $3.61\pm0.25$  & $3.76\pm0.25$ & $3.92\pm0.26$ & $4.22\pm0.40$ & $4.30\pm0.52$ & $4.48\pm0.59$ & $4.60\pm0.46$ & $4.47\pm0.53$ & $4.62\pm0.51$ \\

        \rebuttal{G-transformer \citep{Xiong.2024}} & \rebuttal{$5.42 \pm 1.67 $} & \rebuttal{$5.50 \pm 1.76$} & \rebuttal{$5.32 \pm 1.85 $} & \rebuttal{$5.65 \pm 2.01$} & \rebuttal{$5.46 \pm 1.97$} & \rebuttal{$5.81 \pm 1.88$} & \rebuttal{$5.76 \pm 1.70$} & \rebuttal{$5.76 \pm 1.63$} & \rebuttal{$ 5.67 \pm 1.84$} & \rebuttal{$6.09 \pm 1.85$} & \rebuttal{$6.00 \pm 1.89$}\\
        
        G-Net \citep{Li.2021} & $3.51\pm0.37$ & $3.71\pm0.33$ & $3.80\pm0.29$ & $3.89\pm0.27$ & $3.91\pm0.26$ & $3.94\pm0.26$ & $4.05\pm0.37$ & $4.09\pm0.41$ & $4.22\pm0.53$ & $4.21\pm0.55$ & $4.24\pm0.45$ \\
        
\midrule
        \textbf{\method}~(ours) & $\bm{3.13\pm0.22}$ & $\bm{3.16\pm0.14}$ & $\bm{3.31\pm0.20}$ & $\bm{3.27\pm0.14}$ & $\bm{3.30\pm0.11}$ & $\bm{3.49\pm0.30}$ & $\bm{3.53\pm0.26}$ & $\bm{3.50\pm0.26}$ & $\bm{3.41\pm0.29}$ & $\bm{3.59\pm0.21}$ & $\bm{3.71\pm0.27}$ \\
\midrule
        Rel. improvement  & \greentext{$\:\:6.4\%$} & \greentext{$\:\:7.3\%$} & \greentext{$\:\:7.9\%$} & \greentext{$12.9\%$} & \greentext{$15.0\%$} & \greentext{$9.9\%$} & \greentext{$12.9\%$} & \greentext{$13.1\%$} & \greentext{$17.4\%$} & \greentext{$14.8\%$} & \greentext{$12.5\%$} \\

    \bottomrule
    \end{tabular}
    \end{adjustbox}
    \vspace{-0.2cm}
    \caption{\textbf{RMSE on synthetic data.} Based on the tumor data with $\tau=2$. Our \method consistently outperforms all baselines. We highlight the relative improvement over the best-performing baseline.}
    \label{tab:results_cancer}
    \vspace{-0.3cm}
\end{table}

$\bullet$~{\textbf{Synthetic data:}} First, we follow common practice in benchmarking for causal inference \citep{Bica.2020c, Li.2021, Lim.2018, Melnychuk.2022} and evaluate the performance of our \method against other baselines on fully synthetic data. The use of synthetic data is beneficial as it allows us to simulate the outcomes under a sequence of interventions, which are unknown in real-world datasets. Thereby, we are able to evaluate the performance of all methods for estimating CAPOs over time. Here, our main aim is to show that our \method is \emph{robust against increasing levels of confounding}.

For this, we use data based on the pharmacokinetic-pharmacodynamic tumor growth model \citep{Geng.2017}, which is a standard dataset for benchmarking causal inference methods in the time-varying setting \citep{Bica.2020c, Li.2021, Lim.2018, Melnychuk.2022}, and allows for controlling the confounding strength with a parameter $\gamma$. Here, we are interested in the performance of our \method for increasing levels of confounding. We thus increase the confounding parameter $\gamma$ from $\gamma=10$ to $\gamma=20$, and the same parameterization as in \citep{Melnychuk.2022}.  {We report details on the data-generating process in Supplement~\ref{appendix:synthetic_data}.}  

\underline{Results:} \textbf{Table~\ref{tab:results_cancer}} shows the average RMSE over five different runs for a prediction horizon of $\tau=2$. Of note, we emphasize that our comparison is fair (see hyperparameter tuning in \textbf{Supplement~\ref{appendix:hparams_cancer}}). We make the following observations:

(i)~Our \textbf{\method} outperforms all baselines by a significant margin. Importantly, as our \method performs proper adjustments for time-varying confounding, it is robust against increasing $\gamma$. In particular, our \method achieves a performance improvement over the best-performing baseline of up to $17.4\%$.

(ii)~The \circledgray{1}~{baselines that do not perform proper adjustments} (i.e., \textbf{CRN} \citep{Bica.2020c}, \textbf{TE-CDE} \citep{Seedat.2022}, and \textbf{CT} \citep{Melnychuk.2022}) exhibit large variations in performance and are thus highly unstable. This is expected, as they do not target the correct causal estimand and, accordingly, suffer from the increasing confounding. 

(iii)~The baselines with \circledgray{2}~{problematic adjustment strategies} (i.e., \textbf{RMSNs} \citep{Lim.2018}, \textbf{G-Net} \citep{Li.2021}) are slightly more stable than the no-adjustment baselines. This can be attributed to that the tumor growth model has no time-varying covariates $X_t$ and to that we are only focusing on $\tau=2$-step ahead predictions, both of which reduce the variance. However, the RMSNs and G-Net are still significantly worse than our \method.

$\bullet$~{\textbf{Semi-synthetic data:}} Next, we study how our \method performs when (i)~the covariate space is \emph{high-dimensional} and when (ii)~the \emph{prediction windows} $\tau$ \emph{become larger}. For this, we use semi-synthetic data, which, similar to the fully-synthetic dataset, allows us to access the ground-truth outcomes under an interventional sequence of treatments for benchmarking.

Our data-generating process is taken from \citep{Melnychuk.2022}, which builds upon the MIMIC-extract \citep{Wang.2020} based on the MIMIC-III dataset \citep{Johnson.2016}. In short, we use $d_x=25$ different vital signs as time-varying covariates, and simulate observational outcomes for training, and interventional outcomes for testing, respectively. As the covariate space is high-dimensional, we thereby study how robust our \method is with respect to estimation variance.  We further increase the prediction windows from $\tau=2$ up to $\tau=6$. {We report details on the data-generating process in \textbf{Supplement~\ref{appendix:synthetic_data}.}

\begin{table}
    \centering
    \begin{adjustbox}{max width=\textwidth}
    \begin{tabular}{lccccccccccccccc}
        \toprule
        \multicolumn{1}{l}{Training samples} & \multicolumn{5}{c}{$N=1000$} & \multicolumn{5}{c}{$N=2000$} & \multicolumn{5}{c}{$N=3000$} \\
        \cmidrule(lr){2-6} \cmidrule(lr){7-11} \cmidrule(lr){12-16}
         Prediction window & $\tau=2$ & $\tau=3$ & $\tau=4$ & $\tau=5$ & $\tau=6$ & $\tau=2$ & $\tau=3$ & $\tau=4$ & $\tau=5$ & $\tau=6$ & $\tau=2$ & $\tau=3$ & $\tau=4$ & $\tau=5$ & $\tau=6$ \\
        \midrule

        CRN \citep{Bica.2020c}  & $0.42\pm0.11$ & $0.58\pm0.21$ & $0.74\pm0.31$ & $0.84\pm0.42$ & $0.95\pm0.51$ & $0.39\pm0.12$ & $0.50\pm0.14$ & $0.58\pm0.15$ & $0.64\pm0.16$ & $0.70\pm0.17$ & $0.37\pm0.10$ & $0.46\pm0.11$ & $0.56\pm0.13$ & $0.65\pm0.16$ & $0.75\pm0.24$ \\

        TE-CDE \citep{Seedat.2022} & $0.76\pm0.09$ & $0.91\pm0.15$ & $1.07\pm0.22$ & $1.15\pm0.25$ & $1.24\pm0.28$ & $0.76\pm0.16$ & $0.87\pm0.17$ & $0.98\pm0.17$ & $1.06\pm0.18$ & $1.14\pm0.19$ & $0.71\pm0.09$ & $0.78\pm0.09$ & $0.88\pm0.11$ & $0.94\pm0.12$ & $1.02\pm0.13$ \\

        CT \citep{Melnychuk.2022} & $0.33\pm0.14$  & $0.44\pm0.18$ & $0.53\pm0.21$ & $0.57\pm0.19$ & $0.60\pm0.19$ & $0.31\pm0.11$ & $0.41\pm0.13$ & $0.49\pm0.15$ & $0.55\pm0.15$ & $0.60\pm0.15$ & $0.32\pm0.10$ & $0.40\pm0.11$ & $0.49\pm0.12$ & $0.55\pm0.13$ & $0.61\pm0.15$ \\
        
        RMSNs \citep{Lim.2018}  & $0.57\pm0.16$ & $0.73\pm0.20$ & $0.87\pm0.22$  & $0.94\pm0.20$ & $1.02\pm0.20$ & $0.62\pm0.25$ & $0.73\pm0.21$ & $0.85\pm0.25$ & $0.96\pm0.26$ & $1.05\pm0.28$ & $0.66\pm0.27$ & $0.76\pm0.24$ & $0.86\pm0.23$ & $0.93\pm0.21$ & $1.00\pm0.20$\\

        \rebuttal{G-transformer \citep{Xiong.2024}} & \rebuttal{$0.55 \pm 0.13$} & \rebuttal{$0.70 \pm 0.14$} & \rebuttal{$0.81 \pm 0.15$} & \rebuttal{$0.89 \pm 0.13$} & \rebuttal{$0.97 \pm 0.14$} & \rebuttal{$0.53 \pm 0.13$} &  \rebuttal{$0.67 \pm0.16$}  & \rebuttal{$0.78 \pm 0.19$} &  \rebuttal{$0.86 \pm 0.19$} & \rebuttal{$0.94 \pm 0.19$} &  \rebuttal{$0.49 \pm 0.10$} & \rebuttal{$0.62 \pm 0.13 $} & \rebuttal{$0.73 \pm 0.16$} & \rebuttal{$0.80\pm 0.18$} & \rebuttal{$0.87\pm 0.19$} \\
        
        G-Net \citep{Li.2021} & $0.56\pm0.14$ & $0.73\pm0.17$ & $0.86\pm0.18$ & $0.95\pm0.20$ & $1.03\pm0.21$ & $0.55\pm0.12$ & $0.73\pm0.14$ & $0.87\pm0.18$ & $1.00\pm0.22$ & $1.12\pm0.26$ & $0.54\pm0.11$ & $0.72\pm0.16$ & $0.88\pm0.21$ & $1.00\pm0.26$ & $1.11\pm0.32$ \\
        
\midrule
        \textbf{\method}~(ours) & $\bm{0.30\pm0.07}$ & $\bm{0.36\pm0.11}$ & $\bm{0.44\pm0.13}$ & $\bm{0.47\pm0.12}$ & $\bm{0.54\pm0.13}$ & $\bm{0.27\pm0.07}$ & $\bm{0.32\pm0.09}$ & $\bm{0.38\pm0.10}$ & $\bm{0.42\pm0.08}$ & ${\bm0.45\pm0.10}$ & $\bm{0.24\pm0.07}$ & $\bm{0.31\pm0.08}$ & $\bm{0.36\pm0.09}$ & $\bm{0.42\pm0.10}$ & $\bm{0.48\pm0.10}$ \\
\midrule
        Rel. improvement  & \greentext{$\:\:9.5\%$} & \greentext{$19.7\%$} & \greentext{$16.3\%$} & \greentext{$16.7\%$} & \greentext{$10.8\%$} & \greentext{$15.3\%$} & \greentext{$22.5\%$} & \greentext{$22.5\%$} & \greentext{$22.6\%$} & \greentext{$25.0\%$} & \greentext{$26.7\%$} & \greentext{$24.0\%$} & \greentext{$25.2\%$} & \greentext{$24.6\%$} & \greentext{$21.6\%$} \\
        
    \bottomrule
    \end{tabular}
    \end{adjustbox}
    \vspace{-0.2cm}
    \caption{\textbf{RMSE on semi-synthetic data based on the MIMIC-III extract.} Our \method consistently outperforms all baselines. We highlight the relative improvement over the best-performing baseline.}
    \vspace{-0.5cm}
    \label{tab:results_semisynth}
\end{table}

\underline{Results:}\textbf{ Table~\ref{tab:results_semisynth}} shows the average RMSE over five different runs. Again, we emphasize that our comparison is fair (see hyperparameter tuning in \textbf{Supplement~\ref{appendix:hparams}}). We make three observations:

(i)~Our \textbf{\method} consistently outperforms all baselines by a large margin. The performance of \method is robust across all sample sizes $N$. 
Further, it is stable across different prediction windows. We observe that our \method has a better performance compared to the best baseline of up to $26.7\%$. 

(ii)~The \circledgray{1}~{baselines that do not perform proper adjustments} (i.e., \textbf{CRN} \citep{Bica.2020c}, \textbf{CT} \citep{Melnychuk.2022}) tend to perform better than baselines with problematic adjustment strategies (i.e., RMSNs \citep{Lim.2018}, G-Net \citep{Li.2021}). The reason is that the former baselines are (i)~regression-based (ii)~do not require IPW pseudo-outcomes. Hence, they can better handle the high-dimensional covariate space. They are, however, biased as they do not adjust for time-varying confounders and thus still perform significantly worse than our \method. 

(iii)~The baselines with \circledgray{2}~{problematic adjustment strategies} (i.e., \textbf{RMSNs} \citep{Lim.2018}, \textbf{G-Net} \citep{Li.2021}, \rebuttal{\textbf{G-transformer} \citep{Xiong.2024}}) struggle with the high-dimensional covariate space and larger prediction windows $\tau$. This can be expected, as RMSNs suffer from overlap violations and thus produce unstable inverse propensity weights. Similarly, G-Net suffers from the curse of dimensionality, as it requires estimating a $(d_x+d_y)\times (\tau-1)$-dimensional distribution. 

$\bullet$~\textbf{Ablation studies}~We now compare \method against two ablations: (i)~an \emph{IGC-LSTM ablation} and (ii)~a \emph{biased transformer ablation}. For the former, we substitute the multi-input transformer in our \method with a simple LSTM. For the latter, we use the same transformer backbone as in our main results but directly learn the G-computation heads on the factual data. Thereby, we omit the iterative generation and learning steps and do not perform proper adjustments for time-varying confounding. 

\begin{wrapfigure}{r}{0.48\textwidth}
\vspace{-0.5cm}
    \centering
    \includegraphics[width=0.48\textwidth, trim=0.cm 0.cm 0.cm 0.cm, clip]{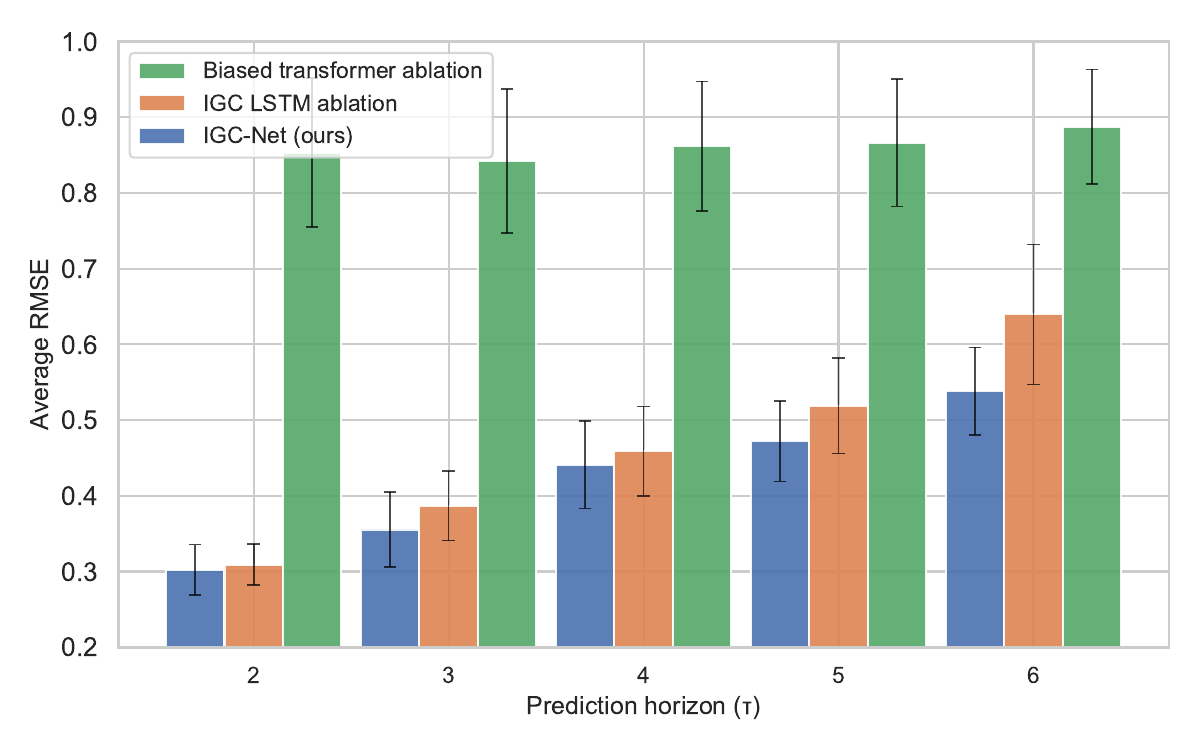} 
    \vspace{-0.7cm}
    \caption{\textbf{Ablations.} \emph{IGC-LSTM} has competitive performance, while the \emph{biased transformer} without proper adjustments is inferior.}
    \label{fig:ablation}
\vspace{-0.5cm}
\end{wrapfigure}

\textbf{Figure~\ref{fig:ablation}} shows the performance of both ablations. (i)~We can see that the IGC-LSTM ablation has competitive performance due to our approach to G-computation. Of note, the ICG-LSTM is proposed in this work and thus presents a key novelty by itself. (ii)~The \emph{biased transformer} clearly has poor performance due to the absence of proper adjustments, which further highlights the \emph{importance of our iterative generation and learning algorithm}. 

\rebuttal{$\bullet$~\textbf{Real-world data:}~We additionally evaluate on the MIMIC-III ICU dataset in \textbf{Table~\ref{tab:results_rwd}}, following the setup of \citet{Melnychuk.2022}, where the goal is to predict factual patient outcomes (e.g., effects of vasopressors and ventilation on diastolic blood pressure). Because no counterfactuals are observed, this experiment serves only as a \emph{sanity check} rather than an evaluation of causal accuracy. Nevertheless, our method achieves state-of-the-art factual prediction performance, which demonstrates that our \method remains highly effective even when no time-varying adjustment is required, and is directly applicable to real-world clinical data.}

\begin{table}
    \centering
    \begin{adjustbox}{max width=\textwidth}
    \begin{tabular}{lccccccccccccccc}
        \toprule
        \multicolumn{1}{l}{Training samples} & \multicolumn{5}{c}{\rebuttal{$N=1000$}} & \multicolumn{5}{c}{\rebuttal{$N=2000$}} & \multicolumn{5}{c}{\rebuttal{$N=3000$}} \\
        \cmidrule(lr){2-6} \cmidrule(lr){7-11} \cmidrule(lr){12-16}
         Prediction window & \rebuttal{$\tau=2$} & \rebuttal{$\tau=3$} & \rebuttal{$\tau=4$} & \rebuttal{$\tau=5$} & \rebuttal{$\tau=6$} & \rebuttal{$\tau=2$} & \rebuttal{$\tau=3$} & \rebuttal{$\tau=4$} & \rebuttal{$\tau=5$} & \rebuttal{$\tau=6$} & \rebuttal{$\tau=2$} & \rebuttal{$\tau=3$} & \rebuttal{$\tau=4$} & \rebuttal{$\tau=5$} & \rebuttal{$\tau=6$} \\
        \midrule

        \rebuttal{CRN \citep{Bica.2020c}  }
        & \rebuttal{$9.76\pm0.35$} & \rebuttal{$10.45\pm0.41$} & \rebuttal{$10.82\pm0.40$} & \rebuttal{$11.08\pm0.41$} & \rebuttal{$11.28\pm0.43$}
        & \rebuttal{$9.61\pm0.30$} & \rebuttal{$10.26\pm0.35$} & \rebuttal{$10.61\pm0.31$} & \rebuttal{$10.90\pm0.33$} & \rebuttal{$11.13\pm0.36$}
        & \rebuttal{$9.28\pm0.46$} & \rebuttal{$9.93\pm0.49$} & \rebuttal{$10.28\pm0.52$} & \rebuttal{$10.56\pm0.51$} & \rebuttal{$10.78\pm0.53$} \\

        \rebuttal{TE-CDE \citep{Seedat.2022} }
        & \rebuttal{$11.52\pm0.25$} & \rebuttal{$11.82\pm0.29$} & \rebuttal{$12.05\pm0.32$} & \rebuttal{$12.23\pm0.33$} & \rebuttal{$12.36\pm0.35$}
        & \rebuttal{$11.05\pm0.38$} & \rebuttal{$11.35\pm0.37$} & \rebuttal{$11.60\pm0.39$} & \rebuttal{$11.77\pm0.40$} & \rebuttal{$11.92\pm0.41$}
        & \rebuttal{$10.82\pm0.30$} & \rebuttal{$11.13\pm0.32$} & \rebuttal{$11.39\pm0.36$} & \rebuttal{$11.55\pm0.39$} & \rebuttal{$11.70\pm0.43$} \\

        \rebuttal{CT \citep{Melnychuk.2022} }
        & \rebuttal{$\bm{9.32\pm0.38}$} & \rebuttal{$\bm{10.02\pm0.41}$} & \rebuttal{$\underline{10.44\pm0.40}$} & \rebuttal{$\underline{10.76\pm0.43}$} & \rebuttal{$\underline{11.00\pm0.45}$}
        & \rebuttal{$\bm{9.26\pm0.30}$} & \rebuttal{$\bm{9.87\pm0.35}$} & \rebuttal{$\underline{10.23\pm0.36}$} & \rebuttal{$\underline{10.53\pm0.39}$} & \rebuttal{$\underline{10.76\pm0.41}$}
        & \rebuttal{$\bm{9.05\pm0.43}$} & \rebuttal{$\bm{9.68\pm0.45}$} & \rebuttal{$\bm{10.04\pm0.47}$} & \rebuttal{$\underline{10.32\pm0.49}$} & \rebuttal{$\bm{10.54\pm0.53}$} \\

        \rebuttal{RMSNs \citep{Lim.2018}  }
        & \rebuttal{$11.32\pm0.91$} & \rebuttal{$12.37\pm0.96$} & \rebuttal{$13.09\pm0.97$} & \rebuttal{$13.57\pm0.96$} & \rebuttal{$13.94\pm0.97$}
        & \rebuttal{$11.07\pm0.98$} & \rebuttal{$12.21\pm0.90$} & \rebuttal{$12.82\pm0.92$} & \rebuttal{$12.71\pm0.97$} & \rebuttal{$12.80\pm0.96$}
        & \rebuttal{$11.38\pm0.96$} & \rebuttal{$12.95\pm0.95$} & \rebuttal{$13.40\pm0.99$} & \rebuttal{$13.48\pm0.83$} & \rebuttal{$13.59\pm0.86$} \\

        \rebuttal{G-transformer \citep{Xiong.2024} }
        & \rebuttal{$11.46\pm0.43$} & \rebuttal{$12.77\pm0.44$} & \rebuttal{$13.56\pm0.46$} & \rebuttal{$14.07\pm0.47$} & \rebuttal{$14.44\pm0.50$}
        & \rebuttal{$11.55\pm0.40$} & \rebuttal{$12.68\pm0.41$} & \rebuttal{$13.32\pm0.42$} & \rebuttal{$13.75\pm0.43$} & \rebuttal{$14.16\pm0.47$}
        & \rebuttal{$11.58\pm0.40$} & \rebuttal{$12.78\pm0.43$} & \rebuttal{$13.43\pm0.44$} & \rebuttal{$13.82\pm0.41$} & \rebuttal{$14.14\pm0.44$} \\

        \rebuttal{G-Net \citep{Li.2021} }
        & \rebuttal{$11.46\pm0.38$} & \rebuttal{$12.94\pm0.40$} & \rebuttal{$13.90\pm0.42$} & \rebuttal{$14.53\pm0.43$} & \rebuttal{$15.03\pm0.45$}
        & \rebuttal{$11.87\pm0.34$} & \rebuttal{$13.20\pm0.35$} & \rebuttal{$14.02\pm0.38$} & \rebuttal{$14.58\pm0.39$} & \rebuttal{$15.12\pm0.41$}
        & \rebuttal{$11.90\pm0.38$} & \rebuttal{$13.17\pm0.40$} & \rebuttal{$13.96\pm0.43$} & \rebuttal{$14.54\pm0.06$} & \rebuttal{$15.03\pm0.05$} \\
\midrule

        \rebuttal{\textbf{\method}~(ours) }
        & \rebuttal{$\underline{9.42\pm0.49}$} & \rebuttal{$\underline{10.03\pm0.52}$} & \rebuttal{$\bm{10.43\pm0.53}$} & \rebuttal{$\bm{10.71\pm0.57}$} & \rebuttal{$\bm{10.92\pm0.59}$}
        & \rebuttal{$\underline{9.32\pm0.52}$} & \rebuttal{$\underline{9.88\pm0.56}$} & \rebuttal{$\bm{10.20\pm0.56}$} & \rebuttal{$\bm{10.49\pm0.61}$} & \rebuttal{$\bm{10.73\pm0.61}$}
        & \rebuttal{$\underline{9.14\pm0.53}$} & \rebuttal{$\underline{9.76\pm0.56}$} & \rebuttal{$\underline{10.07\pm0.58}$} & \rebuttal{$\bm{10.30\pm0.28}$} & \rebuttal{$\underline{10.62\pm0.38}$} \\

    \bottomrule
    \end{tabular}
    \end{adjustbox}
    \vspace{-0.2cm}
    \caption{\rebuttal{\textbf{RMSE on real-world data based on the MIMIC-III extract (best in bold, second-best underlined.} We conduct a \emph{sanity check}, and evaluate all methods on real-world data for \emph{factual outcome prediction}. Of note, methods that perform adjustments for time-varying confounding are \underline{\textbf{not}} primarily tailored for factual outcome prediction, as there are \underline{\textbf{no}} causal interventions. Predicting factuals only requires a simple history-adjustment as in CT, CRN and TE-CDE. Yet, our \method is \textbf{highly competitive}, and is the best-performing method along with CT.}}
    \vspace{-0.5cm}
    \label{tab:results_rwd}
\end{table}

\rebuttal{$\bullet$~\textbf{Overlap sensitivity analysis:} We further evaluate the robustness to overlap violations using the synthetic tumor dataset in \textbf{Table~\ref{tab:results_overlap}}. Therein, we scale the treatment-assignment logits with a factor~$\rho$. Smaller values of~$\rho$ produce well-balanced treatment overlap, whereas larger values push the assignment probabilities toward $0$ or $1$, which induces increasingly severe overlap violations. Our \method outperforms all baselines,  which demonstrates strong robustness even when overlap deteriorates.} 

\begin{table}[h!]
    \centering
    \begin{adjustbox}{max width=\textwidth}
    \begin{tabular}{lccccccccccc}
        \toprule
        
        \rebuttal{Overlap }
        & \rebuttal{$\rho=0.5$} & \rebuttal{$\rho=0.6$} & \rebuttal{$\rho=0.7$} & \rebuttal{$\rho=0.8$} 
        & \rebuttal{$\rho=0.9$} & \rebuttal{$\rho=1.0$} & \rebuttal{$\rho=1.1$} & \rebuttal{$1.2$}
        & \rebuttal{$\rho=1.3$} & \rebuttal{$\rho=1.4$} & \rebuttal{$\rho=1.5$} \\
        \midrule

        \rebuttal{CRN \citep{Bica.2020c} }
        & \rebuttal{$2.99\pm0.26$} & \rebuttal{$3.62\pm0.87$} & \rebuttal{$3.87\pm0.36$} & \rebuttal{$4.00\pm0.52$}
        & \rebuttal{$5.34\pm1.81$} & \rebuttal{$6.17\pm1.27$} & \rebuttal{$5.85\pm1.03$} & \rebuttal{$5.30\pm0.36$}
        & \rebuttal{$5.24\pm0.55$} & \rebuttal{$5.30\pm1.51$} & \rebuttal{$5.49\pm0.90$} \\

        \rebuttal{TE-CDE \citep{Seedat.2022} }
        & \rebuttal{$2.99\pm0.13$} & \rebuttal{$3.43\pm0.22$} & \rebuttal{$3.75\pm0.55$} & \rebuttal{$4.09\pm0.43$}
        & \rebuttal{$4.10\pm0.43$} & \rebuttal{$4.29\pm0.39$} & \rebuttal{$4.37\pm0.52$} & \rebuttal{$4.73\pm0.33$}
        & \rebuttal{$4.78\pm0.67$} & \rebuttal{$4.75\pm0.68$} & \rebuttal{$4.72\pm0.75$} \\

        \rebuttal{CT \citep{Melnychuk.2022} }
        & \rebuttal{$2.60\pm0.40$} & \rebuttal{$\bm{2.72\pm0.17}$} & \rebuttal{$3.33\pm0.50$} & \rebuttal{$3.39\pm0.74$}
        & \rebuttal{$4.03\pm0.83$} & \rebuttal{$3.59\pm0.59$} & \rebuttal{$4.46\pm1.21$} & \rebuttal{$4.54\pm1.70$}
        & \rebuttal{$4.91\pm1.31$} & \rebuttal{$4.37\pm0.80$} & \rebuttal{$4.34\pm0.95$} \\
        
        \rebuttal{RMSNs \citep{Lim.2018} }
        & \rebuttal{$2.55\pm0.29$} & \rebuttal{$2.84\pm0.31$} & \rebuttal{$3.35\pm0.56$} & \rebuttal{$3.58\pm0.27$}
        & \rebuttal{$3.81\pm0.47$} & \rebuttal{$3.70\pm0.29$} & \rebuttal{$3.74\pm0.44$} & \rebuttal{$3.99\pm0.53$}
        & \rebuttal{$4.09\pm0.38$} & \rebuttal{$4.49\pm0.75$} & \rebuttal{$4.37\pm0.46$} \\

        \rebuttal{G-transformer \citep{Xiong.2024} }
        & \rebuttal{$3.90\pm1.32$} & \rebuttal{$4.74\pm1.37$} & \rebuttal{$5.93\pm1.88$} & \rebuttal{$4.91\pm1.50$}
        & \rebuttal{$6.34\pm1.80$} & \rebuttal{$5.32\pm1.85$} & \rebuttal{$6.49\pm1.87$} & \rebuttal{$5.64\pm1.92$}
        & \rebuttal{$6.56\pm2.21$} & \rebuttal{$5.64\pm1.77$} & \rebuttal{$6.70\pm2.31$} \\
        
        \rebuttal{G-Net \citep{Li.2021} }
        & \rebuttal{$3.14\pm0.27$} & \rebuttal{$3.26\pm0.53$} & \rebuttal{$4.14\pm0.74$} & \rebuttal{$4.03\pm0.46$}
        & \rebuttal{$4.61\pm0.58$} & \rebuttal{$4.35\pm0.45$} & \rebuttal{$5.01\pm0.69$} & \rebuttal{$4.50\pm0.51$}
        & \rebuttal{$5.10\pm0.67$} & \rebuttal{$4.67\pm0.59$} & \rebuttal{$5.06\pm0.44$} \\
        
\midrule
        \rebuttal{\textbf{\method}~(ours) }
        & \rebuttal{$\bm{2.53\pm0.14}$} & \rebuttal{$2.78\pm0.18$} & \rebuttal{$\bm{3.07\pm0.20}$} & \rebuttal{$\bm{3.16\pm0.12}$}
        & \rebuttal{$\bm{3.36\pm0.27}$} & \rebuttal{$\bm{3.24\pm0.12}$} & \rebuttal{$\bm{3.48\pm0.32}$}
        & \rebuttal{$\bm{3.39\pm0.18}$} & \rebuttal{$\bm{3.55\pm0.32}$} & \rebuttal{$\bm{3.48\pm0.19}$} & \rebuttal{$\bm{3.85\pm0.29}$} \\
\midrule
        \rebuttal{Rel. improvement }
        & \greentext{$0.9\%$} 
        & $-2.1\%$ 
        & \greentext{$7.7\%$} 
        & \greentext{$6.8\%$} 
        & \greentext{$11.8\%$} 
        & \greentext{$10.0\%$} 
        & \greentext{$7.0\%$} 
        & \greentext{$15.1\%$} 
        & \greentext{$13.0\%$} 
        & \greentext{$20.4\%$} 
        & \greentext{$11.2\%$} \\
    \bottomrule
    \end{tabular}
    \end{adjustbox}
    \vspace{-0.2cm}
    \caption{\rebuttal{\textbf{RMSE with overlap violations.} Based on the tumor data with $\tau = 2$ and varying levels of \textbf{overlap}. \emph{Lower values} of the overlap parameter $\rho$ indicate more \emph{balanced overlap}, whereas \emph{larger values} of $\rho$ \emph{skew the overlap towards extreme values} close to $0$ or $1$. Our \method outperforms all baselines. We highlight the relative improvement over the best-performing baseline.}}
    \label{tab:results_overlap}
\end{table}

\begin{wraptable}{r}{0.48\textwidth} 
\vspace{-0.3cm}              
\setlength{\intextsep}{0pt}          
\setlength{\columnsep}{1em}          
\centering
\begin{adjustbox}{width=\linewidth}
\begin{tabular}{lccccc}
    \toprule
    \rebuttal{Parameter }
    & \rebuttal{$\omega=0.000$} 
    & \rebuttal{$\omega=0.005$} 
    & \rebuttal{$\omega=0.010$} 
    & \rebuttal{$\omega=0.015$} 
    & \rebuttal{$\omega=0.020$} \\
    \midrule

    \rebuttal{CRN \citep{Bica.2020c}  }
    & \rebuttal{$4.42\pm0.83$}
    & \rebuttal{$4.87\pm0.45$}
    & \rebuttal{$5.51\pm0.37$}
    & \rebuttal{$5.83\pm2.02$}
    & \rebuttal{$4.65\pm0.67$} \\

    \rebuttal{TE-CDE \citep{Seedat.2022} }
    & \rebuttal{$4.41\pm0.87$}
    & \rebuttal{$4.08\pm0.26$}
    & \rebuttal{$4.01\pm0.45$}
    & \rebuttal{$4.36\pm0.50$}
    & \rebuttal{$4.20\pm0.26$} \\

    \rebuttal{CT \citep{Melnychuk.2022} }
    & \rebuttal{$4.20\pm1.01$}
    & \rebuttal{$4.12\pm0.65$}
    & \rebuttal{$4.29\pm0.91$}
    & \rebuttal{$3.78\pm0.53$}
    & \rebuttal{$4.23\pm1.06$} \\

    \rebuttal{RMSNs \citep{Lim.2018} }
    & \rebuttal{$3.65\pm0.47$}
    & \rebuttal{$4.04\pm0.61$}
    & \rebuttal{$3.77\pm0.56$}
    & \rebuttal{$3.72\pm0.65$}
    & \rebuttal{$4.11\pm0.46$} \\

    \rebuttal{G-transformer \citep{Xiong.2024} }
    & \rebuttal{$6.04\pm1.72$}
    & \rebuttal{$5.93\pm1.56$}
    & \rebuttal{$6.06\pm1.37$}
    & \rebuttal{$5.90\pm1.19$}
    & \rebuttal{$6.06\pm1.32$} \\

    \rebuttal{G-Net \citep{Li.2021} }
    & \rebuttal{$4.77\pm0.73$}
    & \rebuttal{$4.72\pm0.78$}
    & \rebuttal{$4.71\pm0.73$}
    & \rebuttal{$4.78\pm0.67$}
    & \rebuttal{$4.75\pm0.68$} \\
\midrule

    \rebuttal{\textbf{\method}~(ours) }
    & \rebuttal{$\bm{3.41\pm0.27}$}
    & \rebuttal{$\bm{3.52\pm0.22}$}
    & \rebuttal{$\bm{3.52\pm0.43}$}
    & \rebuttal{$\bm{3.65\pm0.21}$}
    & \rebuttal{$\bm{3.61\pm0.10}$} \\
\midrule
    \rebuttal{Rel. improvement  }
    & \greentext{$6.5\%$}
    & \greentext{$12.8\%$}
    & \greentext{$6.6\%$}
    & \greentext{$1.8\%$}
    & \greentext{$12.3\%$} \\
    
\bottomrule
\end{tabular}
\end{adjustbox}
\vspace{-0.3cm}
\caption{\rebuttal{\textbf{RMSE under unobserved confounding.} Our \method maintains the best performance under increasing confounding $\omega$.}}
\label{tab:results_unobserved_conf}
\vspace{-0.3cm}
\end{wraptable}

\rebuttal{$\bullet$~\textbf{Unobserved confounding:} We perform robustness checks under \emph{unobserved confounding} in \textbf{Table~\ref{tab:results_unobserved_conf}}. Hence, we introduce an unobserved confounder $U_i \sim \mathcal{N}(0,1)$ for each patient $i$ in the tumor dataset. The confounder is omitted from the observed state, but influences both treatment assignment and tumor evolution. Treatment logits for chemotherapy and radiotherapy are shifted by an additive term $0.2 \times U_i$, and therefore systematic hidden bias is injected into the assignment policy. Additionally, the tumor growth is perturbed by adding $\omega \times U_i$ inside the multiplicative growth factor.  The scalar coefficients $\omega$ controls the strength of the unobserved confounding. Of note, \emph{none} of the baselines is designed to handle unobserved confounding. Our results show that our \method remains fairly \textbf{robust}, and performance does \textbf{not} deteriorate any worse than the baselines.}

$\bullet$~\textbf{Additional results:}~We report additional \emph{ablation studies}, a \emph{sensitivity analysis} of our pseudo-outcomes, \emph{uncertainty quantification with MC dropout and kernel smoothing}, and the \emph{coefficient of variation} in \textbf{Supplement~\ref{appendix:results}}.

\underline{\textbf{Conclusion:}} In this paper, we propose the \method, a novel end-to-end method that adjusts for time-varying confounding. Therefore, we expect our \method to be an important step toward personalized medicine with machine learning.


\clearpage

\section*{Acknowledgments}
This work has been supported by the German Federal Ministry of Education and Research (Grant: 01IS24082). This paper is supported by the DAAD program ``Konrad Zuse Schools of Excellence in Artificial Intelligence'', sponsored by the Federal Ministry of Education and Research.

\bibliography{literature}
\clearpage
\appendix
\setcounter{proposition}{0}
\newpage

\section{Extended Related Work}\label{appendix:rw}

\textbf{Estimating CAPOs in the static setting:}~Extensive work on estimating potential outcomes focuses on the \emph{static} setting \citep[e.g.,][]{Alaa.2017, Frauen.2023, Johansson.2016,  Louizos.2017, Melnychuk.2023, Yoon.2018, Zhang.2020}). However, observational data such as electronic health records (EHRs) in clinical settings are typically measured \emph{over time} \citep{Allam.2021,Bica.2021b}. Additionally, treatments are rarely applied all at once but rather sequentially over time \citep{Apperloo.2024}. Therefore, the underlying assumption of these methods prohibitive and does not properly reflect medical reality. Hence, static methods are \textbf{not} tailored to accurately estimate potential outcomes when (i)~time series data is observed and (ii)~multiple treatments in the future are of interest.

\textbf{Additional literature on estimating CAPOs over time:} \rebuttal{Recently, \citet{Frauen.2025, Hess.2026} proposed model-agnostic meta-learners for heterogeneous treatment effect estimation over time. Therein, they analyze theoretically the advantages and disadvantages of several adjustment strategies such as regression adjustment, IPW, and DR estimators. Our \method uses a similar identification regression-adjustment (RA) approach, in the sense that both rely on the G-formula and estimate the conditional mean of the next outcome given the observed history. However, the core innovation of our \method lies not in adopting RA as an identification strategy, but in developing a novel end-to-end learning algorithm that implements the full multi-step G-computation recursion within a single neural architecture. Instead of fitting a separate model at each time step, our method couples representation learning with iterative pseudo-outcome learning+generation, which enables joint optimization across all time-steps.}

There are some non-parametric methods for this task \citep{Schulam.2017, Soleimani.2017b, Xu.2016}, yet these suffer from poor scalability and have limited flexibility regarding the outcome distribution, the dimension of the outcomes, and static covariate data; because of that, we do not explore non-parametric methods further but focus on neural methods instead. 

Other works are orthogonal to ours. For example, \citep{Hess.2024, Vanderschueren.2023} are approaches for informative sampling and uncertainty quantification, respectively. However, they do not focus on the causal structure, and are therefore \emph{not} primarily designed for our task of interest. Further, \citet{Hess.2025} propose the first neural method for proper causal adjustments in continuous time, which is a different stream of literature, and is not tailored for our discrete time setting.  \citet{Ma.2025} propose a neural method that imposes parametric assumptions on the DGP.

\rebuttal{\citet{Deng.2025} do not introduce a new estimator for time-varying conditional treatment effects and CAPOs; instead, they take existing models as fixed bases and add approximate Bayesian uncertainty quantification layers (deep ensembles, variational dropout, SWAG) on top. Their focus is on improving calibration and decision-making by modeling parameter uncertainty, not on changing the underlying CAPO estimand or addressing bias/variance trade-offs of the point estimates themselves. As such, their work is orthogonal to ours and could be applied on top of our \method as well.
}

Further, \citet{Wang.2025} \rebuttal{and \citet{Wu.2024}} try to estimate the entire distribution of CAPOs over time, which is a different task from ours: \rebuttal{their estimand is the counterfactual density, not the conditional mean potential outcome (CAPO) that our paper targets. This task fundamentally differs from ours, as they require learning high-dimensional conditional distributions and sampling trajectories and are therefore unsuitable for our task.}

\rebuttal{Finally, \citet{Wang.2025b} propose a state-space-model propose a framework with decorrelation regularization. However, its learning objective is different in two essential ways: It does not perform proper adjustments and therefore does not identify the CAPO under sequential ignorability. Instead, it regularizes correlations between hidden states and treatments, which does not correspond to any identified estimand in longitudinal causal inference.  It directly predicts outcomes from hidden states learned under de-correlation penalties. As a result, its outputs are not guaranteed to estimate the CAPO.}

\textbf{Survival analysis:} Some works in survival analysis \citep{Andersen.2010, Andersen.2017, Su.2022} employ pseudo-outcomes, which is similar to our approach. However, these works are different in that they are aimed at \emph{survival outcomes} and \textbf{not} CAPOs for sequences of treatments. Further, they do \textbf{not} consider neural networks as estimators. Additionally, \citep{Andersen.2017} only considers a \textbf{single, static treatment}, and \citep{Andersen.2010} only uses \textbf{linear} estimators. Finally, \citep{Su.2022} focuses on \textbf{average} causal effects and is therefore not applicable to personalized medicine.

\textbf{G-computation and Q-learning:} Q-learning \citep{Murphy.2003, Kallus.2019c} from the reinforcement learning literature \citep{Furuta.2022,Jang.2022, Kumar.2023,Pashevich.2021, Javurek.2026} is closely related to G-computation, although both have a different purpose. 
They are similar in that they share a common goal of understanding the effect of treatments/actions, but operate in complementary domains: G-computation is grounded in causal inference for evaluating potential outcomes, whereas Q-learning is rooted in reinforcement learning to derive \emph{policies that maximize long-term rewards}. We show more details on the two in the following:

G-computation can be written as the iterative update
\begin{align}
    g_{t+\delta}^{\bar{a}}(\bar{h}_{t+\delta}^t) = \mathbb{E}[ G_{t+\delta+1}^{\bar{a}}
    \mid \bar{H}_{t+\delta}^t= \bar{h}_{t+\delta}^t, A_{t:t+\delta}=a_{t:t+\delta}],
\end{align}
In our setting, we aim to estimate
$\mathbb{E}\left[Y_{t+\tau}[a_{t:t+\tau-1}] \mid \bar{H}_t=\bar{h}_t\right]$.

However, we could also consider the expected \emph{cumulative rewards}
$\mathbb{E}\left[\Bar{Y}_{t+\tau}[a_{t:t+\tau-1}] \mid \bar{H}_t=\bar{h}_t\right]$,
where we define
$\Bar{Y}_{t+\tau}[a_{t:t+\tau-1}] = \sum_{\ell=1}^{t+\tau}\gamma^\ell{Y}_{t+\ell}[a_{t:t+\ell-1}]$ and
where $\gamma < 1$ is a so-called discount factor that weighs the importance of immediate and future rewards. One can show that the G-computation update becomes
\begin{align}
    g_{t+\delta}^{\bar{a}}(\bar{h}_{t+\delta}^t) = \mathbb{E}[Y_{t+\delta} + \gamma G_{t+\delta+1}^{\bar{a}}
    \mid \bar{H}_{t+\delta}^t= \bar{h}_{t+\delta}^t, A_{t:t+\delta}=a_{t:t+\delta}].
\end{align}
If we only care about the \emph{optimal} treatment sequence $a^\ast$ (i.e., the one that maximizes the cumulative reward), we can write
\begin{align}\label{eq:q_learning}
    g_{t+\delta}^{a^\ast}(\bar{h}_{t+\delta}^t) = \mathbb{E}[Y_{t+\delta} + \gamma \max_{a^\ast_{t+\delta + 1}}G_{t+\delta+1}^{a^\ast}
    \mid \bar{H}_{t+\delta}^t= \bar{h}_{t+\delta}^t, A_{t:t+\delta}=a^{\ast}_{t:t+\delta}] .
\end{align}
Eq.~\eqref{eq:q_learning} is known as \emph{Q-learning} in the literature on dynamic treatment regimes \citep{Murphy.2003, Kallus.2019c} and can be used to compute an optimal dynamic policy.

In reinforcement learning, one often makes \emph{additional} Markov and stationarity assumptions such that the history $\bar{h}_{t+\delta}^t$ simplifies to a single state $s_{t+\delta}$ and the function $g^{a^\ast_t}(s_{t})$ is not dependent on time. These assumptions allow us to consider infinite time-horizons and break the so-called curse of horizon \citep{Kallus.2022c, Uehara.2022}. Then, Q-learning simplifies to
\begin{align}
    g^{a^\ast_t}(s_{t}) = \mathbb{E}[Y_{t} + \gamma \max_{a^\ast_{t + 1}} G^{a^\ast}
    \mid S_{t} = s_{t}, A_{t}=a^{\ast}_{t}],
\end{align}
which is often called \emph{fitted Q-iteration} in the RL literature \citep{Kallus.2020, Uehara.2022}. In contrast, our work does not make these assumptions.

State-of-the-art neural instantiations such as \citep{Chebotar.2023} are \emph{different} to our work in that they (i)~serve the purpose of \emph{learning long-term rewards}, and (ii)~rely on \emph{restrictive Markov} assumptions. In contrast, our \method is designed to estimate CAPOs for sequences of treatments, conditionally on the entire individual patient history.

\clearpage
\section{\rebuttal{Discussion on Identifiability Assumptions and Clinical Relevance}}\label{appendix:discussion}

\rebuttal{In this work, we present a novel neural network, the \methodlong, for estimating conditional average potential outcomes (CAPOs) from observational data such as electronic health records (EHRs). Our \method addresses a \textbf{crucial question in personalized medicine}: \emph{``What would the outcome be for patient X if they were administered treatments A, B, and C sequentially over the next 5 days, given their unique clinical history?''} Unlike many existing methods that focus on static or single-point interventions \citep{Alaa.2017, Johansson.2016, Zhang.2020}, our method is specifically designed to \emph{handle the sequential nature of treatments in medical practice} -- a feature that is both realistic and necessary, as treatments are rarely applied all at once but rather sequentially over time \citep{Apperloo.2024}. With the growing availability of large-scale observational data from EHRs \citep{Allam.2021, Feuerriegel.2024, Bica.2021b} and wearable devices \citep{Battalio.2021}, there is an increasing need for robust methods that estimate the effect of multiple treatments, given the \emph{individual} patient history.}

\rebuttal{Our framework builds on three key assumptions: (i)~consistency, (ii)~positivity, and (iii)~sequential ignorability (see Section~\ref{sec:setup}). These assumptions are the \emph{standard} assumptions for estimating CAPOs over time \citep{Bica.2020c, Li.2021, Melnychuk.2022, Seedat.2022}. Notably, compared to other methods that rely on even \emph{stricter} assumptions, such as additional Markov or independence assumptions \citep{Ozyurt.2021}, our assumptions are \emph{less} restrictive. Furthermore, these assumptions are the \emph{dynamic} analogues of the standard causal inference assumptions in \emph{static} settings \citep{Alaa.2017, Muandet.2021, Johansson.2016}. Importantly, methods for the static setting implicitly impose \emph{unrealistic assumption} that treatments occur only once and that covariates and outcomes remain static over time. Such limitations can introduce significant bias in sequential decision-making contexts. In contrast, our approach models the time-varying nature of clinical interventions and patient evolution, making it less restrictive and far more aligned with real-world medical scenarios.}

\rebuttal{Further, we argue that these assumptions are both plausible and practical in medical applications. First, consistency is generally satisfied as long as EHR data is accurately and systematically recorded. Second, positivity can be ensured through thoughtful data pre-processing, such as filtering observations or applying propensity clipping. Additionally, as the scale of observational datasets grows, this assumption becomes less restrictive. Third, the sequential ignorability assumption is a standard assumption in epidemiology \citep{Little.2000}, and studies in digital health interventions may satisfy this assumption by design. Furthermore, advances in sensitivity analysis \citep{Frauen.2023, Oprescu.2023} and partial identification \citep{Duarte.2023} offer complementary pathways to relax this assumption. That is, these literature streams are \emph{orthogonal} to our work. In practice, our \method thus integrates into established workflows that include point estimation, uncertainty quantification, and sensitivity analysis.}

\rebuttal{From a practical perspective, our \method addresses key challenges in estimating CAPOs for sequences of treatments. Specifically, our \method provides a neural end-to-end solution that adjusts for time-varying confounding. On top, it neither relies on large-variance pseudo-outcomes (Proposition~\ref{prop:ipw}) nor on estimating high-dimensional probability distributions. Therefore, we are convinced that our \method is an important step towards reliable personalized medicine.}

\clearpage
\section{Proofs}

\subsection{Unbiased estimand}\label{appendix:iterative_g_comp}
\begin{proposition}
Our regression-based iterative G-computation yields the CAPO in \Eqref{eq:capo}.\end{proposition}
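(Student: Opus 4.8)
The plan is to prove the claim by backward induction on the recursion depth, taking as given the G-computation identification formula \Eqref{eq:g_comp} (derived in Supplement~\ref{appendix:g-comp}), which already expresses the CAPO in \eqref{eq:capo} as a sequence of nested conditional expectations. The goal is then purely to show that the recursively defined functions $g_{t+\delta}^{\bar a}$ in \eqref{eq:g_recurse}, together with the pseudo-outcomes in \eqref{eq:g_first} and \eqref{eq:g_random}, reproduce these nested expectations layer by layer, so that the outermost function $g_t^{\bar a}(\bar h_t)$ coincides with the right-hand side of \eqref{eq:g_comp} and hence with \eqref{eq:capo}.

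First I would fix the interventional sequence $\bar a = a_{t:t+\tau-1}$ and the conditioning history $\bar h_t$, and index the layers of \eqref{eq:g_comp} from the inside out. The base case $\delta = \tau-1$ is immediate: since $G_{t+\tau}^{\bar a} = Y_{t+\tau}$ by \eqref{eq:g_first}, the recursion \eqref{eq:g_recurse} gives
\begin{align}
g_{t+\tau-1}^{\bar a}(\bar h_{t+\tau-1}^t) = \mathbb{E}[Y_{t+\tau} \mid \bar H_{t+\tau-1}^t = \bar h_{t+\tau-1}^t,\, A_{t:t+\tau-1} = a_{t:t+\tau-1}],
\end{align}
which is exactly the innermost expectation of \eqref{eq:g_comp}. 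For the inductive step I would assume that $g_{t+\delta+1}^{\bar a}(\bar H_{t+\delta+1}^t)$ equals the $(\tau-1-\delta)$-fold nested expectation. Using the definition of the pseudo-outcome \eqref{eq:g_random}, namely $G_{t+\delta+1}^{\bar a} = g_{t+\delta+1}^{\bar a}(\bar H_{t+\delta+1}^t)$, and substituting into \eqref{eq:g_recurse}, one further outer conditional expectation over $\bar H_{t+\delta}^t, A_{t:t+\delta}$ is applied, which by the tower property of conditional expectations reproduces precisely the next layer of \eqref{eq:g_comp}. Iterating down to $\delta = 0$ yields the complete nested expression, establishing \eqref{eq:g_last} and thus the proposition.

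The hard part will be the careful bookkeeping of the conditioning sets across layers: I must verify that as the recursion unrolls, each newly introduced treatment coordinate $A_{t+\delta} = a_{t+\delta}$ is correctly absorbed into the conditioning event and that no spurious marginalization over the intervened treatments is introduced. This amounts to checking that $\bar H_{t+\delta}^t$ together with $A_{t:t+\delta}$ is measurable with respect to the finer information available at step $\delta+1$, so that the tower property applies cleanly. A second subtle point is that the substitution $G_{t+\delta+1}^{\bar a} = g_{t+\delta+1}^{\bar a}(\bar H_{t+\delta+1}^t)$ must hold pointwise rather than merely in distribution; this is justified because \eqref{eq:g_recurse} defines $g_{t+\delta+1}^{\bar a}$ as a version of the conditional expectation and \eqref{eq:g_random} evaluates it at the realized history, so a brief explicit remark suffices. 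Once these measurability and consistency details are pinned down, the remaining algebra is the routine unrolling described above.
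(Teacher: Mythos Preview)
Your proposal is correct and takes essentially the same approach as the paper: both start from the G-computation formula \eqref{eq:g_comp} and peel off the nested expectations one layer at a time by substituting the definitions $G_{t+\tau}^{\bar a}=Y_{t+\tau}$, $g_{t+\delta}^{\bar a}(\cdot)=\mathbb{E}[G_{t+\delta+1}^{\bar a}\mid\cdot]$, and $G_{t+\delta}^{\bar a}=g_{t+\delta}^{\bar a}(\bar H_{t+\delta}^t)$. The paper presents this as a direct chain of equalities from the outermost expectation inward, whereas you frame the same computation as backward induction; the measurability and pointwise-substitution caveats you raise are not addressed explicitly in the paper (they are treated as immediate from the definitions), so your write-up is slightly more careful but not substantively different.
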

\begin{proof}
For the proof, we only need to apply the definition of the pseudo-outcomes $G_{t+\delta}^{\bar{a}}$:
\begin{align}
&\mathbb{E}[Y_{t+\tau}[a_{t:t+\tau-1}] \mid \bar{H}_t=\bar{h}_t]\\
        =& \mathbb{E}\bigg\{ \mathbb{E}\bigg[ \ldots \mathbb{E}\big\{ \mathbb{E}[ 
        Y_{t+\tau} 
        \mid \Bar{H}_{t+\tau-1}^{t},A_{t:t+\tau-1}=a_{t:t+\tau-1}]\nonumber 
        \mid \Bar{H}_{t+\tau-2}^{t},A_{t:t+\tau-2}=a_{t:t+\tau-2}\big\}\label{eq:apply_gcomp}\\
        & \quad \quad \quad\ldots
        \biggl\lvert \Bar{H}_{t+1}^{t},A_{t:t+1}=a_{t:t+1}\bigg] 
        \biggl\lvert  \Bar{H}_t=\bar{h}_t ,A_{t}=a_{t}\bigg\}\\
    =& \mathbb{E}\bigg\{ \mathbb{E}\bigg[ \ldots \mathbb{E}\big\{ \mathbb{E}[ 
        G_{t+\tau}^{\bar{a}}
        \mid \Bar{H}_{t+\tau-1}^{t},A_{t:t+\tau-1}=a_{t:t+\tau-1}]
        \mid \Bar{H}_{t+\tau-2}^{t},A_{t:t+\tau-2}=a_{t:t+\tau-2}\big\}\nonumber \\
        & \quad \quad \quad\ldots
        \biggl\lvert \Bar{H}_{t+1}^{t},A_{t:t+1}=a_{t:t+1}\bigg] 
        \biggl\lvert  \Bar{H}_t=\bar{h}_t ,A_{t}=a_{t}\bigg\}\\
    =& \mathbb{E}\bigg\{ \mathbb{E}\bigg[ \ldots \mathbb{E}\big\{ g_{t+\tau-1}^{\bar{a}}(\bar{H}_{t+\tau-1}^t)
        \mid \Bar{H}_{t+\tau-2}^{t},A_{t:t+\tau-2}=a_{t:t+\tau-2}\big\}\nonumber \\
        & \quad \quad \quad\ldots
        \biggl\lvert \Bar{H}_{t+1}^{t},A_{t:t+1}=a_{t:t+1}\bigg] 
        \biggl\lvert  \Bar{H}_t=\bar{h}_t ,A_{t}=a_{t}\bigg\}\\
    =& \mathbb{E}\bigg\{ \mathbb{E}\bigg[ \ldots \mathbb{E}\big\{ G_{t+\tau-1}^{\bar{a}}
        \mid \Bar{H}_{t+\tau-2}^{t},A_{t:t+\tau-2}=a_{t:t+\tau-2}\big\}\nonumber \\
        & \quad \quad \quad\ldots
        \biggl\lvert \Bar{H}_{t+1}^{t},A_{t:t+1}=a_{t:t+1}\bigg] 
        \biggl\lvert  \Bar{H}_t=\bar{h}_t ,A_{t}=a_{t}\bigg\}\\
    =& \mathbb{E}\bigg\{ \mathbb{E}\bigg[ \ldots g_{t+\tau-2}^{\bar{a}}(\bar{H}_{t+\tau-2}^t)
        \ldots\biggl\lvert \Bar{H}_{t+1}^{t},A_{t:t+1}=a_{t:t+1}\bigg] 
        \biggl\lvert  \Bar{H}_t=\bar{h}_t ,A_{t}=a_{t}\bigg\}\\
    =& \ldots \\
    =& \mathbb{E}\bigg\{ G_{t+1}^{\bar{a}} 
        \biggl\lvert  \Bar{H}_t=\bar{h}_t ,A_{t}=a_{t}\bigg\}\\
    =& g_{t}^{\bar{a}}(\bar{h}_t),
\end{align}
where \Eqref{eq:apply_gcomp} holds due the G-computation formula (see Supplement~\ref{appendix:g-comp}).
\end{proof}
\clearpage

\subsection{Target of our \method}\label{appendix:target}
\begin{proposition}
    Our \method estimates the G-computation formula as in \Eqref{eq:g_last} and, therefore, performs proper adjustments for time-varying confounding.
\end{proposition}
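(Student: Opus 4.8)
The plan is to show that, in the population limit with a sufficiently expressive backbone $z^\phi$ and heads $\{g^\phi_\delta\}$, the fixed point of the alternating generation/learning scheme forces each head $g^\phi_\delta$ to coincide with the nested conditional expectation $g_{t+\delta}^{\bar{a}}$ of \Eqref{eq:g_recurse}. The claim then follows immediately from Proposition~\ref{prop:estimand}, which already establishes that the terminal object $g_t^{\bar{a}}(\bar{h}_t)$ of this recursion equals the CAPO in \Eqref{eq:capo}, together with the inference identity \Eqref{eq:last_gcomp_head}.

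First I would recall the variational characterization of conditional expectation: for square-integrable $V$ and measurable $W$, the minimizer of $f\mapsto \mathbb{E}[(f(W)-V)^2]$ over measurable functions is $f^\star(w)=\mathbb{E}[V\mid W=w]$. Applied to the \circledblue{B}~learning step, minimizing the population loss $\mathbb{E}[(g^\phi_\delta(Z_{t+\delta}^{\bar{A}},A_{t+\delta})-\tilde{G}_{t+\delta+1}^{\bar{a}})^2]$ over the assumed-expressive class of backbone-plus-head maps yields, at the optimum, $g^\phi_\delta(z^\phi(\bar{H}_{t+\delta}^t),A_{t+\delta})=\mathbb{E}[\tilde{G}_{t+\delta+1}^{\bar{a}}\mid \bar{H}_{t+\delta}^t,A_{t:t+\delta}]$. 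Because the head takes the treatment as an explicit argument and positivity guarantees that every treatment value lies in the support of $A_{t+\delta}$, this identity continues to hold when evaluated at the interventional value $a_{t+\delta}$, which licenses evaluation along the interventional sequence $\bar{a}$.

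Then I would carry out a backward induction on $\delta$ from $\delta=\tau-1$ down to $\delta=0$. For the base case $\delta=\tau-1$, the detached target is the factual, fully observed outcome $\tilde{G}_{t+\tau}^{\bar{a}}=Y_{t+\tau}$, so the $L^2$ minimizer gives $g^\phi_{\tau-1}=\mathbb{E}[Y_{t+\tau}\mid \bar{H}_{t+\tau-1}^t,A_{t:t+\tau-1}=a_{t:t+\tau-1}]=g_{t+\tau-1}^{\bar{a}}$, matching the innermost expectation of \Eqref{eq:g_comp}. For the inductive step, suppose $g^\phi_{\delta+1}=g_{t+\delta+1}^{\bar{a}}$; then the \circledgreen{A}~generation step produces $\tilde{G}_{t+\delta+1}^{\bar{a}}=g^\phi_{\delta+1}(Z_{t+\delta+1}^{\bar{a}},a_{t+\delta+1})=g_{t+\delta+1}^{\bar{a}}(\bar{H}_{t+\delta+1}^t)=G_{t+\delta+1}^{\bar{a}}$, i.e., the generated pseudo-outcome equals the true pseudo-outcome of \Eqref{eq:g_random}. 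Feeding this as the regression target into the learning step and invoking the variational characterization gives $g^\phi_\delta=\mathbb{E}[G_{t+\delta+1}^{\bar{a}}\mid \bar{H}_{t+\delta}^t,A_{t:t+\delta}=a_{t:t+\delta}]=g_{t+\delta}^{\bar{a}}$, which is exactly \Eqref{eq:g_recurse}. At $\delta=0$ this yields $g^\phi_0=g_t^{\bar{a}}$, so by \Eqref{eq:last_gcomp_head} the inference output $g^\phi_0(z^\phi(\bar{h}_t),a_t)$ equals $g_t^{\bar{a}}(\bar{h}_t)$, which by Proposition~\ref{prop:estimand} is the CAPO.

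The main obstacle I anticipate is justifying the coupling between the detached generation step and the attached learning step: one must argue that the alternating scheme admits the nested conditional expectations as a fixed point, and that, because each head is fit on factual treatments $A_{t+\delta}$ yet evaluated at interventional treatments $a_{t+\delta}$, it is precisely the positivity assumption that licenses this out-of-policy evaluation. Care is also needed to state the idealizing assumptions explicitly---population loss, expressive $z^\phi$ and $g^\phi_\delta$, and exact minimization---since away from this idealization the neural estimator recovers the population recursion only approximately rather than exactly.
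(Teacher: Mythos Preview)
Your argument is correct and, in several respects, more careful than the paper's own proof. The two routes are different in emphasis. The paper's proof in Supplement~\ref{appendix:target} is purely structural: it starts from the nested G-computation expression with every $\mathbb{E}$ replaced by the estimator $\hat{\mathbb{E}}$, then repeatedly substitutes the definitions $\tilde{G}_{t+\tau}^{\bar{a}}=Y_{t+\tau}$, $\hat{\mathbb{E}}[\tilde{G}_{t+\delta+1}^{\bar{a}}\mid\cdot\,]=g^\phi_\delta(\cdot)$, and $g^\phi_\delta(\cdot)=\tilde{G}_{t+\delta}^{\bar{a}}$ to collapse the nesting down to $g^\phi_0(z^\phi(\bar{h}_t),a_t)$. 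It is essentially a bookkeeping verification that the architecture is wired to mimic the form of \Eqref{eq:g_comp}; it never invokes the $L^2$ projection property, positivity, or any fixed-point reasoning about training.

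Your approach instead explains \emph{why} the trained heads should coincide with the true nested expectations: the variational characterization pins each $g^\phi_\delta$ to a conditional expectation of its target, and backward induction on $\delta$ then propagates correctness from the observed $Y_{t+\tau}$ inward. This buys you an explicit statement of the idealizations (expressive function class, population loss, exact minimization) under which ``estimates'' becomes ``equals'', and it surfaces the role of positivity in moving from factual $A_{t+\delta}$ during learning to interventional $a_{t+\delta}$ during generation---points the paper's proof leaves implicit. Both arguments establish the proposition; yours is closer to a consistency argument for the estimator, while the paper's is a symbolic check that the network implements the iterative recursion by construction.
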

\begin{proof}
    For the proof, we perform the steps as in Supplement~\ref{appendix:iterative_g_comp}:
    \begin{align}
    &\hat{\mathbb{E}}[Y_{t+\tau}[a_{t:t+\tau-1}] \mid \bar{H}_t=\bar{h}_t] \\
        =& \hat{\mathbb{E}}\bigg\{ \hat{\mathbb{E}}\bigg[ \ldots \hat{\mathbb{E}}\big\{ \hat{\mathbb{E}}[ 
        Y_{t+\tau} 
        \mid \Bar{H}_{t+\tau-1}^{t},A_{t:t+\tau-1}=a_{t:t+\tau-1}]
        \mid \Bar{H}_{t+\tau-2}^{t},A_{t:t+\tau-2}=a_{t:t+\tau-2}\big\}\nonumber\\
        & \quad \quad \quad\ldots
        \biggl\lvert \Bar{H}_{t+1}^{t},A_{t:t+1}=a_{t:t+1}\bigg] 
        \biggl\lvert  \Bar{H}_t=\bar{h}_t ,A_{t}=a_{t}\bigg\}\label{eq:apply_gcomp2}\\
    =& \hat{\mathbb{E}}\bigg\{ \hat{\mathbb{E}}\bigg[ \ldots \hat{\mathbb{E}}\big\{ \hat{\mathbb{E}}[   \tilde{G}_{t+\tau}^{\bar{a}}
        \mid \Bar{H}_{t+\tau-1}^{t},A_{t:t+\tau-1}=a_{t:t+\tau-1}]
        \mid \Bar{H}_{t+\tau-2}^{t},A_{t:t+\tau-2}=a_{t:t+\tau-2}\big\}\nonumber\\
        & \quad \quad \quad\ldots
        \biggl\lvert \Bar{H}_{t+1}^{t},A_{t:t+1}=a_{t:t+1}\bigg] 
        \biggl\lvert  \Bar{H}_t=\bar{h}_t ,A_{t}=a_{t}\bigg\}\\
    =& \hat{\mathbb{E}}\bigg\{ \hat{\mathbb{E}}\bigg[ \ldots \hat{\mathbb{E}}\big\{ 
        g^\phi_{\tau-1}(a_{t+\tau-1}, z^\phi(\Bar{H}_{t+\tau-1}, a_{t:t+\tau-2}))
        \mid \Bar{H}_{t+\tau-2}^{t},A_{t:t+\tau-2}=a_{t:t+\tau-2}\big\}\nonumber\\
        & \quad \quad \quad\ldots
        \biggl\lvert \Bar{H}_{t+1}^{t},A_{t:t+1}=a_{t:t+1}\bigg] 
        \biggl\lvert  \Bar{H}_t=\bar{h}_t ,A_{t}=a_{t}\bigg\}\\
    =& \hat{\mathbb{E}}\bigg\{ \hat{\mathbb{E}}\bigg[ \ldots \hat{\mathbb{E}}\big\{ 
        \tilde{G}_{t+\tau-1}^{\bar{a}}
        \mid \Bar{H}_{t+\tau-2}^{t},A_{t:t+\tau-2}=a_{t:t+\tau-2}\big\}\nonumber\\
        & \quad \quad \quad\ldots
        \biggl\lvert \Bar{H}_{t+1}^{t},A_{t:t+1}=a_{t:t+1}\bigg] 
        \biggl\lvert  \Bar{H}_t=\bar{h}_t ,A_{t}=a_{t}\bigg\}\\
    =& \hat{\mathbb{E}}\bigg\{ \hat{\mathbb{E}}\bigg[ \ldots 
        g^\phi_{\tau-2}(a_{t+\tau-2}, z^\phi(\Bar{H}_{t+\tau-2}, a_{t:t+\tau-3}))
        \ldots
        \biggl\lvert \Bar{H}_{t+1}^{t},A_{t:t+1}=a_{t:t+1}\bigg] 
        \biggl\lvert  \Bar{H}_t=\bar{h}_t ,A_{t}=a_{t}\bigg\}\\
    =& \ldots\\
    =& \hat{\mathbb{E}}\bigg\{ \tilde{G}_{t+1}^{\bar{a}} \biggl\lvert  \Bar{H}_t=\bar{h}_t ,A_{t}=a_{t}\bigg\}\\
    =& g^\phi_0(a_t, z^\phi(\bar{h}_t)).
\end{align}
\end{proof}

\clearpage

\subsection{Variance of inverse propensity weighting}\label{appendix:variance}

In this section, we compare two possible approaches to adjust for time-varying confounders: G-computation and inverse propensity weighting (IPW) \citep{Robins.2009,Robins.2000}, which is leveraged by the existing baselines, namely, the RMSNs \citep{Lim.2018}, and continuous time versions as in \citep{Hess.2025}. 

For a fair comparison of G-computation and IPW, we compare the \emph{variance of the ground-truth pseudo-outcomes} that each method relies on -- that is, the $G_{t+\delta}^{\bar{a}}$ of our \method and the inverse propensity weighted outcomes of RMSNs. Importantly, a larger variance of the pseudo-outcomes will directly translate into a larger variance of the respective estimator. We find that IPW leads to a larger variance, which is why we prefer G-computation in our \method.

\vspace{0.3cm}

\begin{proposition}
    Pseudo-outcomes constructed via inverse propensity weighting have larger variance than pseudo-outcomes in our \methodlong.
\end{proposition}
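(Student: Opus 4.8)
The plan is to treat the two constructions as competing \emph{unbiased} estimators of the same target---the CAPO in \Eqref{eq:capo}---and to show that the G-computation pseudo-outcome is a conditional expectation (a Rao--Blackwellization) of the IPW pseudo-outcome, so that the law of total variance forces the IPW variance to be at least as large. First I would fix notation for the IPW pseudo-outcome used by RMSNs,
\begin{align}
\psi^{\mathrm{IPW}} = Y_{t+\tau}\prod_{s=t}^{t+\tau-1}\frac{\mathbbm{1}\{A_s=a_s\}}{\pi_s}, \qquad \pi_s=\mathbb{P}(A_s=a_s\mid \bar{H}_s),
\end{align}
and recall from \Eqref{eq:g_first}--\Eqref{eq:g_random} that the \method pseudo-outcomes are the nested conditional expectations $G_{t+\delta}^{\bar{a}}$, with $G_{t+\tau}^{\bar{a}}=Y_{t+\tau}$. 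Both identify the CAPO (by the G-formula and Proposition~\ref{prop:estimand} for the former, and by the standard IPW identity for the latter), so they share the same conditional mean given $\bar{H}_t=\bar{h}_t$; the contest is then purely one of variance.

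The key step is a telescoping smoothing argument. The elementary identity $\mathbb{E}[\tfrac{\mathbbm{1}\{A_s=a_s\}}{\pi_s}\,f\mid \mathcal{F}]=\mathbb{E}[f\mid\mathcal{F},A_s=a_s]$, valid under positivity, shows that integrating out the innermost treatment--outcome pair converts one inverse-propensity factor of $\psi^{\mathrm{IPW}}$ into an honest conditional expectation. Iterating along the sequential filtration $\mathcal{G}_{\delta}=\sigma(\bar{H}_{t+\delta}^t,A_{t:t+\delta-1})$ produces a chain
\begin{align}
\psi^{\mathrm{IPW}}=\psi^{(0)},\qquad \psi^{(k)}=\mathbb{E}\big[\psi^{(k-1)}\mid \mathcal{G}_{\tau-k}\big]=\Big(\prod_{s=t}^{t+\tau-k-1}\tfrac{\mathbbm{1}\{A_s=a_s\}}{\pi_s}\Big)G_{t+\tau-k}^{\bar{a}},
\end{align}
so each step strips one weight and replaces the inner content by the corresponding $G^{\bar{a}}$. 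Since $\psi^{(k)}$ is a conditional expectation of its predecessor, the law of total variance gives $\mathrm{Var}(\psi^{(k-1)})=\mathbb{E}[\mathrm{Var}(\psi^{(k-1)}\mid\mathcal{G}_{\tau-k})]+\mathrm{Var}(\psi^{(k)})\ge \mathrm{Var}(\psi^{(k)})$, while the means are preserved throughout. Chaining these inequalities down to the level at which the weights have been removed yields $\mathrm{Var}(\psi^{\mathrm{IPW}})\ge \mathrm{Var}(G_{t+\delta}^{\bar{a}})$ for the matched target, which is the claim.

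To make the inflation explicit---and to see that the inequality is generically strict---I would also carry out the one-step second-moment computation at the innermost level: using $\mathbbm{1}\{A_s=a_s\}^2=\mathbbm{1}\{A_s=a_s\}$, conditioning on the history replaces a squared inverse-propensity weight by a single factor $\pi_s^{-1}\ge 1$ (by positivity), so the IPW second moment strictly exceeds its conditional-expectation counterpart whenever overlap is imperfect. Because the two pseudo-outcomes share the same conditional mean, this surplus is exactly the extra variance, and it compounds multiplicatively across the $\tau$ weighting factors, which is precisely the regime of weak overlap and large horizon where division by near-zero propensities becomes catastrophic.

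The main obstacle I expect is the bookkeeping around the sequential conditioning: one must choose $\mathcal{G}_\delta$ so that the weight factors for past times are measurable (hence pull out) while the current treatment is integrated, and verify that the indicator identity invokes the same propensity $\pi_s$ on the event $\{A_{t:s-1}=a_{t:s-1}\}$ that appears in $\psi^{\mathrm{IPW}}$. A secondary subtlety is pinning down the precise conditioning set so the comparison is fair---matching the regressors each method uses---which determines exactly which $G_{t+\delta}^{\bar{a}}$ the telescoping should terminate at; once that alignment is fixed, the variance inequality follows immediately from the nonnegativity of the discarded conditional-variance terms.
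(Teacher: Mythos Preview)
Your proposal is correct and in fact more complete than the paper's own argument. The paper proves the claim only in the \emph{static} setting---comparing $\mathrm{Var}\!\big[\tfrac{YA}{\pi(X)}\big]$ to $\mathrm{Var}\!\big[\mathbb{E}[Y\mid X,A{=}1]\big]$ by a direct second-moment calculation (expanding both variances, using $1/\pi(X)\ge 1$ and $\mathbb{E}[Y^2\mid X,A{=}1]\ge \mathbb{E}[Y\mid X,A{=}1]^2$)---and then asserts without further detail that the time-varying case follows analogously. Your Rao--Blackwell/law-of-total-variance telescoping handles the sequential case directly: recognising that the G-computation pseudo-outcome is obtained from the IPW pseudo-outcome by a chain of conditional expectations along the filtration $\mathcal{G}_\delta$ immediately gives the variance ordering, and your one-step second-moment computation for strictness is precisely the paper's static calculation recast at a single inner layer. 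What your approach buys is a clean treatment of the full product of $\tau$ inverse propensities without hand-waving the extension; what the paper's approach buys is brevity, since a single static computation suffices if one is willing to leave the multi-step generalisation to the reader. The bookkeeping concern you flag (measurability of past weight factors with respect to $\mathcal{G}_\delta$, and matching $\pi_s$ on the event $\{A_{t:s-1}=a_{t:s-1}\}$) is real but routine, and your description already identifies the right resolution.
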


\begin{proof}
    To simplify notation, we consider the variance of the pseudo-outcomes in the \emph{static setting}. The analog directly translates into the time-varying setting. 

    Let $Y$ be the outcome, $X$ the covariates, and $A$ the treatment. Without loss of generality, we consider the potential outcome for $A=1$.

    For G-computation, the variance of the pseudo-outcome $g^1(X)$ is given by
    \begin{align}
    \text{Var}[g^1(X)]
        &= \text{Var}[\mathbb{E}[Y\mid X, A=1]]\\
    &=\mathbb{E}\Big[\mathbb{E}[Y\mid X, A=1]^2\Big] - \mathbb{E}\Big[\mathbb{E}[Y\mid X, A=1]\Big]^2\\
    &=\mathbb{E}\Big[\mathbb{E}[Y\mid X, A=1]^2\Big] - \mathbb{E}\Big[Y[1]\Big]^2.
    \end{align}
    For IPW, the variance of the pseudo-outcome is
    \begin{align}
        \text{Var}\Big[\frac{YA}{\pi(X)}\Big] &= \mathbb{E}\Big[ \left( \frac{YA}{\pi(X)}\right)^2\Big] - \mathbb{E}\Big[ \frac{YA}{\pi(X)}\Big]^2\\
    &=\mathbb{E}\Big[ \mathbb{E}\Big[ \frac{Y^2A}{\pi^2(X)} \mid X \Big]\Big] - \mathbb{E}\Big[ Y[1] \Big]^2\\
    &=\mathbb{E}\Big[ \mathbb{E}\Big[ \frac{Y^2\pi(X)}{\pi^2(X)} \mid X, A=1 \Big]\Big] - \mathbb{E}\Big[ Y[1] \Big]^2\\
    &=\mathbb{E}\Big[ \underbrace{\frac{1}{\pi(X)}}_{\geq 1} \mathbb{E}[ Y^2 \mid X, A=1 ]\Big] - \mathbb{E}\Big[ Y[1] \Big]^2,
\end{align}
and, with 
\begin{align}
    \mathbb{E}[Y\mid X, A=1]^2+\underbrace{\text{Var}[Y\mid X,A=1]}_{\geq 0} = \mathbb{E}[Y^2 \mid X, A=1] ,
\end{align}
we have that 
\begin{align}
    \text{Var}\Big[\frac{YA}{\pi(X)}\Big] \geq \text{Var}[g^1(X)].
\end{align}
Therefore, we conclude that G-computation leads to a lower variance than IPW, and, hence, our \method has a lower variance than RMSNs.
\end{proof}

\textbf{Remarks:}
\begin{itemize}
    \item The inverse propensity weight is what really drives the difference in variance between the approaches. Note that, in the time-varying setting, IPW relies on \emph{products of inverse propensities}, which can lead to even more extreme weights for multi-step ahead predictions. This is expected: propensity weights are typically small and close to zero, especially in time-series settings. Hence, by diving to a quantity close to zero, the variance can naturally explode. 
    \item IPW is particularly problematic when there are overlap violations in the data, as this implies propensity scores close to zero and thus division by values that are close to zero. However, as the input history $\bar{H}_t$ in the time-varying setting is very high-dimensional (i.e., $t\times (d_x+d_y)$-dimensional), overlap violations are even more problematic. This is another advantage of our method.
\end{itemize}

\clearpage
\section{Derivation of G-computation for CAPOs}\label{appendix:g-comp}
\begin{figure}[h]
    \centering
    \includegraphics[width=0.78\textwidth, trim=2.6cm 18.5cm 1.03cm 4.8cm, clip]{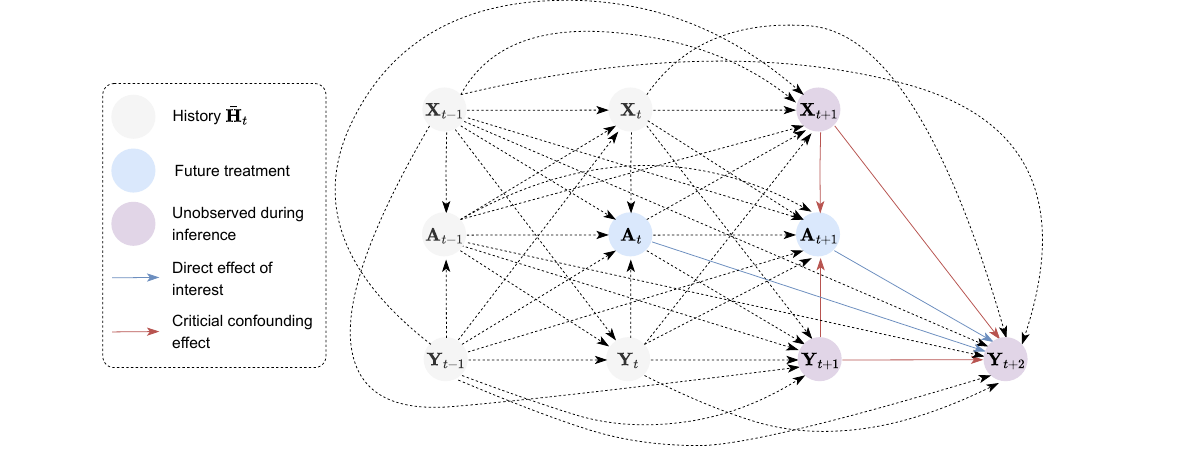} 
    \caption{During inference, future time-varying confounders are \emph{unobserved} (here: $(X_{t+1},Y_{t+1})$). In order to estimate CAPOs for an interventional treatment sequence without \textbf{time-varying confounding bias}, proper causal adjustments such as G-computation are required.}
    \label{fig:causal_graph}
\end{figure}
In the following, we provide a derivation of the G-computation formula \citep{Bang.2005, Robins.1999, Robins.2009} for CAPOs over time. Recall that G-computation for CAPOs is given by
\begin{align}
        &\mathbb{E}[Y_{t+\tau}[a_{t:t+\tau-1}] \mid \bar{H}_t=\bar{h}_t]\nonumber \\
        =& \mathbb{E}\bigg\{ \mathbb{E}\bigg[ \ldots \mathbb{E}\big\{ \mathbb{E}[ 
        Y_{t+\tau} 
        \mid \Bar{H}_{t+\tau-1}^{t},A_{t:t+\tau-1}=a_{t:t+\tau-1}]
        \mid \Bar{H}_{t+\tau-2}^{t},A_{t:t+\tau-2}=a_{t:t+\tau-2}\big\}\\
        & \quad \quad \quad\ldots
        \biggl\lvert \Bar{H}_{t+1}^{t},A_{t:t+1}=a_{t:t+1}\bigg] 
        \biggl\lvert  \Bar{H}_t=\bar{h}_t ,A_{t}=a_{t}\bigg\}.\nonumber
\end{align}\normalsize
The following derivation follows the steps in \citep{Frauen.2023b} and extends them to CAPOs:
\begin{align}
    &\mathbb{E}[Y_{t+\tau}[a_{t:t+\tau-1}] \mid \bar{H}_t=\bar{h}_t]\nonumber \\
    =&\mathbb{E}[Y_{t+\tau}[a_{t:t+\tau-1}] \mid \bar{H}_t=\bar{h}_t, {A}_t={a}_t]\label{eq:pos_ign0} \\
    =& \mathbb{E}[\mathbb{E}\{Y_{t+\tau}[a_{t:t+\tau-1}] \mid \bar{H}_{t+1}^t ,{A}_t={a}_t\}\label{eq:total_prob}\\
    & \qquad\qquad\qquad\qquad\quad\:\: \mid \bar{H}_t=\bar{h}_t, {A}_t={a}_t]\nonumber\\
    =& \mathbb{E}[\mathbb{E}\{Y_{t+\tau}[a_{t:t+\tau-1}] \mid \bar{H}_{t+1}^t , A_{t:t+1}=a_{t:t+1}\}\label{eq:pos_ign}\\
    & \qquad\qquad\qquad\qquad\quad\:\:\mid \bar{H}_t=\bar{h}_t, {A}_t={a}_t]\nonumber\\
    =& \mathbb{E}[ \mathbb{E}\{\mathbb{E}[Y_{t+\tau}[a_{t:t+\tau-1}] \mid \bar{H}_{t+2}^t, A_{t:t+1}=a_{t:t+1}]\label{eq:tower_prop} \\
    & \qquad\qquad\qquad\qquad\qquad\:\: \mid \bar{H}_{t+1}^t, A_{t:t+1}=a_{t:t+1}\}\nonumber\\
    & \qquad\qquad\qquad\qquad\qquad\:\: \mid \bar{H}_t=\bar{h}_t, {A}_t={a}_t]\nonumber\\
    =& \mathbb{E}[ \mathbb{E}\{\mathbb{E}[Y_{t+\tau}[a_{t:t+\tau-1}] \mid \bar{H}_{t+2}^t, A_{t:t+2}=a_{t:t+2}]\label{eq:pos_ign2}\\
    & \qquad\qquad\qquad\qquad\qquad\:\: \mid \bar{H}_{t+1}^t, A_{t:t+1}=a_{t:t+1}\} \nonumber\\
    & \qquad\qquad\qquad\qquad\qquad\:\:\mid \bar{H}_t=\bar{h}_t, {A}_t={a}_t]\nonumber\\
    =& \ldots\nonumber\\
    =& \mathbb{E}[ \ldots\mathbb{E}\{ \mathbb{E}[Y_{t+\tau}[a_{t:t+\tau-1}] \mid \bar{H}_{t+\tau-1}^t, A_{t:t+\tau-1}=a_{t:t+\tau-1}]\label{eq:repeat}\\
    & \qquad\qquad\qquad\qquad\qquad\quad\;\;\, \mid \bar{H}_{t+\tau-2}^t, A_{t:t+\tau-2}=a_{t:t+\tau-2}\}\nonumber\\
    & \qquad\qquad\qquad\qquad\qquad\quad\;\;\, \mid \ldots\nonumber\\
    & \qquad\qquad\qquad\qquad\qquad\quad\;\;\, \mid \bar{H}_{t} =\bar{h}_{t}, A_t=a_t]\nonumber\\
    =& \mathbb{E}[ \ldots\mathbb{E}\{ \mathbb{E}[Y_{t+\tau} \mid \bar{H}_{t+\tau-1}^t, A_{t:t+\tau-1}=a_{t:t+\tau-1}]\label{eq:consistency}\\
     & \qquad\qquad\qquad\qquad\qquad\quad\;\;\, \mid \bar{H}_{t+\tau-2}^t, A_{t:t+\tau-2}=a_{t:t+\tau-2}\}\nonumber\\
    & \qquad\qquad\qquad\qquad\qquad\quad\;\;\, \mid \ldots\nonumber\\
    & \qquad\qquad\qquad\qquad\qquad\quad\;\;\, \mid \bar{H}_{t} =\bar{h}_{t}, A_t=a_t],\nonumber
\end{align}
\normalsize
where \Eqref{eq:pos_ign0} follows from the positivity and sequential ignorability assumptions, \Eqref{eq:total_prob} holds due to the law of total probability, \Eqref{eq:pos_ign} again follows from the positivity and sequential ignorability assumptions, \Eqref{eq:tower_prop} is the tower rule, \Eqref{eq:pos_ign2} is again due to the positivity and sequential ignorability assumptions, \Eqref{eq:repeat} follows by iteratively repeating the previous steps, and \Eqref{eq:consistency} follows from the consistency assumption.

\clearpage
\section{Examples}\label{appendix:examples}

To illustrate how regression-based iterative G-computation works, we apply the procedure to two examples. First, we show the trivial case for $(\tau=1)$-step-ahead predictions and, then, for $(\tau=2)$-step-ahead predictions. Recall that the following only holds under our standard assumptions (i)~\emph{consistency}, (ii)~\emph{positivity}, and (iii)~\emph{sequential ignorability}.

$\bullet$\,\underline{$(\tau=1)$-step-ahead prediction:}

This is the trivial case, as there is \emph{no time-varying confounding}. Instead, all confounders are observed in the history. Therefore, we can simply condition on the observed history and resemble the \emph{backdoor-adjustment} from the static setting. Importantly, this is \textbf{not} the focus of our work, but we show it for illustrative purposes:
\begin{align}
    & \mathbb{E} \big[ Y_{t+1}[a_{t}] \mid \bar{H}_t = \bar{h}_t \big] \\
    \underbrace{=}_{\text{Ass. (ii)+(iii)}} & \mathbb{E} \big[ Y_{t+1}[a_{t}] \mid \bar{H}_t = \bar{h}_t, A_t = a_t \big]\\
    \underbrace{=}_{\text{Ass. (i)}} & \mathbb{E} \big[ Y_{t+1} \mid \bar{H}_t = \bar{h}_t, A_t = a_t \big]\\
    \underbrace{=}_{\text{Def. }G_{t+1}^{\bar{a}}} &  \mathbb{E} \big[ G_{t+1}^{\bar{a}} \mid \bar{H}_t = \bar{h}_t, A_t = a_t \big]\\
    \underbrace{=}_{\text{Def. }g_{t}^{\bar{a}}} & g_t^{\bar{a}}(\bar{h}_t).
\end{align}

$\bullet$\,\underline{$(\tau=2)$-step-ahead prediction:}

Importantly, $(\tau=2)$-step-ahead predictions already incorporate all the difficulties that are present for multi-step ahead predictions. Here, we need to account for future time-varying confounders such as $(X_{t+1}, Y_{t+1})$ as in Figure~\ref{fig:causal_graph}:
\begin{align}
&  \mathbb{E} \big[ Y_{t+2}[a_{t:t+1}] \mid \bar{H}_t = \bar{h}_t \big]\\
\underbrace{=}_{\text{Ass. (ii)+(iii)}} &\mathbb{E} \big[ Y_{t+2}[a_{t:t+1}] \mid \bar{H}_t = \bar{h}_t , A_t = a_t\big]\\
\underbrace{=}_{\text{Law of total prob.}} & \mathbb{E}\Big[\mathbb{E} \big[ Y_{t+2}[a_{t:t+1}] \mid \bar{H}_{t+1}^t, A_t=a_t \big] \mid \bar{H}_t = \bar{h}_t , A_t = a_t\Big]\\
\underbrace{=}_{\text{Ass. (ii)+(iii)}} & \mathbb{E}\Big[\mathbb{E} \big[ Y_{t+2}[a_{t:t+1}] \mid \bar{H}_{t+1}^t, A_{t:t+1}=a_{t:t+1} \big] \mid \bar{H}_t = \bar{h}_t , A_t = a_t\Big]\\
\underbrace{=}_{\text{Ass. (i)}} & \mathbb{E}\Big[\mathbb{E} \big[ Y_{t+2} \mid \bar{H}_{t+1}^t, A_{t:t+1}=a_{t:t+1} \big] \mid \bar{H}_t = \bar{h}_t , A_t = a_t\Big]\\
\underbrace{=}_{\text{Def. }G_{t+2}^{\bar{a}}} &  \mathbb{E}\Big[\mathbb{E} \big[ G_{t+2}^{\bar{a}} \mid \bar{H}_{t+1}^t, A_{t:t+1}=a_{t:t+1} \big] \mid \bar{H}_t = \bar{h}_t , A_t = a_t\Big]\\
\underbrace{=}_{\text{Def. }g_{t+1}^{\bar{a}}} &  \mathbb{E} \big[g_{t+1}^{\bar{a}}(\bar{H}_{t+1}^{t}) \mid \bar{H}_t = \bar{h}_t, A_t = a_t  \big]\\
\underbrace{=}_{\text{Def. } G_{t+1}^{\bar{a}}} & =  \mathbb{E} \big[G_{t+1}^{\bar{a}} \mid \bar{H}_t = \bar{h}_t, A_t = a_t  \big]\\
\underbrace{=}_{\text{Def. } g_{t}^{\bar{a}}} &  g_t^{\bar{a}}(\bar{h}_t).
\end{align}

\clearpage

\section{Additional results}\label{appendix:results}

In the following, we report additional results:
\begin{enumerate} 
\item We first compare our \textbf{ablations} from Section~\ref{sec:experiments} to the baselines and report \textbf{additional prediction windows for semi-synthetic data} in Supplement~\ref{appendix:ablation}. 
\item We perform a \textbf{sensitivity analysis} w.r.t. our generated pseudo-outcomes in Supplement~\ref{appendix:sensitivity} and, thereby, confirm that our \method generates meaningful pseudo-outcomes in the iterative learning algorithm. 
\item We finally report the \textbf{coefficient of variation} of our \method and the baselines in Supplement~\ref{appendix:coefficient}, which further supports the stability of our \method.
\end{enumerate}

\subsection{\rebuttal{Additional results and ablations}}\label{appendix:ablation}

In the following, we report the performance of two ablations: the \textbf{(A)~IGC-LSTM} and the \textbf{(B)~biased transformer (BT)}. For this, we show \textbf{(C)~additional results} of our \method, the baselines, and the two ablations in \textbf{Figure~\ref{fig:synth_conf}} and \textbf{Figure~\ref{fig:semi_tau}}. 

$\bullet$~{\textbf{IGC-LSTM:}} Our first ablation is the \emph{IGC-LSTM}. For this, we replaced the transformer backbone $z^\phi(\cdot)$ of our \method by a simple LSTM network. We find that our \textbf{IGC-LSTM is highly effective}: it outperforms all baselines from the literature while our proposed IGC-transformer is still superior. This demonstrates that our novel method for iterative regression-based G-computation is both effective and general. 

$\bullet$~{\textbf{BT:}} We implement a \emph{biased transformer (BT)}. Here, we leverage the same transformer backbone $z^\phi(\cdot)$ as in our \method, but we directly train the output heads $g^\phi_\delta(\cdot)$ on the factual data. Thereby, the BT refrains from performing G-computation. We can thus isolate the contribution of the iterative G-computation to the overall performance. Our results show that the \textbf{BT suffers from significant bias} and, therefore, demonstrate that our proper adjustments for time-varying confounders are required for accurate estimates of the counterfactual outcomes.

We report additional results on
\begin{enumerate}
    \item tumor growth data, where we report the performance of all methods for\textbf{ lower levels of confounding} in \textbf{Figure~\ref{fig:synth_conf}}.
    \item \rebuttal{on MIMIC-III semi-synthetic data, where we report \textbf{additional prediction windows} up to $\tau=12$ for $N=1000$ in \textbf{Figure~\ref{fig:semi_tau}}.}
\end{enumerate}

\clearpage

\begin{figure}[h]
    \begin{center}
    \includegraphics[width=\linewidth, trim=0.cm 0.13cm 0.0cm 1.5cm, clip]{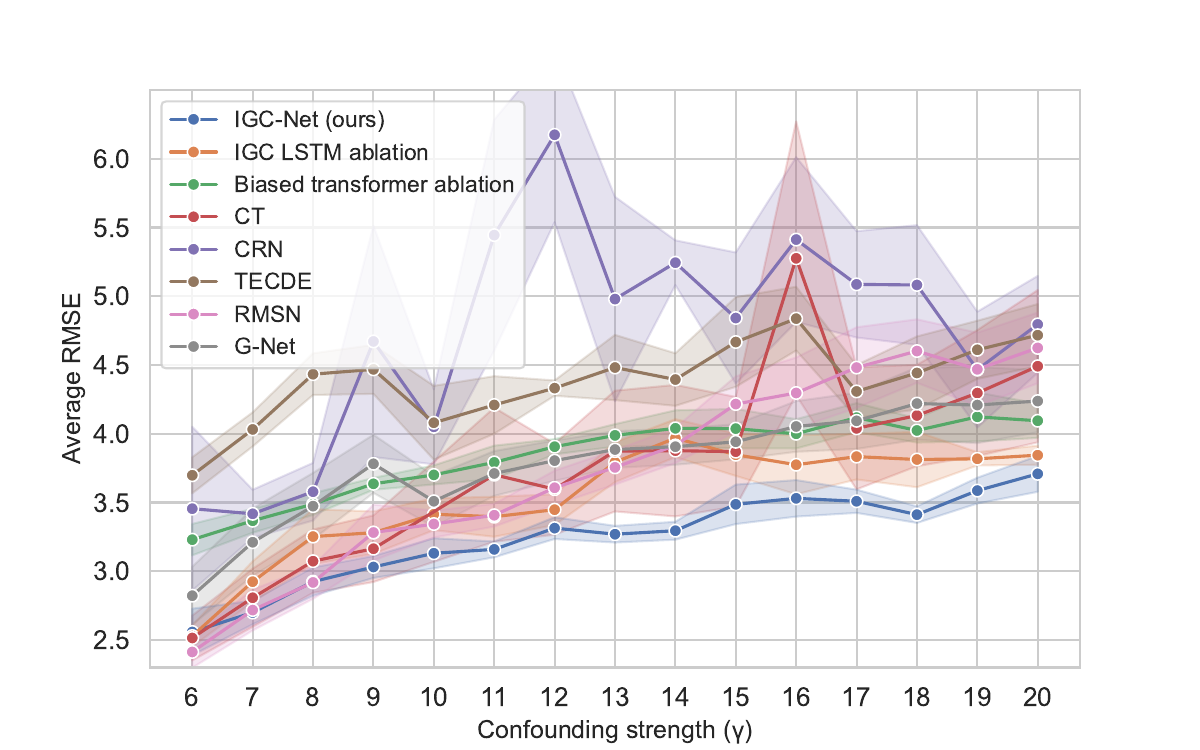}
    \captionof{figure}{Tumor growth data: We report previous results of the baselines with the \textbf{new ablations: IGC-LSTM and BT}. $\rightarrow$~Notably, our {IGC-LSTM has competitive performance}, while {BT} suffers from {significant bias}. Our \emph{IGC-transformer remains the strongest method}.}\label{fig:synth_conf}
    \end{center}
\end{figure}

\begin{figure}[h]
    \begin{center}
    \includegraphics[width=\linewidth, trim=0.cm 0.13cm 0.0cm 1.5cm, clip]{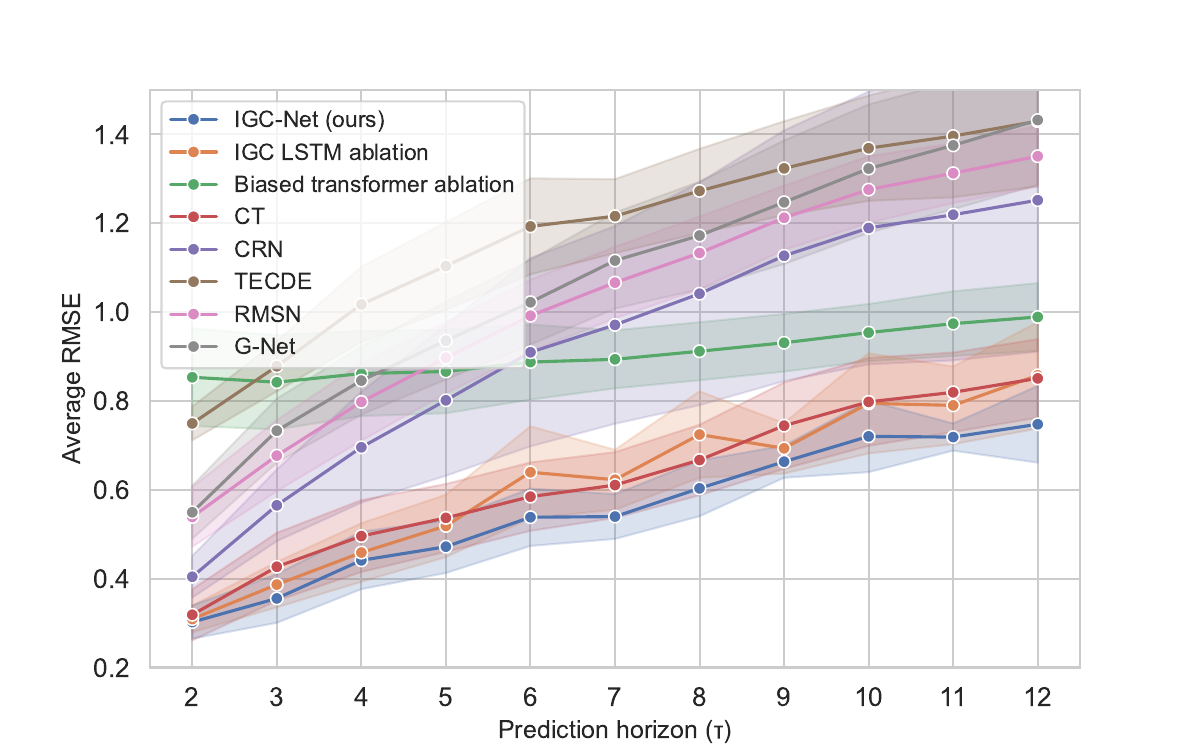}
    \caption{\rebuttal{Semi-synthetic data: We \textbf{increase the prediction horizon} up to $\tau=12$ for $N=1000$ training samples. We further \textbf{implement two ablations}: our {IGC-LSTM} and the {biased transformer (BT)}. $\rightarrow$~As in \textbf{Figure~\ref{fig:synth_conf}}, our {IGC-LSTM} almost consistently outperforms the baselines, while the {BT has large errors}. Our \emph{IGC-transformer remains the best for all prediction windows}. {This shows that our novel approach for G-computation leads to accurate predictions, irrespective of the neural backbone. Further, it shows that proper adjustments are important for counterfactual outcome estimation.}}}\label{fig:semi_tau}
    \end{center}
\end{figure}

\clearpage

\subsection{Sensitivity to noise in pseudo-outcomes}\label{appendix:sensitivity}

Finally, we provide more insights into the quality of the generated pseudo-outcomes $\tilde{G}_{t+\delta}^{\bar{a}}$ in Figure~\ref{fig:corruption}. Here, we added increasing levels of constant bias to the pseudo-outcomes during training. Our results show that these artificial corruptions indeed lead to a significant decrease in the overall performance of our \method. We therefore conclude that, without artificial corruption, our generated pseudo-outcomes are good estimates of the true nested expectations. Further, this shows that correct estimates of the pseudo-outcomes are indeed necessary for high-quality, unbiased estimates. Of note, the quality of the predicted pseudo-outcomes is also directly validated by the strong empirical performance in Section~\ref{sec:experiments}.

\begin{figure}[h]
    \begin{center}
    \includegraphics[width=0.8\linewidth, trim=0.cm 0.13cm 0.0cm 1.5cm, clip]{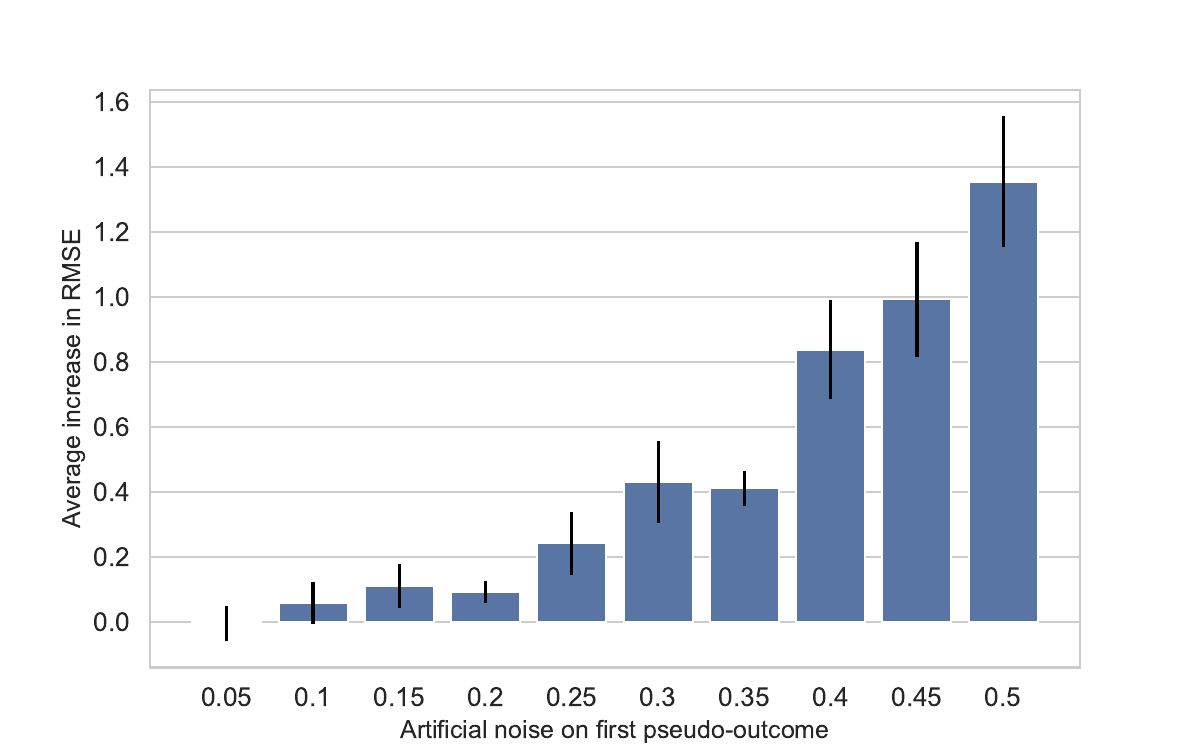}
    \captionof{figure}{During training, we add \textbf{artificial levels of noise to the pseudo-outcomes} of our \method (prediction window $\tau=2$, confounding strength $\gamma=10$ on synthetic data). $\rightarrow$~We see that performance quickly deteriorates. This is expected, as it implies that the pseudo-outcomes generated by our \method are meaningful and important for accurate, unbiased predictions.}\label{fig:corruption}
    \end{center}
\end{figure}

\newpage

\subsection{Uncertainty quantification}

\rebuttal{We can additionally equip our \method with uncertainty quantification, e.g., with dropout \citep{Gal.2016}, a standard and lightweight approach for predictive uncertainty estimation in neural networks. Because our model is fully differentiable and regression-based, MC dropout and similar approaches \citep{Deng.2025} can be applied without modifying the our \method, which makes uncertainty quantification straightforward to incorporate.}

\begin{figure}[h]
    \centering
    \includegraphics[width=0.6\linewidth, trim=0.cm 0.cm 0.cm 0.cm, clip]{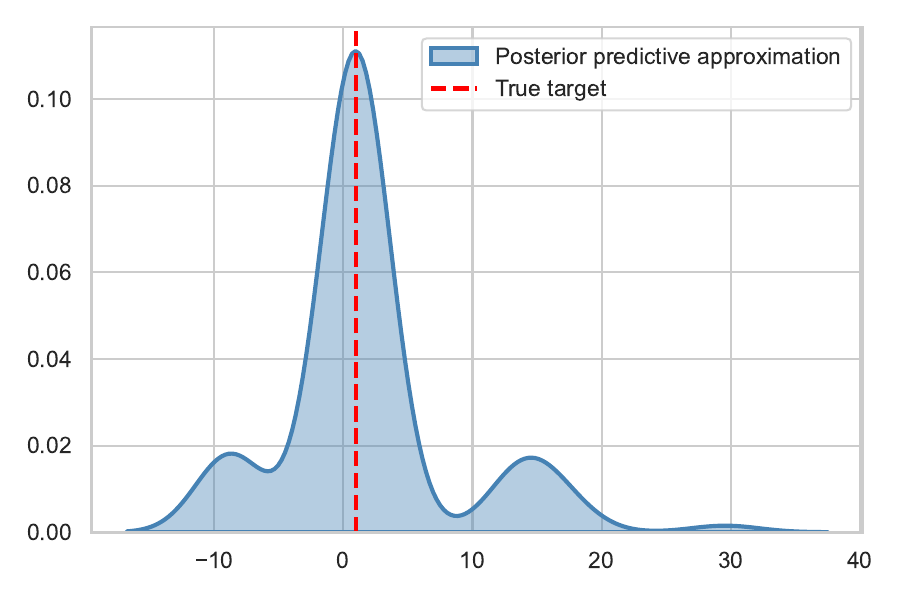} 
    \caption{\rebuttal{Our \method is easily compatible with \emph{any} standard-procedure for uncertainty quantification. Reported is the posterior predictive using dropout and kernel smoothing.}}
    \label{fig:ablation}
\end{figure}

\newpage

\subsection{Coefficient of Variation}\label{appendix:coefficient}

In the following, we additionally report the coefficient of variation of our main study on synthetic data in Section~\ref{sec:experiments}. Lower values in the coefficient of variation indicate more stable predictions. Table~\ref{tab:results_cancer_stability} shows the results. Clearly, our \method is superior to the baselines and has significantly more robust estimates of the CAPO.

\begin{table}[h!]
    \centering
     \begin{adjustbox}{max width=\textwidth}
    \begin{tabular}{lccccccccccc}
        \toprule
        
         & $\gamma=10$ & $\gamma=11$ & $\gamma=12$ & $\gamma=13$ & $\gamma=14$ & $\gamma=15$ & $\gamma=16$ & $\gamma=17$ & $\gamma=18$ & $\gamma=19$ & $\gamma=20$ \\
        \midrule

        CRN \citep{Bica.2020c} & $0.14$ & $0.31$ & $0.21$ & $0.30$ & $0.06$ & $0.20$ & $0.22$ & $0.15$ & $0.17$ & $0.19$ & $0.15$  \\

        TE-CDE \citep{Seedat.2022} & $0.13$ & $0.10$ & $0.03$ & $0.10$ & $0.09$ & $0.14$ & $0.10$ & $0.09$ & $0.12$ & $0.09$ & $0.10$ \\

        CT \citep{Melnychuk.2022} & $0.21$  & $0.21$ & $0.17$ & $0.18$ & $0.19$ & $0.17$ & $0.32$ & $0.18$ & $0.22$ & $0.17$ & $0.21$ \\
        
        RMSNs \citep{Lim.2018} & $\bm{0.06}$ & $0.05$ & $0.07$  & $0.07$ & $0.07$ & $0.09$ & $0.12$ & $0.13$ & $0.10$ & $0.12$ & $0.11$ \\

        G-Net \citep{Li.2021} & $0.11$ & $0.09$ & $0.08$ & $0.07$ & $0.07$ & $\bm{0.07}$ & $0.09$ & $0.10$ & $0.13$ & $0.13$ & $0.11$ \\
        
\midrule
        \textbf{\method}~(ours) & ${0.07}$ & $\bm{0.04}$ & $\bm{0.06}$ & $\bm{0.04}$ & $\bm{0.03}$ & ${0.09}$ & $\bm{0.07}$ & $\bm{0.07}$ & $\bm{0.09}$ & $\bm{0.06}$ & $\bm{0.07}$ \\
\bottomrule
    \end{tabular}
    \end{adjustbox}
    \vspace{-0.2cm}
    \caption{Coefficient of variation on synthetic data based on the tumor growth model with $\tau=2$. Lower values indicate more stable predictions. Our \method clearly outperforms the baselines.}
    \label{tab:results_cancer_stability}
    \vspace{-0.3cm}
\end{table}

\clearpage

\section{Details on the data generating processes}\label{appendix:data}
\subsection{Synthetic data}\label{appendix:synthetic_data}

We use data based on the pharmacokinetic-pharmacodynamic tumor growth model \citep{Geng.2017}, which is a standard dataset for benchmarking causal inference methods in the time-varying setting \citep{Bica.2020c, Li.2021, Lim.2018, Melnychuk.2022}. Here, the outcome $Y_t$ is the volume of a tumor that evolves according to the stochastic process 
\begin{align}
    Y_{t+1} =  \Big( 1+ \underbrace{\rho \log \left( \tfrac{K}{Y_t} \right)}_{\text{Tumor growth}} - \underbrace{\alpha_c c_t}_{\text{Chemotherapy}} - \underbrace{(\alpha_r d_t + \beta_r d_t^2)}_{\text{Radiotherapy}} + \underbrace{\epsilon_t}_{\text{Noise}}\Big) \, Y_t,
\end{align}
where $\alpha_c$, $\alpha_r$, and $\beta_r$ control the strength of chemo- and radiotherapy, respectively, and where $K$ corresponds to the carrying capacity, and where $\rho$ is the growth parameter. The radiation dosage $d_t$ and chemotherapy drug concentration $c_t$ are applied with probabilities 
\begin{align}\label{eq:confounding_cancer}
    A_t^c, A_t^r\sim \text{Ber}\left(\sigma\left(\tfrac{\gamma}{D_{\mathup{max}}}(\bar{D}_{15}(\bar{Y}_{t-1}-\bar{D}_{\mathup{max}}/2\right)\right),
\end{align}
where $D_{\mathup{max}}$ is the maximum tumor volume, $\bar{D}_{15}$ the average tumor diameter of the last $15$ time steps, and $\gamma$ controls the confounding strength. \textbf{We use the same parameterization as \citet{Melnychuk.2022}}, and refer to their work for more details. For training, validation, and testing, we sample $N=1000$ trajectories of lengths $T\leq 30$ each.

We are interested in the performance of our \method for increasing levels of confounding. We thus increase the confounding from $\gamma=10$ to $\gamma=20$. For each level of confounding, we fix an arbitrary intervention sequence and simulate the outcomes under this intervention for testing.

\subsection{Semi-synthetic data}\label{appendix:semi_synthetic_data}
We build upon the MIMIC-extract \citep{Wang.2020}, which is based on the MIMIC-III dataset \citep{Johnson.2016}. Here, we use $d_x=25$ different vital signs as time-varying covariates and as well as gender, ethnicity, and age as static covariates. Then, we simulate observational outcomes for training and validation, and interventional outcomes for testing, respectively. \textbf{Again, our data-generating process is taken from} \citep{Melnychuk.2022}, which we refer to for more details. In summary, the data generation consists of three steps:
(1)~$d_y=2$ untreated outcomes $\tilde{Y}_t^j$, $j=1,2$, are simulated according to 
\begin{align}
    \tilde{Y}_t^j = \alpha_s^j \text{B-spline}(t)+\alpha_g^j g^j(t) +\alpha_f^j f_Y^j(X_t)+ \epsilon_t,
\end{align}
where $\alpha_s^j$, $\alpha_g^j$ and $\alpha_f^j$ are weight parameters, B-spline$(t)$ is sampled from a mixture of three different cubic splines, and $f_Y^j(\cdot)$ is a random Fourier features approximation of a Gaussian process. (2)~A total of $d_a=3$ synthetic treatments $A_t^l$, $l=1,2,3$, are applied with probability 
\begin{align}
    A_t^l \sim \text{Ber}(p_t^l), \quad p_t^l = \sigma\left(\gamma_Y^l Y_{t-1}^{A,l} + \gamma_X^l f_Y^l(X_t)+b^l\right)
\end{align}
where $\gamma_Y^l$ and $\gamma_X^l$ are fixed parameters that control the confounding strength for treatment $A^l$, $Y_t^{A,l}$ is an averaged subset of the previous $l$ treated outcomes, $b^l$ is a bias term, and $f_Y^l(\cdot)$ is a random function that is sampled from an RFF (random Fourier features) approximation of a Gaussian process. (3)~The treatments are applied to the untreated outcomes via
\begin{align}
    \small{Y_t^j = \tilde{Y}_t^j + \sum_{i=t-\omega^l}^t \frac{\text{min}_{l=1,\ldots,d_a} \mathbbm{1}_{\{A_i^l=1\}}  p_i^l\beta^{l,j}}{(\omega^l-i)^2},}
\end{align}
where $\omega^l$ is the effect window for treatment $A^l$ and $\beta^{l,j}$ controls the maximum effect of treatment $A^l$.

We run different experiments for training, testing, and validation sizes of $N=1000$, $N=2000$, and $N=3000$, respectively, and set the time window to $30\leq T \leq 50$. As the covariate space is high-dimensional, we thereby study how robust our \method is with respect to estimation variance. We further increase the prediction windows from $\tau=2$ up to $\tau=6$.

\clearpage

\section{Architecture of \methodlong}\label{appendix:transformer}
In the following, we provide details on the architecture of our \method.

\textbf{Multi-input transformer:}
The multi-input transformer as the backbone of our \method is motivated by \citep{Melnychuk.2022}, which develops an architecture that is tailored for the types of data that are typically available in medical scenarios: (i)~outcomes $\bar{Y_t}\in \mathbb{R}^{d_y\times t}$, covariates $\bar{X_t}\in \mathbb{R}^{d_x\times t}$, and treatments $\bar{A_t}\in \{0,1\}^{d_a\times t}$. In particular, their proposed transformer model consists of three separate sub-transformers, where each sub-transformer performs \emph{multi-headed self-attention mechanisms} on one particular data input. Further, these sub-transformers are connected with each other through \emph{in-between cross-attention mechanisms}, ensuring that information is exchanged. Therefore, we build on this idea as the backbone of our \method, as we detail below.

Our multi-input transformer $z^\phi(\cdot)$ consists of three sub-transformer models $z^{\phi k}(\cdot)$, $k=1,2,3$, where $z^{\phi k}(\cdot)$ focuses on one data input $\bar{U}_t^k \in \{\bar{Y}_t,\bar{X}_t,\bar{A}_{t-1}\}$, $k\in\{1,2,3\}$, respectively. 

\underline{(1)~Input transformations:}
First, the data $\bar{U}_t^k\in\mathbb{R}^{d_{k}\times t}$ is linearly transformed through
\begin{align}
    Z_t^{k,0}  = (\bar{U}_t^k)^\top W^{k,0}  + b^{k,0} \in \mathbb{R}^{t\times d_h}
\end{align}
where $W^{k,0} \in \mathbb{R}^{d_k \times d_h}$ and $b^{k,0}\in \mathbb{R}^{d_h}$ are the weight matrix and the bias, respectively, and $d_h$ is the number of transformer units. 

\underline{(2)~Transformer blocks:}
Next, we stack $j=1,\ldots,J$ transformer blocks, where each transformer block $j$ receives the outputs $Z_t^{k,j-1}$ of the previous transformer block $j-1$. For this, we combine (i)~\emph{multi-headed self- and cross-attentions}, and (ii)~\emph{feed-forward networks}.

(i)~\emph{Multi-headed self- and cross-attentions:}
The output of block $j$ for sub-transformer $k$ is given by the \emph{multi-headed cross-attention}
\begin{align}
    Z_t^{k,j} = \tilde{Q}_t^{k,j}+ \sum_{l\neq k} {\textrm{MHA}}(\tilde{Q}_t^{k,j},\tilde{K}_t^{l,j},\tilde{V}_t^{l,j}),
\end{align}
where $\tilde{Q}_t^{k,j}=\tilde{K}_t^{k,j}=\tilde{V}_t^{k,j}$ are the outputs of the \emph{multi-headed self-attentions}
\begin{align}
    \tilde{Q}_t^{k,j} = Z_t^{k,j-1}+\textrm{MHA}({Q}_t^{k,j},{K}_t^{k,j},{V}_t^{k,j}).
\end{align}
Here, $\textrm{MHA}(\cdot)$ denotes the multi-headed attention mechanism as in \citep{Vaswani.2017} given by
\begin{align}
    \textrm{MHA}({q},{k}, v) = (\textrm{Attention}(q^1,k^1,v^1), \dots, \textrm{Attention}(q^{M},k^{M},v^{M})),
\end{align}
where 
\begin{align}
    \textrm{Attention}(q^m,k^m,v^m)=\textrm{softmax}\left(\frac{q^{m}(k^{m})^\top}{\sqrt{d_{qkv}}}\right)v^{m}
\end{align}
is the attention mechanism for $m=1,\ldots, M$ attention heads. The queries, keys, and values $q^{m},k^{m},v^{m}\in \mathbb{R}^{t\times d_{qkv}}$ have dimension $d_{qkv}$, which is equal to the hidden size $d_h$ divided by the number of attention heads $M$, that is, $d_{qkv}=d_h / M$. For this, we compute the queries, keys, and values for the \emph{cross-attentions} as
\begin{align}
    \tilde{Q}_t^{k,m,j}  = \tilde{Q}_t^{k,j} \tilde{W}^{k,m,j} + \tilde{b}^{k,m,j} \in \mathbb{R}^{t\times d_{qkv}},&&\\
    \tilde{K}_t^{k,m,j}  = \tilde{K}_t^{k,j} \tilde{W}^{k,m,j}  + \tilde{b}^{k,m,j} \in \mathbb{R}^{t\times d_{qkv}},&&\\
    \tilde{V}_t^{k,m,j}  = \tilde{V}_t^{k,j} \tilde{W}^{k,m,j}  + \tilde{b}^{k,m,j} \in \mathbb{R}^{t\times d_{qkv}},&&
\end{align}
and for the \emph{self-attentions} as
\begin{align}
    {Q}_t^{k,m,j}  = {Z}_t^{k,j-1} {W}^{k,m,j} + {b}^{k,m,j} \in \mathbb{R}^{t\times d_{qkv}},&&\\
    {K}_t^{k,m,j}  = Z_t^{k,j-1} {W}^{k,m,j}  + {b}^{k,m,j} \in \mathbb{R}^{t\times d_{qkv}},&&\\
    {V}_t^{k,m,j}  = Z_t^{k,j-1} {W}^{k,m,j}  + {b}^{k,m,j} \in \mathbb{R}^{t\times d_{qkv}}.&&
\end{align}
where $\tilde{W}^{k,m,j},W^{k,m,j}\in \mathbb{R}^{d_h \times d_{qkv}}$ and $\tilde{b}^{k,m,j},\tilde{b}^{k,m,j}\in \mathbb{R}^{d_qkv}$ are the trainable weights and biases for sub-transformers $k=1,2,3$, transformer blocks $j=1,\ldots,J$, and attention heads $m=1,\ldots,M$. Of note, each \emph{self- and cross attention} uses relative positional encodings \citep{Shaw.2018} to preserve the order of the input sequence as in \citep{Melnychuk.2022}.

(ii)~\emph{Feed-forward networks:} After the \emph{multi-headed cross-attention} mechanism, our \method applies a feed-forward neural network on each $Z_t^{k,j}$, respectively. Further, we apply dropout and layer normalizations \citep{Ba.2016} as in \citep{Melnychuk.2022,Vaswani.2017}. That is, our \method transforms the output $Z_t^{k,j}$ for transformer block $j$ of sub-transformer $k$ through a sequence of transformations 
\begin{align}
    \textrm{FF}^{k,j}(Z_t^{k,j})= \textrm{LayerNorm}\circ\textrm{Dropout}\circ\textrm{Linear}\circ\textrm{Dropout}\circ\textrm{ReLU}\circ\textrm{Linear}(Z_t^{k,j}).
\end{align}

\underline{(3)~Output transformation:} Finally, after transformer block $J$, we apply a final transformation with dropout and average the outputs as
\begin{align}
    Z_{t}^{\bar{a}} =  \textrm{ELU}\circ\textrm{Linear}\circ\textrm{Dropout} ( \frac{1}{3}\sum_{k=1}^3 Z_t^{k,J}),
\end{align}
such that $Z_t^{\bar{a}} \in \mathbb{R}^{d_z}$

\textbf{G-computation heads:} The \emph{G-computation heads} $\{g^\phi_\delta(\cdot)\}_{\delta=0}^{\tau-1}$ receive the corresponding hidden state $Z_{t+\delta}^{\bar{A}}$ and the current treatment $A_{t+\delta}$ and transform it with another feed-forward network through
\begin{align}
    g^\phi_\delta(Z_{t+\delta}^{\bar{A}}, A_{t+\delta}) = \textrm{Linear}\circ\textrm{ELU}\circ\textrm{Linear}(Z_{t+\delta}^{\bar{A}}, A_{t+\delta}).
\end{align}

\clearpage
\section{Implementation details}\label{appendix:hparams}
In Supplements~\ref{appendix:hparams_cancer}~and~\ref{appendix:hparams_semisynth}, we report details on the hyperparameter tuning. Here, we ensure that the total number of weights is comparable for each method and choose the grids accordingly. All methods are tuned on the validation datasets. As the validation sets only consist of \emph{observational data} instead of interventional data, we tune all methods for $\tau=1$-step ahead predictions as in \citep{Melnychuk.2022}. All methods were optimized with Adam \citep{Kingma.2015}. Further, we perform a random grid search as in \citep{Melnychuk.2022}.

On average, training our \method on fully synthetic data took $13.7$ minutes. Further, training on semi-synthetic data with $N=1000 / 2000 / 3000$ samples took $1.1/2.1/3.0$ hours. This is comparable to the baselines. All methods were trained on {$1\times$ NVIDIA A100-PCIE-40GB. Overall, running our experiments took approximately $7$ days (including hyperparameter tuning).}

\subsection{\rebuttal{Computational complexity}}\label{sec:computational_complexity}

\rebuttal{Let $N$ be the number of units, $t$ the observed window and $\tau$ the prediction horizon, $d$ the covariate dimension, $H$ the hidden size of the backbone, $L$ the number of backbone layers, and $K$ the number of Monte-Carlo samples used by G-Net \citep{Li.2021} and G-transformer \citep{Xiong.2024}. We denote by $C_{\text{backbone}}(t,\tau)$ the cost of a single forward–backward pass of the sequence backbone over a prediction horizon $\tau$. For example, for an transformer or LSTM (for simplicity, assuming the same constant) this scales as
\begin{align}
C_{\text{backbone}}(t,\tau) = O(\tau L (dH + H^{2})),    
\end{align}
(e.g., for a transformer it would include the usual $t$-dependent attention term). In both cases, the dependence on $\tau$ is contained inside $C_{\text{backbone}}(\tau)$.}

\rebuttal{Our method performs exactly one such backbone pass per unit, with a lightweight regression head on top. Thus, the total per-epoch cost is
\begin{align}
C_{\text{IGC}} = O(N \, C_{\text{backbone}}(t,\tau)),
\end{align}
with no additional simulation or sampling loop.}

\rebuttal{G-Net and G-transformer share this backbone cost but \textbf{add} a Monte-Carlo simulation stage: for each unit and each time step they generate $K$ synthetic covariate updates using hold-out residuals. Each such update is $O(d)$; repeated for all time steps and all samples, this contributes
\begin{align}
C_{\text{MC}} = O(N \tau K d).
\end{align}
This term is additive because the Monte-Carlo simulation is an extra phase on top of the backbone training. Within this term, $K$ is \textbf{multiplicative} with $N$, $\tau$, and $d$: for every unit $N$ and every time step $\tau$, they perform $K$ residual draws, each touching all $d$ covariates. The total G-Net and G-transformer cost is therefore
\begin{align}
C_{\text{G-Net/G-transformer}} = O(N \, C_{\text{backbone}}(t,\tau) + N \tau K d),
\end{align}
which is strictly heavier than our regression-only \method whenever $K > 1$.}

\rebuttal{RMSNs have similar $O(\tau L (dH + H^{2}))$ order per network but must train both an outcome and a propensity model, effectively doubling the backbone term.}

\clearpage
\subsection{Hyperparameter tuning: Synthetic data}\label{appendix:hparams_cancer}
\begin{table*}[h!]
    
        \footnotesize
            \begin{adjustbox}{width=0.6\columnwidth,center}
                \begin{tabular}{l|l|l|l}
                    \toprule
                    Method & Component & Hyperparameter & Tuning range \\
                    \midrule
                    
                    \multirow{16}{*}{CRN \citep{Bica.2020c}} & \multirow{8}{*}{\begin{tabular}{l} Encoder \end{tabular}} 
                       & LSTM layers ($J$) & 1 \\
                       && Learning rate ($\eta$) & {0.01, 0.001, 0.0001}\\
                       && Minibatch size & {64, 128, 256} \\
                       && LSTM hidden units ($d_h$) & 0.5$d_{yxa}$, 1$d_{yxa}$, 2$d_{yxa}$, 3$d_{yxa}$, 4$d_{yxa}$ \\
                       && Balanced representation size ($d_z$) & 0.5$d_{yxa}$, 1$d_{yxa}$, 2$d_{yxa}$, 3$d_{yxa}$, 4$d_{yxa}$ \\
                       && FC hidden units ($n_{\text{FC}}$) & 0.5$d_z$, 1$d_z$, 2$d_z$, 3$d_z$, 4$d_z$ \\
                       && LSTM dropout rate ($p$) & {0.1, 0.2} \\
                       && Number of epochs ($n_e$) &  50  \\
                    \cmidrule{2-4}
                    & \multirow{8}{*}{\begin{tabular}{l} Decoder \end{tabular}} 
                       & LSTM layers ($J$) & 1 \\
                       && Learning rate ($\eta$) & {0.01, 0.001, 0.0001} \\
                       && Minibatch size & {256, 512, 1024} \\
                       && LSTM hidden units ($d_h$) & {Balanced representation size of encoder} \\
                       && Balanced representation size ($d_z$) & 0.5$d_{yxa}$, 1$d_{yxa}$, 2$d_{yxa}$, 3$d_{yxa}$, 4$d_{yxa}$ \\
                       && FC hidden units ($n_{\text{FF}}$) & 0.5$d_z$, 1$d_z$, 2$d_z$, 3$d_z$, 4$d_z$  \\
                       && LSTM dropout rate ($p$) & {0.1, 0.2} \\
                       && Number of epochs ($n_e$) &  50 \\
                    \midrule

                    \multirow{16}{*}{TE-CDE \citep{Seedat.2022}} & \multirow{8}{*}{\begin{tabular}{l} Encoder \end{tabular}} 
                       & Neural CDE \citep{Kidger.2020} hidden layers ($J$) & 1 \\
                       && Learning rate ($\eta$) & {0.01, 0.001, 0.0001}\\
                       && Minibatch size & {64, 128, 256} \\
                       && Neural CDE hidden units ($d_h$) & 0.5$d_{yxa}$, 1$d_{yxa}$, 2$d_{yxa}$, 3$d_{yxa}$, 4$d_{yxa}$ \\
                       && Balanced representation size ($d_z$) & 0.5$d_{yxa}$, 1$d_{yxa}$, 2$d_{yxa}$, 3$d_{yxa}$, 4$d_{yxa}$ \\
                       && Feed-forward hidden units ($n_{\text{FF}}$) & 0.5$d_z$, 1$d_z$, 2$d_z$, 3$d_z$, 4$d_z$ \\
                       && Neural CDE dropout rate ($p$) & {0.1, 0.2} \\
                       && Number of epochs ($n_e$) &  50  \\
                    \cmidrule{2-4}
                    & \multirow{8}{*}{\begin{tabular}{l} Decoder \end{tabular}} 
                       & Neural CDE hidden layers ($J$) & 1 \\
                       && Learning rate ($\eta$) & {0.01, 0.001, 0.0001} \\
                       && Minibatch size & {256, 512, 1024} \\
                       && Neural CDE hidden units ($d_h$) & {Balanced representation size of encoder} \\
                       && Balanced representation size ($d_z$) & 0.5$d_{yxa}$, 1$d_{yxa}$, 2$d_{yxa}$, 3$d_{yxa}$, 4$d_{yxa}$ \\
                       && Feed-forward hidden units ($n_{\text{FF}}$) & 0.5$d_z$, 1$d_z$, 2$d_z$, 3$d_z$, 4$d_z$  \\
                       && Neural CDE dropout rate ($p$) & {0.1, 0.2} \\
                       && Number of epochs ($n_e$) &  50 \\
                    \midrule

                    \multirow{10}{*}{CT \citep{Melnychuk.2022}} & \multirow{10}{*}{\begin{tabular}{l} (end-to-end) \end{tabular}}
                        & Transformer blocks ($J$) & {1,2} \\
                        && Learning rate ($\eta$) & {0.01, 0.001, 0.0001} \\
                        && Minibatch size & 64, 128, 256 \\
                        && Attention heads ($n_h$) & 1 \\
                        && Transformer units ($d_h$) & 1$d_{yxa}$, 2$d_{yxa}$, 3$d_{yxa}$, 4$d_{yxa}$ \\
                        && Balanced representation size ($d_z$) & 0.5$d_{yxa}$, 1$d_{yxa}$, 2$d_{yxa}$, 3$d_{yxa}$, 4$d_{yxa}$ \\
                        && Feed-forward hidden units ($n_{\text{FF}}$) & 0.5$d_z$, 1$d_z$, 2$d_z$, 3$d_z$, 4$d_z$ \\
                        && Sequential dropout rate ($p$) & {0.1, 0.2} \\
                        && Max positional encoding ($l_{\text{max}}$) & 15 \\
                        && Number of epochs ($n_e$) & 50 \\
                    \midrule
                    
                    \multirow{21}{*}{RMSNs \citep{Lim.2018}} 
                    & \multirow{7}{*}{\begin{tabular}{l} Propensity \\ treatment \\ network \end{tabular}} 
                       & LSTM layers ($J$) & 1 \\
                       && Learning rate ($\eta$) & {0.01, 0.001, 0.0001} \\
                       && Minibatch size & {64, 128, 256} \\
                       && LSTM hidden units ($d_h$) & 0.5$d_{yxa}$, 1$d_{yxa}$, 2$d_{yxa}$, 3$d_{yxa}$, 4$d_{yxa}$\\
                       && LSTM dropout rate ($p$) &  {0.1, 0.2} \\
                       && Max gradient norm &  {0.5, 1.0, 2.0} \\
                       && Number of epochs ($n_e$) &  50 \\
                    \cmidrule{2-4}
                    & \multirow{7}{*}{\begin{tabular}{l} Propensity \\ history \\ network \\ \midrule  Encoder \end{tabular}} 
                       & LSTM layers ($J$) & 1 \\
                       && Learning rate ($\eta$) & {0.01, 0.001, 0.0001} \\
                       && Minibatch size & {64, 128, 256} \\
                       && LSTM hidden units ($d_h$) & 0.5$d_{yxa}$, 1$d_{yxa}$, 2$d_{yxa}$, 3$d_{yxa}$, 4$d_{yxa}$ \\
                       && LSTM dropout rate ($p$) & {0.1, 0.2} \\
                       && Max gradient norm & {0.5, 1.0, 2.0} \\
                       && Number of epochs ($n_e$) &  50 \\
                    \cmidrule{2-4}
                    & \multirow{7}{*}{\begin{tabular}{l} Decoder \end{tabular}} 
                       & LSTM layers ($J$) & 1 \\
                       && Learning rate ($\eta$) & {0.01, 0.001, 0.0001} \\
                       && Minibatch size & {256, 512, 1024} \\
                       && LSTM hidden units ($d_h$) & 1$d_{yxa}$, 2$d_{yxa}$, 4$d_{yxa}$, 8$d_{yxa}$, 16$d_{yxa}$\\
                       && LSTM dropout rate ($p$) & {0.1, 0.2} \\
                       && Max gradient norm & {0.5, 1.0, 2.0, 4.0} \\
                       && Number of epochs ($n_e$) & 50 \\
                    \midrule

                    \multirow{8}{*}{G-Net \citep{Li.2021}} & \multirow{8}{*}{\begin{tabular}{l} (end-to-end) \end{tabular}}
                        & LSTM layers ($J$) & 1 \\
                        && Learning rate ($\eta$) & {0.01, 0.001, 0.0001} \\
                        && Minibatch size & {64, 128, 256} \\
                        && LSTM hidden units ($d_h$) & 0.5$d_{yxa}$, 1$d_{yxa}$, 2$d_{yxa}$, 3$d_{yxa}$, 4$d_{yxa}$ \\
                        && LSTM output size ($d_z$) & 0.5$d_{yxa}$, 1$d_{yxa}$, 2$d_{yxa}$, 3$d_{yxa}$, 4$d_{yxa}$ \\
                        && Feed-forward hidden units ($n_{\text{FF}}$) & 0.5$d_z$, 1$d_z$, 2$d_z$, 3$d_z$, 4$d_z$ \\
                        && LSTM dropout rate ($p$) & {0.1, 0.2} \\
                        && Number of epochs ($n_e$) & 50 \\
                    \midrule
                    
                    \multirow{10}{*}{\method~(ours)} & \multirow{10}{*}{\begin{tabular}{l} (end-to-end) \end{tabular}}
                        & Transformer blocks ($J$) & {1,2} \\
                        && Learning rate ($\eta$) & {0.01, 0.001, 0.0001} \\
                        && Minibatch size & 64, 128, 256 \\
                        && Attention heads ($n_h$) & 1 \\
                        && Transformer units ($d_h$) & 1$d_{yxa}$, 2$d_{yxa}$, 3$d_{yxa}$, 4$d_{yxa}$ \\
                        && Hidden representation size ($d_z$) & 0.5$d_{yxa}$, 1$d_{yxa}$, 2$d_{yxa}$, 3$d_{yxa}$, 4$d_{yxa}$ \\
                        && Feed-forward hidden units ($n_{\text{FF}}$) & 0.5$d_z$, 1$d_z$, 2$d_z$, 3$d_z$, 4$d_z$ \\
                        && Sequential dropout rate ($p$) & {0.1, 0.2} \\
                        && Max positional encoding ($l_{\text{max}}$) & 15 \\
                        && Number of epochs ($n_e$) & 50 \\
                    
                    \bottomrule
                \end{tabular}
                \end{adjustbox}
    \caption{Hyperparameter tuning for all methods on fully synthetic tumor growth data. Here, ${d_{yxa}=d_y+d_x+d_a}$ is the overall input size. Further, $d_z$ denotes the hidden representation size of our \method, the balanced representation size of CRN \citep{Bica.2020c}, TE-CDE \citep{Seedat.2022} and CT \citep{Melnychuk.2022}, and the LSTM \citep{Hochreiter.1997} output size of G-Net \citep{Li.2021}. The hyperparameter grid follows \citep{Melnychuk.2022}. Importantly, the tuning ranges for the different methods are comparable. Hence, the comparison of the methods in Section~\ref{sec:experiments} is fair.
    }
    \label{tab:hparams_cancer}
\end{table*}
\clearpage
\subsection{Hyperparameter tuning: Semi-synthetic data}\label{appendix:hparams_semisynth}
\begin{table*}[h!]
    \vspace{-0.4cm}
        \footnotesize
            \begin{adjustbox}{width=0.6\columnwidth,center}
                \begin{tabular}{l|l|l|l}
                    \toprule
                    Method & Component & Hyperparameter & Tuning range \\
                    \midrule
                    
                    \multirow{16}{*}{CRN \citep{Bica.2020c}} & \multirow{8}{*}{\begin{tabular}{l} Encoder \end{tabular}} 
                       & LSTM layers ($J$) & {1,2} \\
                       && Learning rate ($\eta$) & {0.01, 0.001, 0.0001}\\
                       && Minibatch size & {64, 128, 256} \\
                       && LSTM hidden units ($d_h$) & 0.5$d_{yxa}$, 1$d_{yxa}$, 2$d_{yxa}$ \\
                       && Balanced representation size ($d_z$) & 0.5$d_{yxa}$, 1$d_{yxa}$, 2$d_{yxa}$, \\
                       && FF hidden units ($n_{\text{FF}}$) & 0.5$d_z$, 1$d_z$, 2$d_z$ \\
                       && LSTM dropout rate ($p$) & {0.1, 0.2} \\
                       && Number of epochs ($n_e$) &  100  \\
                    \cmidrule{2-4}
                    & \multirow{8}{*}{\begin{tabular}{l} Decoder \end{tabular}} 
                       & LSTM layers ($J$) & {1,2} \\
                       && Learning rate ($\eta$) & {0.01, 0.001, 0.0001} \\
                       && Minibatch size & {256, 512, 1024} \\
                       && LSTM hidden units ($d_h$) & {Balanced representation size of encoder} \\
                       && Balanced representation size ($d_z$) & 0.5$d_{yxa}$, 1$d_{yxa}$, 2$d_{yxa}$ \\
                       && FC hidden units ($n_{\text{FF}}$) & 0.5$d_z$, 1$d_z$, 2$d_z$  \\
                       && LSTM dropout rate ($p$) & {0.1, 0.2} \\
                       && Number of epochs ($n_e$) &  100 \\
                    \midrule

                    \multirow{16}{*}{TE-CDE \citep{Seedat.2022}} & \multirow{8}{*}{\begin{tabular}{l} Encoder \end{tabular}} 
                       & Neural CDE hidden layers ($J$) & 1 \\
                       && Learning rate ($\eta$) & {0.01, 0.001, 0.0001}\\
                       && Minibatch size & {64, 128, 256} \\
                       && LSTM hidden units ($d_h$) & 0.5$d_{yxa}$, 1$d_{yxa}$, 2$d_{yxa}$ \\
                       && Balanced representation size ($d_z$) & 0.5$d_{yxa}$, 1$d_{yxa}$, 2$d_{yxa}$ \\
                       && Feed-forward hidden units ($n_{\text{FF}}$) & 0.5$d_z$, 1$d_z$, 2$d_z$ \\
                       && Dropout rate ($p$) & {0.1, 0.2} \\
                       && Number of epochs ($n_e$) &  100  \\
                    \cmidrule{2-4}
                    & \multirow{8}{*}{\begin{tabular}{l} Decoder \end{tabular}} 
                       & Neural CDE hidden layers ($J$) & 1 \\
                       && Learning rate ($\eta$) & {0.01, 0.001, 0.0001} \\
                       && Minibatch size & {256, 512, 1024} \\
                       && LSTM hidden units ($d_h$) & {Balanced representation size of encoder} \\
                       && Balanced representation size ($d_z$) & 0.5$d_{yxa}$, 1$d_{yxa}$, 2$d_{yxa}$\\
                       && Feed-forward hidden units ($n_{\text{FF}}$) & 0.5$d_z$, 1$d_z$, 2$d_z$\\
                       && LSTM dropout rate ($p$) & {0.1, 0.2} \\
                       && Number of epochs ($n_e$) &  100 \\
                    \midrule

                    \multirow{10}{*}{CT \citep{Melnychuk.2022}} & \multirow{10}{*}{\begin{tabular}{l} (end-to-end) \end{tabular}}
                        & Transformer blocks ($J$) & {1,2} \\
                        && Learning rate ($\eta$) & {0.01, 0.001, 0.0001} \\
                        && Minibatch size & {32, 64} \\
                        && Attention heads ($n_h$) & {2,3} \\
                        && Transformer units ($d_h$) & 1$d_{yxa}$, 2$d_{yxa}$ \\
                        && Balanced representation size ($d_z$) & 0.5$d_{yxa}$, 1$d_{yxa}$, 2$d_{yxa}$ \\
                        && Feed-forward hidden units ($n_{\text{FF}}$) & 0.5$d_z$, 1$d_z$, 2$d_z$ \\
                        && Sequential dropout rate ($p$) & {0.1, 0.2} \\
                        && Max positional encoding ($l_{\text{max}}$) & 30 \\
                        && Number of epochs ($n_e$) & 100 \\
                    \midrule
                    
                    \multirow{21}{*}{RMSNs \citep{Lim.2018}} 
                    & \multirow{7}{*}{\begin{tabular}{l} Propensity \\ treatment \\ network \end{tabular}} 
                       & LSTM layers ($J$) & {1,2} \\
                       && Learning rate ($\eta$) & {0.01, 0.001, 0.0001} \\
                       && Minibatch size & {64, 128, 256} \\
                       && LSTM hidden units ($d_h$) & 0.5$d_{yxa}$, 1$d_{yxa}$, 2$d_{yxa}$\\
                       && LSTM dropout rate ($p$) &  {0.1, 0.2} \\
                       && Max gradient norm &  {0.5, 1.0, 2.0} \\
                       && Number of epochs ($n_e$) &  100 \\
                    \cmidrule{2-4}
                    & \multirow{7}{*}{\begin{tabular}{l} Propensity \\ history \\ network \\ \midrule  Encoder \end{tabular}} 
                       & LSTM layers ($J$) & 1 \\
                       && Learning rate ($\eta$) & {0.01, 0.001, 0.0001} \\
                       && Minibatch size & {64, 128, 256} \\
                       && LSTM hidden units ($d_h$) & 0.5$d_{yxa}$, 1$d_{yxa}$, 2$d_{yxa}$\\
                       && LSTM dropout rate ($p$) & {0.1, 0.2} \\
                       && Max gradient norm & {0.5, 1.0, 2.0} \\
                       && Number of epochs ($n_e$) &  100 \\
                    \cmidrule{2-4}
                    & \multirow{7}{*}{\begin{tabular}{l} Decoder \end{tabular}} 
                       & LSTM layers ($J$) & 1 \\
                       && Learning rate ($\eta$) & {0.01, 0.001, 0.0001} \\
                       && Minibatch size & {256, 512, 1024} \\
                       && LSTM hidden units ($d_h$) & 1$d_{yxa}$, 2$d_{yxa}$, 4$d_{yxa}$\\
                       && LSTM dropout rate ($p$) & {0.1, 0.2} \\
                       && Max gradient norm & {0.5, 1.0, 2.0, 4.0} \\
                       && Number of epochs ($n_e$) & 100 \\
                    \midrule

                    \multirow{8}{*}{G-Net \citep{Li.2021}} & \multirow{8}{*}{\begin{tabular}{l} (end-to-end) \end{tabular}}
                        & LSTM layers ($J$) & {1,2} \\
                        && Learning rate ($\eta$) & {0.01, 0.001, 0.0001} \\
                        && Minibatch size & {64, 128, 256} \\
                        && LSTM hidden units ($d_h$) & 0.5$d_{yxa}$, 1$d_{yxa}$, 2$d_{yxa}$\\
                        && LSTM output size ($d_z$) & 0.5$d_{yxa}$, 1$d_{yxa}$, 2$d_{yxa}$\\
                        && Feed-forward hidden units ($n_{\text{FF}}$) & 0.5$d_z$, 1$d_z$, 2$d_z$\\
                        && LSTM dropout rate ($p$) & {0.1, 0.2} \\
                        && Number of epochs ($n_e$) & 100 \\
                    \midrule
                    
                    \multirow{10}{*}{\method~(ours)} & \multirow{10}{*}{\begin{tabular}{l} (end-to-end) \end{tabular}}
                        & Transformer blocks ($J$) & {1} \\
                        && Learning rate ($\eta$) & {0.001, 0.0001} \\
                        && Minibatch size & 32, 64 \\
                        && Attention heads ($n_h$) & {2,3} \\
                        && Transformer units ($d_h$) & 1$d_{yxa}$, 2$d_{yxa}$ \\
                        && Balanced representation size ($d_z$) & 0.5$d_{yxa}$, 1$d_{yxa}$, 2$d_{yxa}$ \\
                        && Feed-forward hidden units ($n_{\text{FF}}$) & 0.5$d_z$, 1$d_z$, 2$d_z$ \\
                        && Sequential dropout rate ($p$) & {0.1, 0.2} \\
                        && Max positional encoding ($l_{\text{max}}$) & 30 \\
                        && Number of epochs ($n_e$) & 100 \\
                    
                    \bottomrule
                \end{tabular}
                \end{adjustbox}
    \vspace{-0.3cm}
    \caption{Hyperparameter tuning for all methods on semi-synthetic data. Here, ${d_{yxa}=d_y+d_x+d_a}$ is the overall input size. Further, $d_z$ is the hidden representation size of our \method, the balanced representation size of CRN \citep{Bica.2020c}, TE-CDE \citep{Seedat.2022} and CT \citep{Melnychuk.2022}, and the LSTM \citep{Hochreiter.1997} output size of G-Net \citep{Li.2021}. The hyperparameter grid follows \citep{Melnychuk.2022}. Importantly, the tuning ranges for the different methods are comparable. Hence, the comparison of the methods in Section~\ref{sec:experiments} is fair.
    }
    \label{tab:hparams_semisynth}
    \vspace{-1cm}
\end{table*}

\clearpage

\end{document}